\renewcommand{\cite}{\citep}
\newcounter{arxiv}
\newcommand{\tabemph}[1]{\cellcolor{gray!10}{#1}}%
\definecolor{mydarkblue}{rgb}{0,0.08,0.45}
\newcommand{\rev}[1]{#1}
\newlength{\fullwidth}
\newcommand{\pflam}{FedAlt\xspace}
\newcommand{\pflsu}{FedSim\xspace}
\newcommand{\lr}{\eta}
\newcommand{\hlsim}[1]{#1}
\newcommand{\hlalt}[1]{#1}
\newcommand{\myparagraph}[1]{\paragraph{#1.}\hspace{-0.8em}}  %
\renewcommand{\epsilon}{\varepsilon}
\definecolor{puorange}{rgb}{0.80,0.20,0}
\definecolor{bluegray}{rgb}{0.04,0,0.7}
\definecolor{greengray}{rgb}{0.05,0.50,0.15}
\definecolor{darkbrown}{rgb}{0.40,0.2,0.05}
\definecolor{darkcyan}{rgb}{0,0.4,1}
\definecolor{black}{rgb}{0,0,0}
\definecolor{grey}{rgb}{0.93,0.93,0.93}
\newcommand{\kp}[1]{{\color{greengray} {\textbf{Krishna}:} #1}}
\newcommand{\linx}[1]{{\color{red} {\textbf{Lin}:} #1}}
\newcommand{\maz}[1]{{\color{purple} {\textbf{Maziar}:} #1}}
\newcommand{\km}[1]{{\color{orange} {\textbf{Kshitiz}:} #1}}
\DeclareMathOperator*{\minimize}{minimize}
\DeclareMathOperator{\diag}{diag}
\newcommand{\E}{\mathbf{E}}
\newcommand{\R}{\mathbb{R}}
\newcommand{\cD}{\mathcal{D}}
\newcommand{\tF}{\widetilde{F}}
\newcommand{\tu}{\tilde{u}}
\newcommand{\tv}{\tilde{v}}
\newcommand{\tV}{\widetilde{V}}
\newcommand{\ut}{u^{(t)}}
\newcommand{\vt}{v^{(t)}}
\newcommand{\utp}{u^{(t+1)}}
\newcommand{\vit}{v_i^{(t)}}
\newcommand{\Vt}{V^{(t)}}
\newcommand{\zt}{z^{(t)}}
\newcommand{\St}{S^{(t)}}
\newcommand{\llangle}{\left\langle}
\newcommand{\rrangle}{\right\rangle}
\newcommand \Zcal {\mathcal Z}
\newcommand \Tcal {\mathcal T}
\newcommand \Fcal {\mathcal F}
\newcommand \Dcal {\mathcal D}
\newcommand \Scal {\mathcal S}
\newcommand \reals {\mathbb{R}}
\newcommand \prob {\mathbb{P}}
\newcommand \expect {\mathbf{E}}
\newcommand \Var {\mathbf{Var}}
\newcommand \pow [1]{^{(#1)}}
\DeclarePairedDelimiterX{\inp}[2]{\langle}{\rangle}{#1, #2} %
\DeclarePairedDelimiterX{\norm}[1]{\Vert}{\Vert}{#1} %
\DeclarePairedDelimiterX{\normsq}[1]{\Vert}{\Vert^2}{#1} %
\newcommand \eps \epsilon
\newcommand \op {\operatorname*{op}} %
\newcommand \grad {\nabla}
\newtheorem{theorem}{Theorem}
\newtheorem{lemma}[theorem]{Lemma}
 \newtheorem{claim}[theorem]{Claim}
\newtheorem{corollary}[theorem]{Corollary}
\newtheorem{remark}[theorem]{Remark}
\newtheorem{assumption}{Assumption}
\newenvironment{assumptionP}[1]
  {%
   \addtocounter{assumption}{-1}%
   \begin{assumption}}
  {\end{assumption}}
\newcommand\blfootnote[1]{%
  \begingroup
  \renewcommand\thefootnote{}\footnote{#1}%
  \addtocounter{footnote}{-1}%
  \endgroup
}
\title{Federated Learning with Partial Model Personalization}
\author{
Krishna Pillutla$^{1}$\quad 
Kshitiz Malik$^{2}$ \quad 
Abdelrahman Mohamed$^{2}$ \\ 
Michael Rabbat$^{2}$ \quad
Maziar Sanjabi$^{2}$ \quad 
Lin Xiao$^{2}$
\vspace{0.3cm}\\
  $^{1}$Paul G.\ Allen School of Computer Science \& Engineering, University of Washington \\
  $^{2}$Meta AI
}
\date{\vspace*{-2em}}
\begin{document}
\maketitle  %
\doparttoc %
\faketableofcontents %

\blfootnote{Published at the \textit{39\textsuperscript{th}International Conference on Machine Learning}, 
Baltimore, Maryland, USA, PMLR 162, 2022.}

\begin{abstract}
We consider two federated learning algorithms for training partially personalized models, where the shared and personal parameters are updated either simultaneously or alternately on the devices. Both algorithms have been proposed in the literature, but their convergence properties are not fully understood, especially for the alternating variant. We provide convergence analyses of both algorithms in the general nonconvex setting with partial participation and delineate the regime where one dominates the other. Our experiments on real-world image, text, and speech datasets demonstrate that (a) partial personalization can obtain most of the benefits of full model personalization with a small fraction of personal parameters, and, (b) the alternating update algorithm outperforms the simultaneous update algorithm \rev{by a small but consistent margin}. 
 \end{abstract}

\section{Introduction} \label{sec:intro}
Federated Learning \citep{mcmahan2017communication}
has emerged as a powerful paradigm for distributed and privacy-preserving machine learning 
\citep[see][and references therein]{kairouz2019flsurvey}.
We consider a typical setting of Federated Learning (FL) with~$n$ devices (also called clients), where each device~$i$ has a training dataset of $N_i$ samples 
$z_{i, 1}, \cdots, z_{i, N_i}$.
Let $w\in\R^d$ represent the parameters of a machine learning model and $f_i(w,z_{i,j})$ be the loss of the model on the training example $z_{i,j}$.
Then the loss function associated with device~$i$ is 
$F_i(w) = ({1}/{N_i})\sum_{j=1}^{N_i} f_i(w,z_{i,j})$.
A common objective of FL is to find model parameters that minimize the weighted average loss across all devices %
\begin{equation}\label{eqn:one-fits-all}
\minimize_{w} \quad \sum_{i=1}^n \alpha_i F_i(w),
\end{equation}
where the weights $\alpha_i>0$ satisfy $\sum_{i=1}^n \alpha_i=1$.
A common practice is to choose $\alpha_i=N_i/N$ where $N=\sum_{i=1}^n N_i$, which corresponds to minimizing the average loss across all samples:
$({1}/{N})\sum_{i=1}^n\sum_{j=1}^{N_i} f_i(w,z_{i,j})$.

The main motivation for minimizing the average loss over all devices is to leverage their collective statistical power for better generalization, because the amount of data on each device can be very limited. This is especially important for training modern deep learning models with large number of parameters.
However, this argument assumes that the datasets from different devices are sampled from the same, or at least very similar, distributions. 
Given the diverse characteristics of the users and increasing trend of personalized on-device services, such an i.i.d.\ assumption may not hold in practice. Thus, the one-model-fits-all formulation in~\eqref{eqn:one-fits-all} can be ineffective and undesirable.

Several approaches have been proposed for personalized FL, 
including ones based on
multi-task learning \citep{smith2017federated}, 
meta learning \citep{fallah2020personalized}, 
and proximal methods \citep{dinh2020moreau,Li2021ditto}. 
A simple formulation that captures their main idea is
\begin{equation}\label{eqn:pfl-full-model}
\minimize_{w_0, \{w_i\}_{i=1}^n} \quad \sum_{i=1}^n \alpha_i \Bigl(F_i(w_i)+\frac{\lambda_i}{2}\|w_i-w_0\|^2\Bigr),
\end{equation}
where $w_i$ for $i=1,\ldots,n$ are personalized model parameters at the devices, $w_0$ is a reference model,
and the $\lambda_i$'s are regularization weights that control the extent of personalization.
A major disadvantage of the formulation~\eqref{eqn:pfl-full-model}, which we call \emph{full model personalization}, is that it requires twice the memory footprint of the full model, $w_i$ and $w_0$ at each device, which severely limits the size of trainable models.

On the other hand, full model personalization may be unnecessary for 
modern deep learning models, which are composed of many simple functional units, typically organized into layers or a more general interconnected architecture.
Personalizing the ``right'' components, selected with domain knowledge, may lead to substantial benefits with only a small increase in memory footprint. 
In addition, partial model personalization can be less susceptible to ``catastrophic forgetting'' \citep{mccloskey1989catastrophic}, where a large model finetuned on a small local dataset forgets the original (non-personalized) task, leading to degraded test performance.

\begin{figure*}[t]
\centering
\begin{subfigure}[b]{0.33\textwidth}
\centering
\includegraphics[height=0.24\fullwidth]{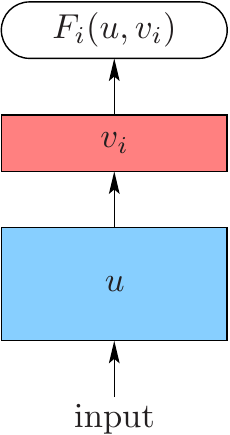}
\caption{\small{Personalized output layer(s).}}
\label{fig:model-output}
\end{subfigure}%
\begin{subfigure}[b]{0.33\textwidth}
\centering
\includegraphics[height=0.24\fullwidth]{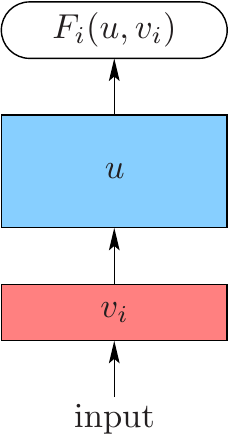}
\caption{\small{Personalized input layer(s).}}
\label{fig:model-input}
\end{subfigure}%
\begin{subfigure}[b]{0.33\textwidth}
\centering
\includegraphics[height=0.24\fullwidth]{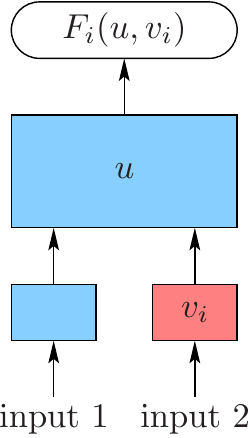}
\caption{\small{Personalized split input layer(s).}}
\label{fig:model-input2}
\end{subfigure}
\caption{\small{
Three simple examples of partitioning deep learning models.
}}
\label{fig:simple-models}
\end{figure*}

We consider a general setting of FL with \emph{partial model personalization}.
Specifically, we partition the model parameters into two groups: 
the \emph{shared} parameters $u\in\R^{d_0}$ and the \emph{personal}
parameters $v_i\in\R^{d_i}$ for $i=1,\ldots,n$. 
The full model on device~$i$ is 
denoted as $w_i=(u,v_i)$, and the local loss function is 
$F_i(u,v_i)=({1}/{N_i})\sum_{j=1}^{N_i} f_i\bigl((u,v_i),z_{i,j}\bigr)$.
Our goal is to solve the optimization problem
\begin{equation}\label{eqn:pfl-partial}
\minimize_{u,\, \{v_i\}_{i=1}^n} \quad \sum_{i=1}^n \alpha_i F_i(u, v_i).
\end{equation}
Notice that the dimensions of $v_i$ can be different across the devices, allowing the personalized components to have different number of parameters or even different architecture.
We investigate two FL algorithms for solving problem~\eqref{eqn:pfl-partial}: \emph{FedSim}, a simultaneous update algorithm and \emph{FedAlt}, an alternating update algorithm.
Both algorithms follow the standard FL protocol.
During each round, the server randomly selects a subset of the devices for update and broadcasts the current global version of the shared parameters to devices in the subset. 
Each selected device then performs one or more steps of (stochastic) gradient descent to update both the shared parameters and the personal parameters, and sends only the updated shared parameters to the server for aggregation. The updated personal parameters are kept locally at the device to serve as the initialization when the device is selected for another update.
In FedSim, the shared and personal parameters are updated simultaneously during each local iteration. 
In FedAlt, the devices
first update the personal parameters with the received shared parameters fixed and then update the shared parameters with the new personal parameters fixed.
We provide convergence analysis and empirical evaluation of both methods. 

\myparagraph{Contributions} 
Our main contributions are as follows.

\begin{itemize}[topsep=0pt, itemsep=1pt,leftmargin=\widthof{(a)}]%
\item We provide \emph{convergence guarantees} for the FedAlt and FedSim methods in the general (smooth) \emph{nonconvex setting with partial participation}.
While both methods have appeared in the literature previously, they are either used without convergence analysis or with results on limited settings (assuming convexity or full participation).
Our analysis focuses on the general nonconvex setting with partial participation, 
providing theoretical support for training modern deep learning models in practice.
The analysis of FedAlt with partial participation is especially challenging.
\rev{We decouple dependent random variables in FedAlt by introducing the technique of \emph{virtual full participation}}.
\item We conduct \textit{extensive experiments} on realistic image, text, and speech tasks, exploring different model personalization strategies for each task, and comparing with strong baselines. 
Our results demonstrate that partial model personalization can obtain most of the benefit of full model personalization with only a small fraction of personalized parameters, and that FedAlt outperforms FedSim \rev{by a small but consistent margin}.
\item 
Our experiments also reveal that personalization (full or partial) may lead to \textit{worse performance for some devices}, despite improving the average.
Typical forms of regularization such as weight decay and dropout do not mitigate this issue.
This phenomenon has been overlooked in previous work and calls for future research to improve both performance and fairness. 
\end{itemize}

It is our hope that the generality of our theory together with strong empirical study can provide valuable guidelines for training partially personalized models in practice.

\begin{figure*}[t]
\centering
\begin{subfigure}[b]{0.6\textwidth}
\centering
\includegraphics[height=0.32\fullwidth]{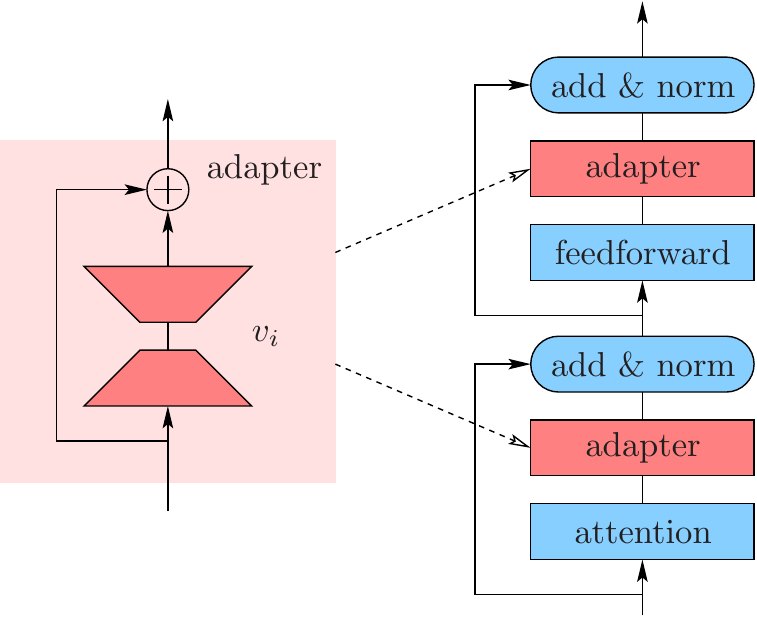}
\caption{Transformer layer with two adapters.}
\label{fig:model-adapter}
\end{subfigure}%
\begin{subfigure}[b]{0.4\textwidth}
\centering\qquad
\includegraphics[height=0.32\fullwidth]{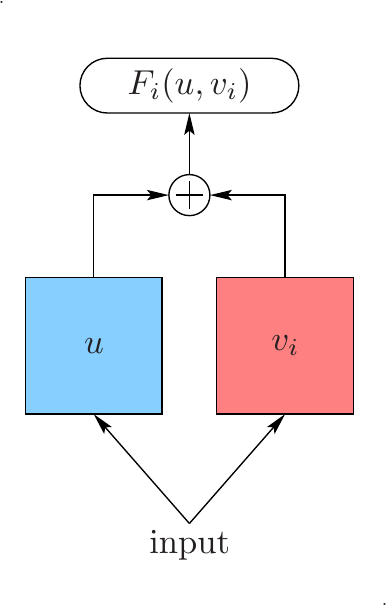}
\caption{Generalized additive model.}
\label{fig:model-additive}
\end{subfigure}%
\caption{\small{
More structured partial model personalization. 
(a) The adapter has a skip connection, thus it collapses to the identity mapping if $v_i=0$; in addition, it has a bottleneck in the middle 
\citep{houlsby2019parameter}.
(b) The generalized additive model can be further augmented with a shared input layer for representation learning.
}}
\label{fig:other-models}
\end{figure*}

\myparagraph{Related work}
\rev{
The ideas behind partial model personalization in federated learning can be traced back to seminal 
works on multi-task learning~\cite{caruana1997multitask,baxter2000model,collobert2008unified}.
These works advocate for learning a shared representation across various tasks. 
}
\rev{
These ideas were applied to the setting of federated learning by considering each client as a separate task by \citet{ghuhan2019federated} and \citet{collins2021expoiting}; see Figure~\ref{fig:model-output}. 
}
\citet{liang2020think} instead propose to personalize the input layers to learn a personalized representation (Figure~\ref{fig:model-input}).

\rev{Both optimization algorithms --- \pflsu and \pflam  ---} have appeared in the literature previously, but the scope of their convergence analyses is limited. 
Specifically,
\citet{liang2020think}, \citet{ghuhan2019federated} and \citet{hanzely2021unified} use \pflsu, while \citet{collins2021expoiting} and \citet{singhal2021federated} 
proposed variants of \pflam. 
Notably, \citet{hanzely2021unified} establish convergence of \pflsu with participation of all devices in each round in the convex and non-convex cases, while \citet{collins2021expoiting}
prove the linear convergence of \pflam for a two-layer linear network where $F_i(\cdot, v_i)$ and $F_i(u, \cdot)$ are both convex for fixed $v_i$ and $u$ respectively. 
We analyze both FedAlt and FedSim in the general nonconvex case with partial device participation where only a sample of devices participate in each round, hence addressing a more practical setting. 

While we primarily consider problem~\eqref{eqn:pfl-partial} in the context of partial model personalization, it can serve as a general formulation that covers many other problems. \citet{hanzely2021unified} demonstrate that various full model personalization formulations based on regularization~\citep{dinh2020moreau,Li2021ditto}, including~\eqref{eqn:pfl-full-model}, interpolation~\citep{deng2020adaptive,mansour2020three},
and meta-learning~\citep{fallah2020personalized,acar2021debiasing}
are special cases of this problem. 
The rates of convergence we prove in \S\ref{sec:algorithms} are competitive with or better than those in previous works for full model personalization methods in the non-convex case.

\section{Partially Personalized Models} \label{sec:models}

Modern deep learning models all have a multi-layer architecture. 
While a complete understanding of why they work so well is still out of reach, a general insight is that the lower layers (close to the input) are responsible for feature extraction and the upper layers (close to the output) focus on complex pattern recognition.
Depending on the application domain and scenarios, we may personalize either the input layer(s) or the output layer(s) of the model;
see Figure~\ref{fig:simple-models}.

In Figure~\ref{fig:model-input2}, the input layers are split horizontally into two parts, one shared and the other personal. They process different chunks of the input vector and their outputs are concatenated before feeding to the upper layers of the model.
As demonstrated by \citet{furl2019}, this partitioning can help protect user-specific private features (input~2 in Figure~\ref{fig:model-input2}) as the corresponding feature embedding (through~$v_i$) are personalized and kept local at the device.
Similar architectures have also been proposed in context-dependent language models \citep[e.g.,][]{mikolov2012context}. 

A more structured partitioning is illustrated in Figure~\ref{fig:model-adapter}, where a typical transformer layer \citep{vaswani2017attention}
is augmented with two adapters. 
This architecture is proposed by \citet{houlsby2019parameter} for finetuning large language models. 
Similar residual adapter modules are proposed by~\citet{rebuffi2017learning} for image classification models in the context of multi-task learning. In the context of FL, we treat the adapter parameters as personal and the rest of the model parameters as shared.

Figure~\ref{fig:model-additive} shows a generalized additive model, where the outputs of two separate models, one shared and the other personalized, are fused to generate a prediction.
Suppose the shared model is $h(u,\cdot)$ and the personal model is $h_i(v_i,\cdot)$. 
For regression tasks with samples $z_{i,j}=(x_{i,j},y_{i,j})$, where $x_{i,j}$ is the input and $y_{i,j}$ is the output, we let
$F_i(u,v_i) = ({1}/{N_i})\sum_{j=1}^{N_i} f_i\bigl((u,v_i),z_{i,j}\bigr)$ with
\[
f_i\bigl((u,v_i),z_{i,j}\bigr) =
\left\|y_{i,j} - h(u,x_{i,j}) - h_i(v_i,x_{i,j})\right\|^2.
\]
In this special case, the personal model fits the residual of the shared model and vice-versa~\citep{evgeniou2004regularized,agarwal2020federated}. 
For classification tasks, $h(u,\cdot)$ and $h_i(v_i,\cdot)$ produce probability distributions over multiple classes.
We can use the cross-entropy loss between $y_{i,j}$ and a convex combination of the two model outputs: $\theta h(u,x_{i,j})+(1-\theta) h_i(v_i,x_{i,j})$, where $\theta\in(0,1)$ is a learnable parameter.
Finally, we can cast full model personalization in~\eqref{eqn:pfl-full-model} as a special case of~\eqref{eqn:pfl-partial} by letting $u\gets w_0$, $v_i\gets w_i$ and 
\begin{align*}
F_i(u,v_i)\gets F_i(v_i) + (\lambda_i/2)\|v_i-u\|^2.
\end{align*}
Many other formulations of full model personalization can be reduced to~\eqref{eqn:pfl-partial} as well; see \citet{hanzely2021unified}.

\section{Algorithms and Convergence Analysis} \label{sec:algorithms}

\begin{algorithm}[t]
\caption{\hlsim{FedAlt}\,/\,\hlalt{FedSim} }
\label{alg:fedsim-fedalt}
\begin{algorithmic}[1]
		\STATE \textbf{Input:} Initial states $u^{(0)}, \{v_i^{(0)}\}_{i=1}^n$,
		    number of communication rounds $T$, 
		    number of devices per round $m$
	    \FOR{$t=0, 1, \cdots, T-1$}
        \STATE Server samples $m$ devices  $S^{(t)}\subset\{1,\ldots,n\}$
        \STATE Server broadcasts $u^{(t)}$ to each device in $S^{(t)}$
        \FOR{each device $i\in S^{(t)}$ in parallel}
        \STATE $u_i^{(t+1)},v_i^{(t+1)} = 
        \mbox{\hlsim{LocalAlt}\,/\,\hlsim{LocalSim}}
        \bigl(u^{(t)},v_i^{(t)}\bigr)$
        \STATE Device sends $u_i^{(t+1)}$ back to server
        \ENDFOR
        \STATE Server updates 
        $u^{(t+1)}=({1}/{m})\sum_{i\in S^{(t)}} u_i^{(t+1)}$
        \ENDFOR
\end{algorithmic}
\end{algorithm}

In this section, we present and analyze the FedAlt and FedSim algorithms for solving problem~\eqref{eqn:pfl-partial}.
To simplify presentation, we denote $V=(v_1,\ldots,v_n)\in\R^{d_1+\ldots+d_n}$ and focus on the case of $\alpha_i=1/n$, i.e.,
\begin{equation}\label{eqn:pfl-FwV}
\textstyle
\minimize_{u,\,V} \quad F(u,V):= \frac{1}{n}\sum_{i=1}^n F_i(u, v_i).
\end{equation}
This is equivalent to~\eqref{eqn:pfl-partial} if we scale $F_i$ by $n\alpha_i$, thus does not lose generality.
Moreover, we consider the more general setting with local functions $F_i(u,v_i)=\E_{z\sim\cD_i}[f_i((u,v_i),z)]$, where $\cD_i$ is 
the local data distribution.

The {FedAlt} and {FedSim} algorithms share a common outer-loop description given in Algorithm~\ref{alg:fedsim-fedalt}.
They differ only in the local update procedures {LocalAlt} and {LocalSim}, which are given in Algorithms~\ref{alg:localalt} and~\ref{alg:localsim} respectively.
We use $\widetilde{\nabla}_u$ and $\widetilde{\nabla}_v$ to represent stochastic gradients with respect to~$w$ and $v_i$ respectively.
In {LocalAlt} (Algorithm~\ref{alg:localalt}), the personal parameters are updated first with the received shared parameters fixed, then the shared parameters are updated with the new personal parameters fixed.
In {LocalSim} (Algorithm~\ref{alg:localsim}), the personal variables $v_i$ and local version of the shared parameters $u_i$ are updated simultaneously, with their partial gradients evaluated at the same point.
They are analogous respectively to the Gauss-Seidel and Jacobi update in numerical linear algebra \citep[e.g.,][\S6.5]{demmel1997book}.

The rest of the section is devoted to the convergence analysis.
We start with the assumptions in~\S\ref{sec:algos:asmp}. In \S\ref{sec:algos:vfp}, we outline the key technical difficulty of dependent random variables in the analysis of FedAlt and describe how we overcome it with virtual full participation. Finally, we compare the convergence rates of FedAlt and FedSim in \S\ref{sec:algos:comp}.

\subsection{Assumptions} \label{sec:algos:asmp}
We make some assumptions for the convergence analysis.

\begin{assumption}[Smoothness]\label{assmp:smoothness}
For each $i=1,\ldots,n$, the function $F_i$ is continuously differentiable.  
There exist constants $L_u, L_v, L_{uv}, L_{vu}$ such that
for each $i=1,\ldots,n$:
\begin{itemize}[topsep=0pt,itemsep=0pt,leftmargin=\widthof{(a)}]
\item $\nabla_u F_i(u,v_i)$ is $L_u$--Lipschitz with respect to~$u$ and $L_{uv}$--Lipschitz with respect to~$v_i$, and
\item $\nabla_v F_i(u,v_i)$ is $L_v$--Lipschitz with respect to~$v_i$ and $L_{vu}$--Lipschitz with respect to~$u$.
\end{itemize}
\end{assumption}
We summarize the relative cross-sensitivity of $\nabla_u F_i$ with respect to~$v_i$ and $\nabla_v F_i$ with respect to~$u$ with 
the scalar
\begin{equation}\label{eqn:chi-def}
\chi := \max\{L_{uv},\,L_{vu}\}\big/\sqrt{L_u L_v}.
\end{equation}

\begin{assumption}[Bounded Variance]
\label{assmp:stoc-grad-var}
The stochastic gradients in Algorithm~\ref{alg:localsim} and Algorithm~\ref{alg:localalt} are unbiased and have bounded variance. That is, for all $u$ and $v_i$, 
\begin{align*}
    \E\bigl[ \widetilde\nabla_u F_i(u, v_i)\bigr] &= \nabla_u F_i(u, v_i),
    \\
    \E\bigl[ \widetilde\nabla_v F_i (u, v_i)\bigr] &= \nabla_v F_i(u, v_i)\,.
\end{align*}
Furthermore, there exist constants $\sigma_u$ and $\sigma_v$ such that
\begin{align*}
    \E\bigl[\bigl\|\widetilde\nabla_u F_i(u, v_i) - \nabla_u F_i(u, v_i)\bigr\|^2\bigr] &\le \sigma_u^2\,, 
    \\ %
    \E\bigl[\bigl\|\widetilde\nabla_v F_i(u, v_i) - \nabla_v F_i(u, v_i)\bigr\|^2\bigr] &\le \sigma_v^2 \,.
\end{align*}
\end{assumption}

\begin{figure}[t]
\begin{minipage}[t]{0.48\textwidth}
\begin{algorithm}[H]
	\caption{\hlalt{LocalAlt}$\bigl(u, v_i\bigr)$
	}
	\label{alg:localalt}
\begin{algorithmic}[1]
		\STATE \textbf{Input:} Number of steps $\tau_v,\tau_u$, and step sizes $\gamma_v,\gamma_u$
        \STATE Initialize $v_{i,0}=v_i$ %
	    \FOR{$k=0, 1, \cdots, \tau_v\!-\!1$}
        \STATE $v_{i,k+1} = v_{i,k} - \gamma_v 
        \widetilde{\nabla}_v F_i\bigl(u,v_{i,k}\bigr)$
        \ENDFOR
        \STATE Update $v_i^+ =v_{i,\tau_v}$ and initialize $u_{i,0}=u$
	    \FOR{$k=0, 1, \cdots, \tau_u\!-\!1$}
        \STATE $u_{i,k+1} = u_{i,k} - \gamma_u 
        \widetilde{\nabla}_u F_i\bigl(u_{i,k},v_i^+\bigr)$
        \ENDFOR
        \STATE Update $u_i^+ = u_{i,\tau_u}$ 
        \STATE \textbf{Return} $\bigl(u_i^+,v_i^+\bigr)$
\end{algorithmic}
\end{algorithm}
\end{minipage}\hfill
\begin{minipage}[t]{0.48\textwidth}
\begin{algorithm}[H]
	\caption{\hlsim{LocalSim}$\bigl(u, v_i\bigr)$ 
	}
	\label{alg:localsim}
\begin{algorithmic}[1]
		\STATE \textbf{Input:} Number of steps $\tau$, and step sizes $\gamma_v, \gamma_u$
        \STATE Initialize $v_{i,0}=v_i$
        \STATE Initialize $u_{i,0}=u$
	    \FOR{$k=0, 1, \cdots, \tau-1$}
        \STATE $v_{i,k+1} = v_{i,k} - \gamma_v 
        \widetilde{\nabla}_v F_i\bigl(u_{i,k},v_{i,k}\bigr)$
        \STATE $u_{i,k+1} = u_{i,k} - \gamma_u 
        \widetilde{\nabla}_u F_i\bigl(u_{i,k},v_{i,k}\bigr)$
        \ENDFOR
        \STATE Update $v_i^+ =v_{i,\tau}$
        \STATE Update $u_i^+ = u_{i,\tau}$ 
        \STATE \textbf{Return} $\bigl(u_i^+,v_i^+\bigr)$
\end{algorithmic}
\end{algorithm}
\end{minipage}
\end{figure}

This is a standard bounded variance assumption on the per-device stochastic gradients~\cite{bottou2018optimization}. 
We have another source of stochasticity in our setting due to partial device participation. 
We can view $\nabla_u F_i(u,v_i)$, when~$i$ is randomly sampled from $\{1,\ldots,n\}$, as a stochastic partial gradient of $F(u,V)$.
The next assumption imposes a constant variance bound.
\begin{assumption}[Partial Gradient Diversity]
\label{assmp:grad-diversity}
There exist a constant $\delta\ge 0$ such that for all $u$ and~$V$,
\[
\textstyle
    \frac{1}{n}\sum_{i=1}^n \bigl\|\nabla_u F_i(u, v_i) - \nabla_u F(u, V)\bigr\|^2
    \le \delta^2  \,.
\]
\end{assumption}

Throughout this paper, we assume $F$ is bounded below by $F^\star$ and denote $\Delta F_0 = F\left(u\pow{0}, V\pow{0}\right)-F^\star$.
Further, we use the shorthands $V^{(t)}=(v_1^{(t)},\ldots,v_n^{(t)})$,
\begin{align*}
\textstyle
        \Delta_u\pow{t} &= \normsq*{\grad_u F\bigl(u\pow{t}, V\pow{t} \bigr)}\,,  \quad \text{and} \\
        \Delta_v\pow{t} &= \frac{1}{n}\sum_{i=1}^n \bigl\|\grad_v F_i\bigl(u\pow{t}, v_i\pow{t}\bigr)\bigr\|^2 \,.
\end{align*}
For smooth and nonconvex loss functions~$F_i$, we obtain convergence in expectation to a stationary point of~$F$ if the expected values of these two sequences converge to zero.

\subsection{Challenges of FedAlt and Virtual Full Participation} \label{sec:algos:vfp}

To convey the salient ideas, we assume full gradients on each device ($\sigma_u^2 = 0 = \sigma_v^2$) and a single local update per device ($\tau_u = 1 = \tau_v$). The only stochasticity in the algorithm comes from partial participation, i.e., sampling $m$ devices in each round.

\myparagraph{Dependent Random Variables}
Consider the iterates $(u\pow{t}, V\pow{t})$ generated by FedAlt (Algorithm~\ref{alg:fedsim-fedalt} with local updates from Algorithm~\ref{alg:localalt}).
In order to analyze the effect of the $u$-update, 
we invoke the smoothness of $F(\cdot \, , V\pow{t+1})$ as
\begin{align}
    \label{eq:pfl-am:main:smoothness}
    &F\bigl(u\pow{t+1}, V\pow{t+1}\bigr)
    - F\bigl(u\pow{t}, V\pow{t+1}\bigr)
    \le
     \bigl\langle \grad_u F\bigl(u\pow{t}, V\pow{t+1}\bigr),\, u\pow{t+1}\!\! - \! u\pow{t}\bigr\rangle
    + \frac{L_u}{2}\bigl\|u\pow{t+1} \!\!-\! u\pow{t}\bigr\|^2 \,.
\end{align}
Standard convergence proofs of stochastic gradient methods rely on the fact that when we  take expectation w.r.t.\ the sampling $S\pow{t}$ over the first order term (within the inner product), we obtain simplifications because the gradient is usually independent of $S\pow{t}$. 
\rev{This is true for FedSim and the $v$-step of FedAlt.
However, this is not the case for the $u$-step of FedAlt since 
\begin{align*}
\expect_t
\left[\bigl\langle \grad_u F\bigl(u\pow{t}, V\pow{t+1}\bigr),\, u\pow{t+1}\!\! - \! u\pow{t}\bigr\rangle \right]
\ne 
\bigl\langle \expect_t [\grad_u F\bigl(u\pow{t}, V\pow{t+1}\bigr)],\, \expect_t[u\pow{t+1}\!\! - \! u\pow{t}]\bigr\rangle
\end{align*}
in general,
where $\expect_t = \expect[\, \cdot \, | u\pow{t}, V\pow{t}]$ denotes the expectation w.r.t. $S\pow{t}$. 
Indeed, 
$V\pow{t+1}$ is already updated based on $S\pow{t}$, so both $V\pow{t+1}$ and $u\pow{t+1}$ are dependent random variables, due to their mutual dependence on the sampling $S\pow{t}$; see Figure~\ref{fig:a:fedalt:dependent_rv} (left)}. Therefore, directly taking expectation w.r.t.\ $S\pow{t}$ in~\eqref{eq:pfl-am:main:smoothness} does not lead to a useful result.

\myparagraph{Virtual Full Participation}
We decouple the dependent random variables with virtual full participation.
Define $\tV\pow{t+1}$ as the result of local $v$-updates 
as if \emph{every} device had participated.
This iterate is \emph{virtual}, meaning that it is a tool of the analysis but is not required by the algorithm. 
We introduce $\tV\pow{t+1}$ on the right hand side of \eqref{eq:pfl-am:main:smoothness}
to get 
\begin{align*}
    F\bigl(u\pow{t+1}, V\pow{t+1}\bigr)
    - F\bigl(u\pow{t}, V\pow{t+1}\bigr)
    \le  E\pow{t} +
    \bigl\langle\grad_u F(u\pow{t}, \tV\pow{t+1}),\, u\pow{t+1}\!\! -\! u\pow{t}\bigr\rangle
    + \frac{L_u}{2}\bigl\|u\pow{t+1} \!\! - \! u\pow{t}\bigr\|^2 \,,
\end{align*}
where $E\pow{t}$ is the error term from replacing $V\pow{t+1}$ with $\tV\pow{t+1}$.
Since $\tV\pow{t+1}$ is deterministic when conditioned on $(u\pow{t}, V\pow{t})$, 
we can now take an expectation w.r.t. the sampling $S\pow{t}$ over $u\pow{t+1}$ only, cf. Figure~\ref{fig:a:fedalt:dependent_rv} (right). 
This allows us to simplify the first order term as
\rev{
\begin{align*}
\expect_t
\left[\bigl\langle \grad_u F\bigl(u\pow{t}, \tV\pow{t+1}\bigr),\, u\pow{t+1}\!\! - \! u\pow{t}\bigr\rangle \right]
 &= 
\bigl\langle \grad_u F\bigl(u\pow{t}, \tV\pow{t+1}\bigr),\, \expect_t[u\pow{t+1}\!\! - \! u\pow{t}]\bigr\rangle 
\\&= 
-\frac{\gamma_u}{n} \sum_{i=1}^n  \expect_t\|\grad_u F(u\pow{t}, \tv\pow{t+1})\|^2 \,.
\end{align*}
}

Finally, we bound the error term $\expect_t[E\pow{t}] \le O(L_u\gamma_u^2 + \chi^2 L_v\gamma_v^2)$, which can be made small by choosing appropriately small learning rates. 

The technique of virtual full participation is distinct from shadow iterates $\bar u_k\pow{t} = (1/n)\sum_{i=1}^n u_{i,k}\pow{t}$ typically used in decentralized~\citep{yuan2016convergence}
and federated optimization~\citep{wang2021field}, \rev{and could be of independent interest}. 
We refer to Appendix~\ref{sec:a:convrg:vfp} for additional details. 

\begin{figure*}[t]
\centering
\begin{adjustbox}{max width=0.3\linewidth}
\begin{tikzpicture}

\node[draw, color=black, fill=black!1, very thick, circle, minimum size=1.7cm] 
  (S) at (0, 2.5) {\fontfamily{cmss}\selectfont \Large $S^{(t)}$};

\node[draw, color=black, fill=red!10, very thick, circle, minimum size=1.5cm] (V) at (-2,0) 
{\fontfamily{cmss}\selectfont \Large $V^{(t+1)}$}; 
    
\node[draw, color=black, fill=blue!10, very thick, circle, minimum size=1.5cm] (u) at (2,0) 
{\fontfamily{cmss}\selectfont \Large $u^{(t+1)}$};  

\node[draw, color=black, very thick, fill=black!1,
    rounded rectangle,
    minimum width=2.5cm,
    minimum height=1.5cm] (inp) at (0, -3) 
    {\fontfamily{cmss}\selectfont \Large 
    $\Big\langle
    \nabla_u F\big(u^{(t)}, \, 
    \tikzmarkin[set fill color=red!10,
set border color=black!0]{vv}(0.1,-0.25)(-0.1,0.6) V^{(t+1)}\tikzmarkend{vv}  \, \big),  \, \,
    \tikzmarkin[set fill color=blue!10,
set border color=black!0]{uu}(0.1,-0.25)(-0.1,0.6)u^{(t+1)}\tikzmarkend{uu} 
    \, - u^{(t)}
    \Big\rangle$
    };

\draw[-{Latex[width=4mm]}, very thick] (S) edge (V) ;
\draw[-{Latex[width=4mm]}, very thick] (S) edge (u) ;

\draw[-{Latex[width=4mm]}, very thick] (V) edge (inp) ;
\draw[-{Latex[width=4mm]}, very thick] (u) edge (inp) ;

\end{tikzpicture}
 \end{adjustbox}
\hspace{10em}
\begin{adjustbox}{max width=0.3\linewidth}
\begin{tikzpicture}

\node[draw, color=black, fill=black!1, very thick, circle, minimum size=1.7cm] 
  (S) at (0, 2.5) {\fontfamily{cmss}\selectfont \Large $S^{(t)}$};

\node[draw, color=black, fill=red!10, very thick, circle, minimum size=1.5cm] (V) at (-2,0) 
{\fontfamily{cmss}\selectfont \Large $V^{(t+1)}$}; 
    
\node[draw, color=black, fill=blue!10, very thick, circle, minimum size=1.5cm] (u) at (2,0) 
{\fontfamily{cmss}\selectfont \Large $u^{(t+1)}$};  

\node[draw, color=black, very thick, fill=black!1,
    rounded rectangle,
    minimum width=2.5cm,
    minimum height=1.5cm] (inp) at (0, -3) 
    {\fontfamily{cmss}\selectfont \Large 
    $\Big\langle
    \nabla_u F\big(u^{(t)}, \, 
    \tikzmarkin[set fill color=yellow!45,
set border color=black!0]{vvv}(0.1,-0.25)(-0.1,0.6) \widetilde V^{(t+1)}\tikzmarkend{vvv}  \, \big),  \, \,
    \tikzmarkin[set fill color=blue!10,
set border color=black!0]{uuu}(0.1,-0.25)(-0.1,0.6)u^{(t+1)}\tikzmarkend{uuu} 
    \, - u^{(t)}
    \Big\rangle$
    };

\draw[-{Latex[width=4mm]}, very thick] (S) edge (V) ;
\draw[-{Latex[width=4mm]}, very thick] (S) edge (u) ;

\draw[-{Latex[width=4mm]}, very thick] (u) edge (inp) ;

\end{tikzpicture}
 \end{adjustbox}
\caption{\textbf{Left}: Graphical model depicting the problem of dependent random variables in the analysis of FedAlt. We cannot take an expectation of the bottom-most inner product term w.r.t. the device sampling $S\pow{t}$  because both $V\pow{t+1}$ and $u\pow{t+1}$ depend on it.
\textbf{Right}: Virtual full participation overcomes this problem, since the virtual iterates $\tV\pow{t+1}$ are statistically independent of the sampling $S\pow{t}$.
The expectation can now pass through the inner product, as required by standard stochastic gradient analyses.  
}
\label{fig:a:fedalt:dependent_rv}
\end{figure*}
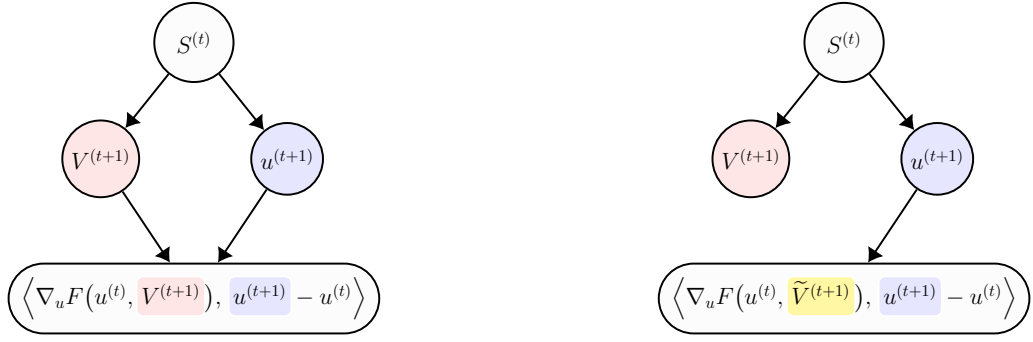

\subsection{Comparing FedAlt and FedSim} \label{sec:algos:comp}

We first present our main result for FedAlt
(Algorithm~\ref{alg:fedsim-fedalt} with LocalAlt). The proof relies on the technique of virtual full participation and is proved in Appendix~\ref{sec:a:proof:am}.

\begin{theorem}[\textbf{Convergence of \hlsim{FedAlt}}]
\label{thm:pfl-am:main}
Suppose Assumptions~\ref{assmp:smoothness}, \ref{assmp:stoc-grad-var} and~\ref{assmp:grad-diversity} hold and the learning rates in FedAlt are chosen as $\gamma_u = \lr/(L_u\tau_u)$ and $\gamma_v=\lr/(L_v\tau_v)$.
For a choice of $\lr$ depending on 
the problem parameters $L_u, L_v, \chi^2, \sigma_u^2, \sigma_v^2, \delta^2, m, n$, and the number of rounds $T$,
we have (ignoring absolute constants),
    \begin{align} \label{eqn:fedalt-bound}
    \begin{aligned}
        \frac{1}{T}\sum_{t=0}^{T-1} \left(\frac{1}{ L_u} \expect\bigl[\Delta_u\pow{t}\bigr]
        + \frac{m}{nL_v} \expect\bigl[\Delta_v\pow{t}\bigr] \right)
        \le 
        &\frac{\left(\Delta F_0 \, \sigma_{\mathrm{alt}, 1}^2 \right)^{1/2}}{\sqrt{T}} 
        +  \frac{\left(\Delta F_0^2 \, \sigma_{\mathrm{alt}, 2}^2 \right)^{1/3}}{T^{2/3}}  
        + 
        O\left( \frac{1}{T} \right)
        \,,
    \end{aligned}
    \end{align}
    where we define effective variance terms 
    \begin{align*}
        \sigma_{\mathrm{alt}, 1}^2 &= \frac{\delta^2}{L_u}\left(1-\frac{m}{n}\right) + \frac{\sigma_u^2}{L_u} + \frac{\sigma_v^2 (m + \chi^2(n-m))}{L_v n}\,, \\
        \sigma_{\mathrm{alt}, 2}^2 &= \frac{\sigma_u^2 + \delta^2}{L_u}(1-\tau_u^{-1}) + \frac{\sigma_v^2 m}{L_v n}(1-\tau_v^{-1}) + \frac{\chi^2 \sigma_v^2}{L_v} \,,
    \end{align*}
    and $O(\cdot)$ hides problem constants independent of $T$.
\end{theorem}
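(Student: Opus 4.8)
The plan is to follow the standard one-step descent recipe for nonconvex stochastic optimization, specialized to the two-block Gauss--Seidel structure of FedAlt and combined with the virtual full participation device of \S\ref{sec:algos:vfp}. The backbone is a per-round progress inequality: split $F(u\pow{t+1},V\pow{t+1}) - F(u\pow{t},V\pow{t})$ into the $v$-step progress $F(u\pow{t},V\pow{t+1}) - F(u\pow{t},V\pow{t})$ and the $u$-step progress $F(u\pow{t+1},V\pow{t+1}) - F(u\pow{t},V\pow{t+1})$, bound each by a negative multiple of a gradient-norm term plus error terms, take the conditional expectation $\expect_t$ over the device sample $S\pow{t}$, and telescope over $t=0,\dots,T-1$.

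For the $v$-step, only devices $i\in S\pow{t}$ change their personal parameters, so the progress equals $\frac1n\sum_{i\in S\pow{t}}\big(F_i(u\pow{t},v_i\pow{t+1}) - F_i(u\pow{t},v_i\pow{t})\big)$. Using $L_v$-smoothness of $F_i(u\pow{t},\cdot)$ together with the fact that $v_i\pow{t+1}$ is the output of $\tau_v$ SGD steps on $F_i(u\pow{t},\cdot)$ with $\gamma_v=\lr/(L_v\tau_v)$ --- and the usual client-drift bound on $\expect_t\|v_{i,k}-v_i\pow{t}\|^2$ --- gives a per-device descent of order $-\gamma_v\tau_v\|\grad_v F_i(u\pow{t},v_i\pow{t})\|^2$ modulo $O(\gamma_v^2)$ noise and drift terms in $\sigma_v^2$. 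Since these gradients are evaluated at $v_i\pow{t}$, which is independent of $S\pow{t}$, taking $\expect_t$ is harmless and yields the factor $m/n$ in front of $\Delta_v\pow{t}$ via $\expect_t[\frac1n\sum_{i\in S\pow{t}}(\cdot)]=\frac{m}{n}\cdot\frac1n\sum_{i=1}^n(\cdot)$.

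The $u$-step is the crux. Starting from the $L_u$-smoothness bound~\eqref{eq:pfl-am:main:smoothness}, I insert the virtual iterate $\tV\pow{t+1}$ (the $v$-updates as if every device participated) into the inner product, splitting it into a main term $\langle\grad_u F(u\pow{t},\tV\pow{t+1}),u\pow{t+1}-u\pow{t}\rangle$ and an error term $E\pow{t}=\langle\grad_u F(u\pow{t},V\pow{t+1})-\grad_u F(u\pow{t},\tV\pow{t+1}),u\pow{t+1}-u\pow{t}\rangle$. Since $\tV\pow{t+1}$ is deterministic given $(u\pow{t},V\pow{t})$, $\expect_t$ passes through the inner product of the main term, and $\expect_t[u\pow{t+1}-u\pow{t}]=-\frac{\gamma_u}{n}\sum_i\grad_u F_i(u\pow{t},\tv_i\pow{t+1})$ up to $u$-drift terms controlled by the local-SGD bound with $\gamma_u=\lr/(L_u\tau_u)$; one application of $L_{uv}$-smoothness then replaces $\grad_u F_i(u\pow{t},\tv_i\pow{t+1})$ by $\grad_u F_i(u\pow{t},v_i\pow{t})$ at a cost governed by $\chi$, producing the descent in $\Delta_u\pow{t}$. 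For $E\pow{t}$, observe that $V\pow{t+1}$ and $\tV\pow{t+1}$ agree except on coordinates $i\notin S\pow{t}$, so $\|\grad_u F(u\pow{t},V\pow{t+1})-\grad_u F(u\pow{t},\tV\pow{t+1})\|\le\frac{L_{uv}}{n}\sum_{i\notin S\pow{t}}\|v_i\pow{t}-\tv_i\pow{t+1}\|$, which is $O(L_{uv}\gamma_v\tau_v)$ times gradient-plus-noise terms; Young's inequality against $\|u\pow{t+1}-u\pow{t}\|^2$ then gives $\expect_t[E\pow{t}]=O(L_u\gamma_u^2+\chi^2L_v\gamma_v^2)$ modulo the gradient-norm factors, which is absorbed by the descent terms for $\lr$ small.

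Summing both steps, taking total expectation, telescoping, and rearranging yields $\frac1T\sum_t\expect[\frac{1}{L_u}\Delta_u\pow{t}+\frac{m}{nL_v}\Delta_v\pow{t}]\lesssim\frac{\Delta F_0}{\lr T}+\lr\,\sigma_{\mathrm{alt},1}^2+\lr^2\,\sigma_{\mathrm{alt},2}^2$, where the $O(\lr)$ coefficient aggregates the sampling-diversity term $\frac{\delta^2}{L_u}(1-\frac mn)$ and the stochastic-gradient terms $\frac{\sigma_u^2}{L_u}$, $\frac{\sigma_v^2(m+\chi^2(n-m))}{L_vn}$ into $\sigma_{\mathrm{alt},1}^2$, while the $O(\lr^2)$ coefficient aggregates the client-drift terms (those carrying $1-\tau_u^{-1}$, $1-\tau_v^{-1}$, and the $\chi^2\sigma_v^2$ cross term) into $\sigma_{\mathrm{alt},2}^2$. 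Finally, the standard tuning $\lr=\min\{(\Delta F_0/(\sigma_{\mathrm{alt},1}^2T))^{1/2},(\Delta F_0/(\sigma_{\mathrm{alt},2}^2T))^{1/3},\lr_{\max}\}$, with $\lr_{\max}$ the constant stepsize cap from smoothness, balances the three pieces and gives the $1/\sqrt T$, $1/T^{2/3}$, and $O(1/T)$ terms. The main obstacle I anticipate is the $u$-step bookkeeping: simultaneously controlling the virtual-participation error $E\pow{t}$, the two layers of client drift (in $u$ and in $v$), and the $\chi$-dependent cross terms so that the constants collapse precisely into $\sigma_{\mathrm{alt},1}^2$ and $\sigma_{\mathrm{alt},2}^2$ rather than something looser.
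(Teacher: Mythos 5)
Your proposal is correct and follows essentially the same route as the paper's proof in Appendix~\ref{sec:a:proof:am}: the Gauss--Seidel split into $v$-step and $u$-step, the virtual iterate $\tV\pow{t+1}$ to decouple $u\pow{t+1}$ from $V\pow{t+1}$, the Young-plus-cross-smoothness bound on the decoupling error $E\pow{t}$, the client-drift lemmas, the $m/n$ factor from sampling, and the final learning-rate balancing via the three-term minimum. The only (cosmetic) difference is that you convert $\|\grad_u F(u\pow{t},\tV\pow{t+1})\|^2$ back to $\Delta_u\pow{t}$ inside the $u$-step analysis, whereas the paper telescopes the bound in terms of the virtual gradient $\widetilde\Delta_u\pow{t}$ first and performs this conversion as a separate final step, incurring the same lower-order $O(\gamma_u\gamma_v^2)$ cost.
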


The left-hand side of~\eqref{eqn:fedalt-bound} is the average over time of a weighted sum of
$\expect\bigl[\Delta_u\pow{t}\bigr]$ and $\expect\bigl[\Delta_v\pow{t} \bigr]$. 
Convergence is measured in the rate at which this quantity decays to zero
and depends on effective noise variances $\sigma_{\mathrm{alt}, 1}^2, \sigma_{\mathrm{alt}, 2}^2$; these are weighed sums of the variances $\delta^2$, $\sigma_u^2$, and $\sigma_v^2$ contributed by the three sources of stochasticity.
The right side contains a standard $T^{-1/2}$ term with effective noise variance $\sigma_{\mathrm{alt}, 1}^2$ and a lower order $T^{-2/3}$ term with variance $\sigma_{\mathrm{alt}, 2}^2$.

Next, we present our main result for FedSim
(Algorithm~\ref{alg:fedsim-fedalt} with LocalSim), proved in
Appendix~\ref{sec:a:proof:su}. 

\begin{theorem}[\textbf{Convergence of \hlsim{FedSim}}]
\label{thm:pfl-su:main}
Suppose Assumptions~\ref{assmp:smoothness}, \ref{assmp:stoc-grad-var} and~\ref{assmp:grad-diversity} hold and the learning rates in FedSim are chosen as $\gamma_u = \lr/(L_u\tau)$ and $\gamma_v=\lr/(L_v\tau)$.
Then, for a $\lr$ depending on the problem parameters and the number of rounds $T$, the bound \eqref{eqn:fedalt-bound}
holds where the effective variance terms 
$\sigma_{\mathrm{alt},1}^2, \sigma_{\mathrm{alt},2}^2$
are respectively replaced by
\begin{align*}
    \sigma_{\mathrm{sim}, 1}^2 &= (1+\chi^2)\left(\frac{\delta^2}{L_u}\left(1-\frac{m}{n}\right) + \frac{\sigma_u^2}{L_u} + \frac{\sigma_v^2m}{L_v n}\right)\,, \\
    \sigma_{\mathrm{sim}, 2}^2 &= (1+\chi^2)\left(\frac{\delta^2}{L_u} + \frac{\sigma_u^2}{L_u} + \frac{\sigma_v^2}{L_v}\right) (1-\tau^{-1}) \,.
\end{align*}
\end{theorem}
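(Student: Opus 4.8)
The plan is to mirror the FedAlt analysis of Theorem~\ref{thm:pfl-am:main}, but exploiting the key simplification that FedSim evaluates the two partial gradients at a \emph{common} iterate, so the dependent-random-variable obstacle does not arise and virtual full participation is not needed. The starting point is a one-round descent inequality obtained by expanding $F(u^{(t+1)},V^{(t+1)}) - F(u^{(t)},V^{(t)})$ via Assumption~\ref{assmp:smoothness}, but now around the \emph{old} iterate $(u^{(t)},V^{(t)})$ rather than around $(u^{(t)},V^{(t+1)})$ as in FedAlt. This produces first-order terms $\langle\nabla_u F(u^{(t)},V^{(t)}),\,u^{(t+1)}-u^{(t)}\rangle + \tfrac1n\sum_i\langle\nabla_v F_i(u^{(t)},v_i^{(t)}),\,v_i^{(t+1)}-v_i^{(t)}\rangle$ together with quadratic terms of the form $\tfrac{L_u}{2}\|u^{(t+1)}-u^{(t)}\|^2 + \tfrac{L_v}{2}\|V^{(t+1)}-V^{(t)}\|^2 + L_{uv}\|u^{(t+1)}-u^{(t)}\|\,\|V^{(t+1)}-V^{(t)}\|$; the cross term is absorbed into the first two via Young's inequality, which is where the coupling scalar $\chi$ from~\eqref{eqn:chi-def}, and ultimately the prefactor $1+\chi^2$, enters.

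Next I would take the conditional expectation $\E_t[\,\cdot\mid u^{(t)},V^{(t)}]$, first over the device sample $S^{(t)}$ and then over the stochastic gradients. Because every gradient appearing above is $\mathcal F_t$-measurable (evaluated at the old iterate), the expectation passes through the inner products without obstruction --- this is precisely the step that forced virtual full participation in FedAlt and is free here. Uniform sampling gives $\E_t[u^{(t+1)}-u^{(t)}] = \tfrac1n\sum_i(\E[u_i^{(t+1)}]-u^{(t)})$ and $\E_t[v_i^{(t+1)}-v_i^{(t)}] = \tfrac mn(\E[v_i^{(t+1)}]-v_i^{(t)})$, so the $v$-term inherits the $m/n$ weight appearing in~\eqref{eqn:fedalt-bound}. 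A standard local-SGD client-drift computation then relates $\E[u_i^{(t+1)}]-u^{(t)}$ to $-\gamma_u\tau\,\nabla_u F_i(u^{(t)},v_i^{(t)})$ (and similarly for $v$) up to an error governed by the local drift $\|u_{i,k}-u^{(t)}\|,\|v_{i,k}-v_i^{(t)}\|$, which is bounded recursively using Assumptions~\ref{assmp:smoothness} and~\ref{assmp:stoc-grad-var} by $O\bigl(\gamma^2\tau^2(\text{local gradient norm})^2 + \gamma^2\tau\,\sigma^2\bigr)$, the source of the $1-\tau^{-1}$ factor in $\sigma_{\mathrm{sim},2}^2$. Summing the leading first-order contributions yields $-c_1\gamma_u\tau\,\Delta_u^{(t)} - c_2\tfrac mn\gamma_v\tau\,\Delta_v^{(t)}$, while the discrepancy between $\tfrac1m\sum_{i\in S^{(t)}}\nabla_u F_i(u^{(t)},v_i^{(t)})$ and $\nabla_u F(u^{(t)},V^{(t)})$ contributes the $(1-m/n)\delta^2$ term via Assumption~\ref{assmp:grad-diversity}; the bounds $\tfrac1n\sum_i\|\nabla_u F_i(u^{(t)},v_i^{(t)})\|^2 \le \Delta_u^{(t)}+\delta^2$ and $\tfrac1n\sum_i\|\nabla_v F_i(u^{(t)},v_i^{(t)})\|^2 = \Delta_v^{(t)}$ close the quadratic terms.

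Assembling the per-round estimate with $\gamma_u=\eta/(L_u\tau)$ and $\gamma_v=\eta/(L_v\tau)$, and restricting $\eta$ to a range of order $1/(1+\chi)$ so that the quadratic and drift terms are dominated by the negative progress, gives a recursion of the schematic form $\E\bigl[\tfrac1{L_u}\Delta_u^{(t)} + \tfrac m{nL_v}\Delta_v^{(t)}\bigr] \lesssim \tfrac1\eta\,\E\bigl[F(u^{(t)},V^{(t)}) - F(u^{(t+1)},V^{(t+1)})\bigr] + \eta\,\sigma_{\mathrm{sim},1}^2 + \eta^2\,\sigma_{\mathrm{sim},2}^2$. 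Telescoping over $t=0,\dots,T-1$, dividing by $T$, and optimizing $\eta$ over the admissible range via the standard ``minimum of candidate step sizes'' lemma (the same device used for Theorem~\ref{thm:pfl-am:main}) produces the $\sqrt{\Delta F_0\,\sigma_{\mathrm{sim},1}^2/T}$ and $(\Delta F_0^2\,\sigma_{\mathrm{sim},2}^2)^{1/3}T^{-2/3}$ terms together with the $O(1/T)$ remainder, i.e.\ exactly the shape of~\eqref{eqn:fedalt-bound} with the stated substitutions. I expect the main obstacle to be bookkeeping rather than conceptual: carefully tracking how $\chi$ propagates through the Young's-inequality splits and the drift recursion so that it surfaces as the uniform factor $1+\chi^2$ multiplying every noise source ($\delta^2$, $\sigma_u^2$, $\sigma_v^2$), and confirming that the simultaneous update introduces no hidden $V^{(t+1)}$-dependence that would break the measurability argument --- which it does not, and this is exactly why FedSim requires no analogue of virtual full participation.
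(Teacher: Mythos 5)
Your proposal is correct and follows essentially the same route as the paper's proof of Theorem~\ref{thm:pfl-su}: a block-smoothness expansion around the old iterate $(u^{(t)},V^{(t)})$ with the cross term absorbed via Young's inequality to produce the uniform $(1+\chi^2)$ factor, direct passage of the conditional expectation through the first-order terms (precisely because all gradients are $\mathcal{F}^{(t)}$-measurable), client-drift and sampling-without-replacement bounds for the quadratic and variance terms, and telescoping followed by the standard step-size optimization lemma. Your observation that the measurability of the gradients at the old iterate is what makes virtual full participation unnecessary here is exactly the point the paper makes.
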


The bound of FedSim is analogous to that of FedAlt, with the only difference in the noise terms $\sigma_{\mathrm{sim}, 1}^2$ and $\sigma_{\mathrm{sim}, 2}^2$.

\myparagraph{FedAlt vs. FedSim: Two Regimes}
Comparing the variances $\sigma_{\mathrm{alt}, 1}^2$ and $\sigma_{\mathrm{sim}, 1}^2$ in the leading $1/\sqrt{T}$
term, we identify two regimes in terms of problem parameters. The regime where FedAlt dominates FedSim is characterized by
the condition
\[
    \frac{\sigma_v^2}{L_v}\left(1 - \frac{2m}{n}\right) 
    < \frac{\sigma_u^2 + \delta^2(1-m/n)}{m L_u} \,.
\]
A practically relevant scenario where this is true is $\sigma_v^2 \approx 0$ and $\sigma_u^2\approx 0$ from using a large or full batch on a small number of samples per device. In this case, the rate of FedAlt is better than FedSim by a factor of $(1 + \chi^2)^{1/2}$, indicating that the rate of FedAlt is less affected by the coupling $\chi^2$ between the personal and shared parameters. Our experiments in \S\ref{sec:experiments} corroborate the practical relevance of this regime.

\begin{table*}[t]
    \caption{\small{Summary of datasets and models. A histogram of data per device is given in Figure~\ref{fig:expt:ds:hist} (Appendix~\ref{sec:a:expt-setup}).
    }}
\label{table:expt:dataset-summary}
\begin{center}
\begin{adjustbox}{max width=0.9\linewidth}
\small
\begin{tabular}{lccccccc}
\toprule
Task & Dataset & \#Classes & Model & \# Model Params & \#Devices & \multicolumn{2}{c}{\#Data per device} \\
& & &  &&& Mean  & Max  \\
\midrule
Next-word prediction & StackOverflow & $10000$ & $4$-layer transformer & $6M$ & $1000$ & $4964$ & $15520$ \\
Landmark recognition & GLDv2 & $2028$ & ResNet-18 & $12M$ & $823$ & $88$ & $1000$ \\
Character recognition & EMNIST & $63$ & ResNet-18 & $11M$ & $1114$ & $298$ & $418$ \\
Speech recognition & LibriSpeech & N/A & $6$-layer transformer & $15M$ & $902$ & $8.3$ min & $15$ min \\
\bottomrule
\end{tabular}
\end{adjustbox}
\end{center}
\end{table*}

\myparagraph{Extensions and Discussion}
Theorems~\ref{thm:pfl-am:main} and~\ref{thm:pfl-su:main} are also interesting because of the broad generality of the optimization model \eqref{eqn:pfl-partial}, as we discussed in \S\ref{sec:models}
and as pointed out by \citet{hanzely2021unified}. 
In particular, Theorems~\ref{thm:pfl-am:main} and~\ref{thm:pfl-su:main} also give rates for
full personalization schemes without convergence guarantees in the nonconvex case such as FedRes~\citep{agarwal2020federated}, Mapper~\citep{mansour2020three}, and Ditto~\citep{Li2021ditto}.
Furthermore, our rates are better than those of \citep{dinh2020moreau} for their pFedMe objective.

We give fully non-asymptotic versions of these theorems under more general assumptions in Appendix~\ref{sec:a:convergence}. The $O(1/T)$ term is lower order and can be ignored for $T \ge \Omega((n/m)^2)$ for FedAlt and $T\ge \Omega(n/m)$ for FedSim.

\section{Experiments} \label{sec:experiments}
We experimentally compare different model personalization schemes using \pflam and \pflsu. 
Further details about the experiments and hyperparameters as well as additional experimental results are provided in the appendices.
The code to reproduce the experimental results is publicly available.\footnote{
\url{https://github.com/krishnap25/FL_partial_personalization}
}

\myparagraph{Datasets, Tasks and Models}
We consider four learning tasks, summarized in Table~\ref{table:expt:dataset-summary}.
\begin{enumerate}[label=(\alph*),itemsep=0em, topsep=0em, leftmargin=\widthof{(a) }]
\item{\textit{Next-Word Prediction}:}
We use the StackOverflow dataset, where each device corresponds to the questions and answers of one user on \url{stackoverflow.com}.
This is representative of mobile keyboard predictions. We use a 4-layer transformer model~\citep{vaswani2017attention} trained with the cross entropy loss and evaluated with top-1 accuracy of next word prediction.

\item{\textit{Landmark Recognition}:}
We use GLDv2~\citep{weyand2020google}, 
a large-scale image dataset of global landmarks. Each device corresponds to a Wikipedia contributor and is representative of
smartphone users capturing images while traveling. 
We use ResNet-18~\citep{he2016deep}.
with group norm instead of batch norm~\citep{hsieh2020noniid} and images are reshaped to $224\times 224$. 
It is trained with the cross entropy loss and evaluated with the classification accuracy.

\item{\textit{Character Recognition}:}
We use the EMNIST dataset~\citep{cohen2017emnist}, 
where the input is a $28\times 28$ grayscale image of a handwritten character and the output is its label (0-9, a-z, A-Z). Each device corresponds to a writer of the character. We use a ResNet-18 model
with input and output layers modified to accommodate the smaller image size and number of classes.

\item{\textit{Speech Recognition (ASR)}:}
We construct a federated version of the LibriSpeech dataset~\citep{panayotov2015librispeech}, partitioned by the speaker of the audio. The input is an audio clip of English speech represented by log-mel filterbank coefficients and the output is its text transcription. We use a $6$-layer transformer model trained with the connectionist temporal classification (CTC) criterion~\citep{graves2006connectionist}
and report the word error rate for evaluation. 
\end{enumerate}

\begin{table*}[t]
    \caption{\small{Comparison of partial model personalization with full model personalization in terms of the \textit{average} test accuracy \% across devices.
    The subscript denotes the standard deviation over 5 random runs. The boldfaced/highlighted numbers denote entries within one standard deviation of the maximum in each row. For partial personalization, we show the accuracy of \pflam; see Table~\ref{table:expt:opt-algo:stateful} for \pflsu.
    }}
\label{table:expt:main}
\begin{center}
\begin{adjustbox}{max width=\linewidth}
\small

\renewcommand{\arraystretch}{1.2}
\begin{tabular}{lrrrrrrr}
\toprule
{} & Non-pers. & \multicolumn{3}{c}{Full Model Personalization} & \multicolumn{3}{c}{Partial Model Personalization} \\
\cmidrule(lr){3-5}
\cmidrule(lr){6-8}
    {} & FedAvg &        Finetune &           Ditto &                   pFedMe &     Input Layer &    Output Layer &                  Adapter \\
\midrule
StackOverflow &   $23.82$ &  \tabemph{} $\mathbf{25.20}_{0.01}$ &  \tabemph{} $\mathbf{25.20}_{0.01}$ &   \tabemph{} $\mathbf{25.21}_{0.01}$ &  $24.44_{0.01}$ &  $25.05_{0.01}$ &           $24.82_{0.01}$ \\
GLDv2         &   $51.43$ &  $62.85_{0.02}$ &  $62.85_{0.01}$ &           $62.92_{0.02}$ &  $53.94_{0.07}$ &  $56.64_{0.05}$ &   \tabemph{} $\mathbf{66.41}_{0.06}$ \\
EMNIST        &   $93.18$ &   \tabemph{} $\mathbf{94.13}_{0.01}$ &   \tabemph{} $\mathbf{94.13}_{0.01}$ &   \tabemph{} $\mathbf{94.13}_{0.01}$ &  $93.62_{0.04}$ &  $93.57_{0.05}$ &            \tabemph{} $\mathbf{94.13}_{0.03}$ \\
\bottomrule
\end{tabular}
 \end{adjustbox}
\end{center}
\end{table*}

\begin{figure*}[t]
\includegraphics[width=0.99\textwidth]{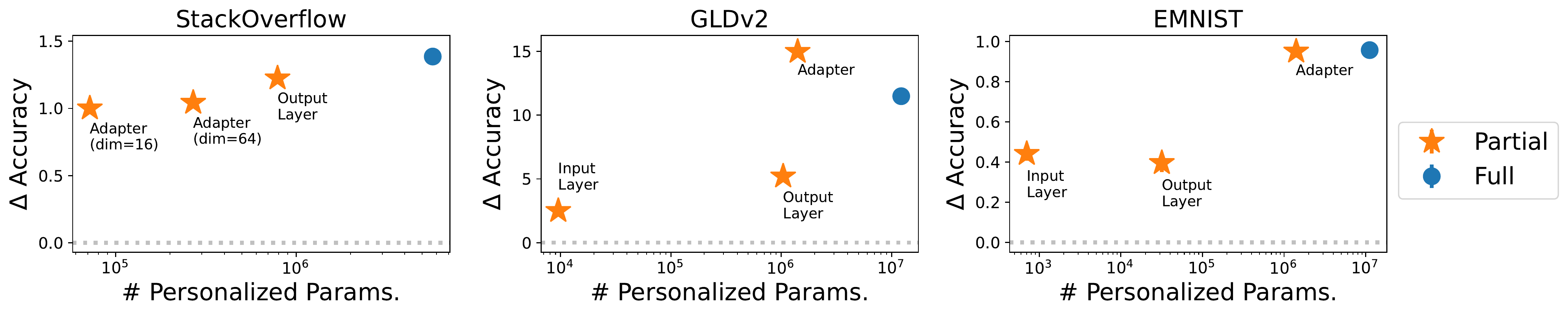}
\caption{\small
    Absolute change in accuracy (percentage points) due to personalization plotted against number of personal parameters (i.e., dimensionality of $v_i$). Note that the $x$-axis is in log scale. 
}
\label{fig:expt:main:partial-v-full}
\end{figure*}

\myparagraph{Model Partitioning for Partial Personalization}
We consider three partitioning schemes.
\begin{enumerate}[itemsep=0cm,leftmargin=1.6em,topsep=0cm,label=(\alph*)]
\item \textit{Input layer personalization}: 
    This architecture personalizes the input layer to learn personal representations, while the rest of the model is shared
    (Figure~\ref{fig:model-input}).
    For next-word prediction, we personalize the first transformer layer instead of the embedding layer.
\item \textit{Output layer personalization}:
    This architecture learns a shared representation but personalizes the prediction layer (Figure~\ref{fig:model-output}). 
    We personalize the last transformer layer for a transformer model instead of the output layer.
\item \textit{Adapter personalization}: 
    Each device adds personal adapter modules to a shared model (Figure~\ref{fig:model-adapter}). We use the transformer adapters of \citet{houlsby2019parameter}
    and the residual adapters of \citet{rebuffi2017learning}.
\end{enumerate}

\myparagraph{Algorithms and Experimental Pipeline}
We consider three full personalization baselines: 
(i) \textit{Finetune}, where each device finetunes its personal full model starting from a learned common model, 
(ii)~\textit{Ditto}~\citep{Li2021ditto}, which is finetuning with $\ell_2$ regularization, and, 
(iii) \textit{pFedMe}~\citep{dinh2020moreau} which minimizes the objective~\eqref{eqn:pfl-full-model}. 
All methods, including FedAlt, FedSim and the baselines are initialized with a global model trained with FedAvg.

\subsection{Experimental Results}

\myparagraph{Partial personalization nearly matches full personalization and can sometimes outperform it}
Table~\ref{table:expt:main} shows the \emph{average} test accuracy across all devices of different FL algorithms.
We see that on the StackOverflow dataset, 
output layer personalization ($25.05\%$) makes up nearly $90\%$ of the gap between the non-personalized baseline ($23.82\%$) and full personalization $(25.21\%)$.
On EMNIST, adapter personalization exactly matches full personalization. 
Most surprisingly, on GLDv2, adapter personalization outperforms full personalization by $3.5$pp (percentage points).

This success of adapter personalization can be explained partly by the nature of GLDv2. On average, the training data on each device contains $25$ classes out of a possible $2028$ while the testing data contains $10$ classes not seen in its own training data. These unseen classes account for nearly $23\%$ of all testing data.
Personalizing the full model is susceptible to ``forgetting'' the original task~\citep{kirkpatrick2017overcoming}, making it harder to get these unseen classes right. Such \emph{catastrophic forgetting} is worse when finetuning on a very small local dataset, as we often have in FL.
On the other hand, personalizing the adapters does not suffer as much from this issue~\citep{rebuffi2017learning}.  

\myparagraph{Partial personalization only requires a fraction of the parameters to be personalized}
Figure~\ref{fig:expt:main:partial-v-full} shows that the number of personal parameters required to compete with full personalization is rather small. 
On StackOverflow, personalizing $1.2\%$ of the parameters with adapters captures $72\%$ of the accuracy boost from personalizing all $5.7M$ parameters; this can be improved to nearly $90\%$ by personalizing $14\%$ of the parameters (output layer). 
Likewise, we match full personalization on EMNIST and exceed it on GLDv2 with adapters, personalizing $11.5$-$12.5\%$ of parameters.

\begin{table}[b]
    \caption{\small{
    Comparison of finetuning and partial personalization for ASR on Librispeech. We report the word error rate (WER, \%) on the test data, averaged across devices. Smaller values are better.
    }}
\label{table:expt:speech:main}
\begin{center}
\begin{adjustbox}{max width=\linewidth}
\small
\begin{tabular}{rrrr}
\toprule
Finetune & Input Layer &  Output Layer &  Adapter \\
\midrule
$15.55$ &   \tabemph{} $\mathbf{15.13}$ &   $15.53$ &    $15.50$ \\
\bottomrule
\end{tabular}

 \end{adjustbox}
\end{center}
\end{table}

\begin{table*}[t]
    \caption{\small{
    \pflam vs. \pflsu for partial personalization.
    ``FT (part.)'' means finetuning the personal parameters $v_i$ while fixing the shared parameters $u$ from FedAvg. The numbers are averaged over 5 random runs and the subscript denotes the standard deviation.
    }}
\label{table:expt:opt-algo:stateful}
\begin{center}
\begin{adjustbox}{max width=\linewidth}
\small
\renewcommand{\arraystretch}{1.2}
\begin{tabular}{llllllllll}
\toprule
{} & \multicolumn{3}{c}{StackOverflow} & \multicolumn{3}{c}{GLDv2} & \multicolumn{3}{c}{EMNIST} \\
\cmidrule(lr){2-4}
\cmidrule(lr){5-7}
\cmidrule(lr){8-10}
{} &                 FT (part.) &                   FedAlt &          FedSim &        FT (part.) &                   FedAlt &          FedSim &        FT (part.) &                   FedAlt &          FedSim \\
\midrule
Input Layer  & \tabemph{} $\mathbf{24.96}_{0.01}$ &           $24.44_{0.01}$ &  $24.81_{0.01}$ &  $51.97_{0.02}$ &  \tabemph{}$\mathbf{53.94}_{0.06}$ &  $53.64_{0.08}$ &  $93.29_{0.00}$ &  \tabemph{}$\mathbf{93.62}_{0.03}$ &  $93.55_{0.05}$ \\
Output Layer &           $24.93_{0.01}$ &  \tabemph{}$\mathbf{25.05}_{0.01}$ &  $25.02_{0.01}$ &  $53.21_{0.01}$ &  \tabemph{}$\mathbf{56.64}_{0.05}$ &  $56.24_{0.04}$ &  $93.37_{0.01}$ &  \tabemph{}$\mathbf{93.57}_{0.04}$ &  \tabemph{}$\mathbf{93.55}_{0.05}$ \\
Adapter      &           $24.71_{0.00}$ &  \tabemph{}$\mathbf{24.82}_{0.01}$ &  $24.74_{0.01}$ &  $63.86_{0.06}$ &  \tabemph{}$\mathbf{66.41}_{0.05}$ &  $66.35_{0.03}$ &  $93.66_{0.00}$ &  \tabemph{}$\mathbf{94.13}_{0.03}$ &  $94.07_{0.03}$ \\
\bottomrule
\end{tabular}
 \end{adjustbox}
\end{center}
\end{table*}

\myparagraph{The best personalized architecture is model and task dependent}
Table~\ref{table:expt:main} shows that personalizing the final transformer layer (denoted as ``Output Layer'') achieves the best performance for StackOverflow, while the residual adapter achieves the best performance for GLDv2 and EMNIST. 
In contrast, input layer personalization achieves the best performance for speech recognition, cf. Table~\ref{table:expt:speech:main}.

This variation is explained via the primary source of data heterogeneity across devices for each task.
The choice of the next word after a context can vary between users, so the output layer is the right component to personalize for this task. 
Likewise, there is greater heterogeneity in the audio of LibriSpeech (accent, tone, and voice of the speaker) than the text (standard literary English), so input layer personalization works best in this case.
This shows that the approach of personalizing a fixed model part, as in past works, is suboptimal. Our framework allows for the use of domain knowledge to determine customized personalization.

\myparagraph{Finetuning is competitive with other full personalization methods}
Full finetuning matches the performance of pFedMe and Ditto on StackOverflow and EMNIST. On GLDv2, however, pFedMe outperforms finetuning by $0.07$pp, but is still $3.5$pp worse than adapter personalization.

\myparagraph{\pflam outperforms \pflsu{} \rev{by a small but consistent margin}}
Table~\ref{table:expt:opt-algo:stateful} shows that \pflam almost always outperforms \pflsu by a small margin, e.g., $0.08$pp for StackOverflow/Adapter and $0.3$pp for GLDv2/Input Layer. \pflsu in turn yields a higher accuracy than simply finetuning the personal part of the model by a margin of $0.12$pp for StackOverflow/Output Layer and $2.55$pp for GLDv2/Adapter. 
\rev{
Furthermore, we observe that the difference between FedAlt and FedSim is much larger than the standard
deviation across runs. For instance, under output layer personalization for GLDv2, this difference is $0.4$pp ($= 8\times$ std).
}

\rev{
As a practical recommendation, 
\textit{we recommend using \pflam as a default, but it does not hurt much to use \pflsu}.
}

\begin{figure*}[t]
\includegraphics[width=0.98\textwidth]{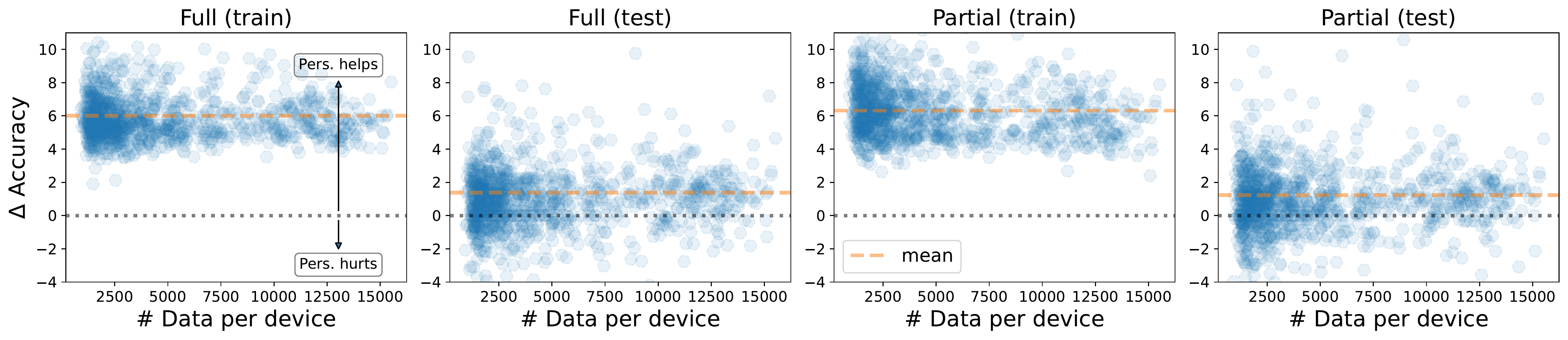}
\caption{\small{StackOverflow task:
    Scatter plot of change in training and test accuracy (pp) per-device versus the number of training samples on the device for (a) \textbf{Left}: full personalization with finetuning, and, (b) \textbf{Right}: partial personalization with the output layer.
}}
\label{fig:expt:scatter:main-so}
\end{figure*}

\subsection{Effects of Personalization on Generalization}

\myparagraph{Personalization hurts the test accuracy on some devices}
Figure~\ref{fig:expt:scatter:main-so} shows the change in training and test accuracy of each device, over a non-personalized model baseline.
We see that personalization leads to an improvement in training accuracy across all devices, but \emph{a reduction in test accuracy} on some of the devices. 
Devices whose testing performance is hurt by personalization are mostly on the left side of the plot, meaning that they have relatively small number of training samples. 
On the other hand, many devices with the most improved test accuracy also appear on the left side, signaling the benefit of personalization.
Therefore, there is a large variation of results for devices with few samples.

Additional results in Appendix~\ref{sec:a:expt-results} show that
using $\ell_2$ regularization as in~\eqref{eqn:pfl-full-model}, or weight decay does not mitigate this issue.
Increasing regularization strength (less personalization) can reduce the spread of per-device accuracy, but degrades the average accuracy.
Dropout does not fix this issue either. 

An ideal personalized method would boost performance on most of the devices without causing a reduction in (test) accuracy on any device. 
Realizing this goal calls for a sound statistical analysis for personalized FL and may require sophisticated methods for local performance diagnosis and structured regularization.

\section{Discussion} \label{sec:discussions}

In addition to a much smaller  memory footprint than full model personalization
and being less susceptible to catastrophic forgetting, partial model personalization has other advantages.
For example, it reduces the amount of communication between the server and the devices because only the shared parameters are transmitted.
While the communication savings may not be significant (especially when the personal parameters are only a small fraction of the full model), communicating only the shared parameters may have significant implications for privacy. 
Intuitively, it can be harder to infer private information from partial model information. This is especially the case if the more sensitive features of the data are processed through personal components of the model that are kept local at the devices. For example, we speculate that less noise needs to be added to the communicated parameters in order to satisfy differential privacy requirements~\citep{abadi2016deep}. 

\bibliography{bib/fl}
\bibliographystyle{abbrvnat}

\clearpage

\begingroup
\let\clearpage\relax 
\onecolumn 
\endgroup

\appendix
\addcontentsline{toc}{section}{Appendix} %
\part{Appendix} %
\parttoc %
\clearpage

\section{Convergence Analysis: Full Proofs} \label{sec:a:convergence}
We give the full convergence proofs here. 
The outline of this section is:
\begin{itemize}[itemsep=0cm,leftmargin=0.5cm,topsep=0cm]
    \item \S\ref{sec:a:proof:setup}: Review of setup and assumptions;
    \item \S\ref{sec:a:convrg:vfp}: Virtual Full Participation: Background and Details
    \item \S\ref{sec:a:proof:am}: Convergence analysis of \pflam and the full proof of Theorem~\ref{thm:pfl-am:main} (see Theorem~\ref{thm:pfl-am} and Corollary~\ref{cor:a:pfl-am});
    \item \S\ref{sec:a:proof:su}: Convergence analysis of \pflsu and the full proof of Theorem~\ref{thm:pfl-su:main} (see Theorem~\ref{thm:pfl-su} and Corollary~\ref{cor:a:pfl-su});
    \item \S\ref{sec:a:techn}: Technical lemmas used in the analysis.
\end{itemize} 
\subsection{Review of Setup and Assumptions} \label{sec:a:proof:setup}

We consider a federated learning system with $n$ devices. 
Let the loss function on device $i$ be $F_i(u,v_i)$, where 
$u \in \reals^{d_0}$ denotes the shared parameters across all devices and $v_i \in \reals^{d_i}$ denotes the personal parameters at device $i$. 
We aim to minimize the function
\begin{align} \label{eq:pfl:obj}
    F(u, V) := \frac{1}{n} \sum_{i=1}^n F_i(u, v_i) \,,
\end{align}
where $V = (v_1, \cdots, v_n)$ is a concatenation of all the personalized parameters. This is a special case of \eqref{eqn:pfl-partial} with the equal per-device weights, i.e., $\alpha_i = 1/n$.
Recall that we assume that $F$ is bounded from below by $F^\star$.

For convenience, we reiterate Assumptions~\ref{assmp:smoothness}, \ref{assmp:stoc-grad-var} and~\ref{assmp:grad-diversity} from the main paper as Assumptions~\ref{asmp:smoothness}, \ref{asmp:stoc-grad-var} and~\ref{asmp:grad-diversity} below respectively, 
with some additional comments and discussion.

\begin{assumptionP}{assmp:smoothness}[Smoothness]
\label{asmp:smoothness}
For each device $i=1,\ldots,n$, the objective $F_i$ is smooth, i.e., it is continuously differentiable and,
\begin{enumerate}[label=(\alph*),nolistsep,leftmargin=\widthof{(a) }]
    \item $u \mapsto \grad_u F_i(u, v_i)$ is $L_u$-Lipschitz for all $v_i$, 
    \item $v_i \mapsto \grad_v F_i(u, v_i)$ is $L_v$-Lipschitz for all $u$, 
    \item $v_i \mapsto \grad_u F_i(u, v_i)$ is $L_{uv}$-Lipschitz for all $u$, and, 
    \item $u \mapsto \grad_v F_i(u, v_i)$ is $L_{vu}$-Lipschitz for all $v_i$.
\end{enumerate}
Further, we assume for some $\chi > 0$ that 
\[
    \max\{L_{uv}, L_{vu}\} \le \chi\sqrt{L_u L_v} \,.
\]
\end{assumptionP}
The smoothness assumption is a standard one. 
We can assume without loss of generality that the cross-Lipschitz coefficients $L_{uv}, L_{vu}$ are equal. 
Indeed, if $F_i$ is twice continuously differentiable, 
we can show that $L_{uv}, L_{vu}$ are both 
equal to the operator norm $\norm{\grad^2_{uv} F_i(u, v_i)}_{\op}$ of the mixed second derivative matrix. 
Further, $\chi$ denotes the extent to which $u$ impacts the gradient of $v_i$ and vice-versa.

For concreteness, consider the full personalization setting of Eq.~\eqref{eqn:pfl-full-model}, where each $F_i$ is $L$-smooth; this is a special case of the formulation \eqref{eq:pfl:obj}, as we argue in \S\ref{sec:models}. In this case, a simple calculation shows that 
\[
    \chi^2 = \frac{\lambda}{\lambda+L} \le 1 \,.
\]

Our next assumption is about the variance of the stochastic gradients, and is standard in literature. 
Compared to the main paper, we adopt a more precise notation about stochastic gradients. 
\begin{assumptionP}{assmp:stoc-grad-var}[Bounded Variance]
\label{asmp:stoc-grad-var}
Let $\Dcal_i$ denote a probability distribution over the data space $\Zcal$ on device $i$. There exist functions $G_{i,u}$ and $G_{i,v}$ which are unbiased estimates of $\grad_u F_i$ and $\grad_v F_i$ respectively. That is, for all $u, v_i$:
\begin{align*}
    \expect_{z \sim \Dcal_i}\left[ G_{i,u}(u, v, z)\right]
    = \grad_u F_i(u, v_i),
    \quad\text{and}\quad
    \expect_{z \sim \Dcal_i}\left[ G_{i,v}(u, v, z)\right]
    = \grad_v F_i(u, v_i)\,.
\end{align*}
Furthermore, the variance of these estimators is at most $\sigma_u^2$ and $\sigma_v^2$ respectively. That is, 
\begin{align*}
    \expect_{z \sim \Dcal_i}\normsq*{G_{i,u}(u, v, z) - \grad_u F_i(u, v_i)} 
    &\le \sigma_u^2\,, %
    \\
    \expect_{z \sim \Dcal_i}\normsq*{G_{i,v}(u, v, z) - \grad_v F_i(u, v_i)} 
    &\le \sigma_v^2 \,.
\end{align*}
\end{assumptionP}
In practice, one usually has $G_{i, u}(u, v_i, z) = \grad_u f_i((u, v_i), z)$, which is the gradient of the loss on datapoint $z\sim\Dcal_i$ under the model $(u, v_i)$, and similarly for $G_{i, v}$.

Finally, we make a gradient diversity assumption.
\begin{assumptionP}{assmp:grad-diversity}[Partial Gradient Diversity]
\label{asmp:grad-diversity}
There exist $\delta \ge 0$ and $\rho \ge 0$ such that  for all $u$ and $V$, 
\begin{align} \label{eq:pfl:gradient-diversity}
    \frac{1}{n}\sum_{i=1}^n \normsq{\grad_u F_i(u, v_i) - \grad_u F(u, V)}
    \le \delta^2 + \rho^2 \normsq{\grad_u F(u, V)} \,.
\end{align}
\end{assumptionP}

This is a generalization of 
Assumption~\ref{asmp:grad-diversity} used in the main paper, which is a special case of Assumption~\ref{assmp:grad-diversity} with $\rho = 0$. We allow the partial gradient diversity to grow with the squared norm of the gradient with a factor of $\rho^2$.
This assumption is analogous to the bounded variance assumption (Assumption~\ref{asmp:stoc-grad-var}), but with the stochasticity coming from the sampling of devices. 
It characterizes how much local steps on one device help or hurt convergence globally. 

Similar gradient diversity assumptions are often used for analyzing non-personalized federated learning~\citep{koloskova2020unified,karimireddy2019scaffold}.
Finally, it suffices for the partial gradient diversity assumption to only hold at the iterates $(u\pow{t}, V\pow{t})$ generated by either FedSim or FedAlt.

\subsection{Virtual Full Participation: Background and Details}
\label{sec:a:convrg:vfp}

We recap the challenge of dependent random variables with FedAlt, and explain the technique of virtual full participation in some more detail. 
For this section, we assume full gradients on each device ($\sigma_u^2 = 0 = \sigma_v^2$) and a single local update per device ($\tau_u = 1 = \tau_v$). The only stochasticity in the algorithm comes from partial device participation, i.e., sampling $m$ devices in each round.

\myparagraph{Background: Stochastic Gradient Convergence Analysis}
Consider the minimization problem
\[
    \min_{w \in \reals^d} f(w)\,,
\]  
where the function $f: \reals^d \to \reals$ is $L$-smooth. 
Starting from some fixed $w\pow{0} \in \reals^d$, consider the stochastic gradient iterations $w\pow{t+1} = w\pow{t} - \gamma g\pow{t}$,
where 
$\gamma$ is a fixed learning rate, and
$g\pow{t}$ is an unbiased estimate of $\grad f(w\pow{t})$, i.e., 
$\expect[g\pow{t} | w\pow{t}] = \grad f(w\pow{t})$. 

Typical proofs of convergence proceed in the general nonconvex case with the smoothness bound 
\begin{align} \label{eq:convrg:smoothness}
    f(w\pow{t+1}) - f(w\pow{t})
    &\le \inp{\grad f(w\pow{t})}{w\pow{t+1} - w\pow{t}} 
    + \frac{L}{2} \normsq{w\pow{t+1} - w\pow{t}}
    \\ \nonumber
    &= -\gamma \inp{\grad f(w\pow{t})}{g\pow{t}} 
    + \frac{\gamma^2 L}{2} \normsq{g\pow{t}} \,.
\end{align}
Since the stochastic gradient $g\pow{t}$ is \emph{unbiased}, we get (under typical assumptions) an inequality
\begin{align} \label{eq:convrg:descent}
    \expect_t\left[ f(w\pow{t+1}) \right] - f(w\pow{t}) 
    &\le -c\gamma \, \normsq{\grad f(w\pow{t})} + O(\gamma^2) \,,
\end{align}
where $c > 0$ is some absolute constant and $\expect_t[\cdot] = \expect[ \, \cdot \, | w\pow{t}]$ takes an expectation only over the randomness in step $t$. 
The second term is a noise term that can be made small by choosing an appropriately small learning rate $\gamma$. Telescoping the inequality over $t$ and rearranging gives a convergence bound. 

The \textbf{key intuition} behind this proof   
is that the update is unbiased in linear term of the smoothness
upper bound~\eqref{eq:convrg:smoothness}.
The same intuition holds for most smooth nonconvex stochastic gradient convergence analyses~\citep{bottou2018optimization}.
In particular, this takes the following form in this case
\begin{align} \label{eq:convrg:unbiased}
    \expect_t \left[ \inp{\grad f(w\pow{t})}{w\pow{t+1} - w\pow{t}} \right]
    = \inp*{\grad f(w\pow{t})}{\expect_t[w\pow{t+1} - w\pow{t}]} \,.
\end{align}
This ensures that the contribution of the stochasticity occurs in a lower order $O(\gamma^2)$ term. 
As we shall see next, such an equality does not hold for FedAlt in the partial participation case due to dependent random variables.

\myparagraph{The Challenge in FedAlt with Partial Participation}
Consider the iterates $(u\pow{t}, V\pow{t})$ generated by FedAlt. 
The progress in one round is the combined progress of the $v$-step (call it $\Tcal_v$) and the $u$-step (call it $\Tcal_u$) so that
\begin{align*}
    F\left(u\pow{t+1}, V\pow{t+1}\right)
    - F\left(u\pow{t}, V\pow{t}\right)
    = 
    \underbrace{F\left(u\pow{t}, V\pow{t+1}\right)
    - F\left(u\pow{t}, V\pow{t}\right)}_{=:\Tcal_v}
    + \underbrace{F\left(u\pow{t+1}, V\pow{t+1}\right)
    - F\left(u\pow{t}, V\pow{t+1}\right)}_{=:\Tcal_u} \,.
\end{align*}
The analysis of the $v$-step is easy because the unbiasedness condition similar to \eqref{eq:convrg:unbiased} holds:
\[
    \expect_t 
        \inp*{\grad_V F\left(u\pow{t}, V\pow{t}\right)}{V\pow{t+1} - V\pow{t}}
    = \inp*{\grad_V F\left(u\pow{t}, V\pow{t}\right)}{\expect_t\left[
    V\pow{t+1} - V\pow{t} \right]} \,,
\] 
since $\expect_t[\cdot]$ takes an expectation w.r.t. the client sampling $S\pow{t}$. The recipe laid out earlier gives a descent condition similar to \eqref{eq:convrg:descent}.

For the $u$-step, an unbiasedness condition similar to \eqref{eq:convrg:unbiased} does not hold:
\[
    \expect_t 
        \inp*{\grad_u F\left(u\pow{t}, V\pow{t+1}\right)}{u\pow{t+1} - u\pow{t}}
    \neq \inp*{\expect_t\left[\grad_u F\left(u\pow{t}, V\pow{t+1}\right)\right]}{\expect_t\left[
    u\pow{t+1} - u\pow{t} \right]} \,.
\]
The expectation cannot pass into the inner product because $V\pow{t+1}$ and $u\pow{t+1}$ are dependent random variables.
Both are dependent on the device sampling $S\pow{t}$, as shown Figure~\ref{fig:a:fedalt:dependent_rv} (left).

\myparagraph{Virtual Full Participation}
We decouple these random variables by using virtual full participation.
Define a virtual iterate $\tV\pow{t+1}$ as the result of local $v$-updates 
as if \emph{every} device had participated.
Specifically, 
we introduce $\tV\pow{t+1}$ on the right hand side of the smoothness bound applied on $\Tcal_u$
to get 
\begin{align*}
    &F\bigl(u\pow{t+1}, V\pow{t+1}\bigr)
    - F\bigl(u\pow{t}, V\pow{t+1}\bigr)
    \le  E\pow{t} + 
    \bigl\langle\grad_u F(u\pow{t}, \tV\pow{t+1}),\, u\pow{t+1}\!\! -\! u\pow{t}\bigr\rangle
    + \frac{L_u}{2}\bigl\|u\pow{t+1} \!\! - \! u\pow{t}\bigr\|^2 \,,
\end{align*}
where $E\pow{t}$ is the error term from replacing $V\pow{t+1}$ with $\tV\pow{t+1}$
Since $\tV\pow{t+1}$ is independent of the client sampling $S\pow{t}$, we can now take an expectation $\expect_t[\cdot]$ over $u\pow{t+1}$ only, leading us to a situation similar to \eqref{eq:convrg:unbiased}; cf. Figure~\ref{fig:a:fedalt:dependent_rv} (right).

We bound the error term $E\pow{t}$ using Young's inequality and smoothness (Assumption~\ref{asmp:smoothness}) respectively as 
\begin{align*}
    E\pow{t}
    &=\inp{\grad_u F(u\pow{t}, V\pow{t+1}) - \grad_u F(u\pow{t}, \tV\pow{t+1})}{u\pow{t+1} - u\pow{t}} \\
    &\le    \frac{L_u}{2}\normsq{u\pow{t+1} - u\pow{t}} 
        + \frac{1}{2 L_u}\normsq{\grad_u F(u\pow{t}, V\pow{t+1}) - \grad_u F(u\pow{t}, \tV\pow{t+1})}
    \\
     &\le 
      \frac{L_u}{2}\normsq{u\pow{t+1} - u\pow{t}} + 
      \frac{\chi^2 L_v}{2n} \sum_{i=1}^n \normsq{\tv_i\pow{t+1}-v_i\pow{t+1}} \,.
\end{align*}
These two terms are similar to the quadratic terms we get from the smoothness upper bound. We can similarly show $\expect_t[E\pow{t}] = O(L_u\gamma_u^2 + \chi^2 L_v\gamma_v^2)$, so the error term from virtual full participation is also a lower order $O(\gamma^2)$ term.

\myparagraph{Virutal Iterates in Related Work}
Virtual or shadow iterates have long been used
in decentralized optimization~\citep{yuan2016convergence}, and 
have since been adopted in the analysis of federated
optimization algorithms in the non-personalized setting~\citep{li2020convergence_fedavg,koloskova2020unified,wang2021field}. 

In our notation, the shadow iterates used in \citep{koloskova2020unified,wang2021field} take the form 
\[
    \bar u_k\pow{t} = \frac{1}{n} \sum_{i=1}^n u_{i, k}\pow{t} \,,
\]
which is an average of the local versions of the shared parameters. This only makes sense for the case of full participation since $u_{i, k}\pow{t}$ is only defined for selected devices $i \in S\pow{t}$.
In partial participation case, \citet{li2020convergence_fedavg} define the virtual sequence $(\tu_{i,k}\pow{t})_{k=0}^{\tau_u}$
as the local SGD updates on all devices $i$ irrespective of whether they were selected. Then, they define the average 
\[
    \bar u_k\pow{t} = \frac{1}{n} \sum_{i=1}^n \tu_{i, k}\pow{t} \,.
\]
Their proof relies on the fact that $\expect_{S\pow{t}}[u\pow{t+1}] = \bar u_{\tau_u}\pow{t}$ due to the properties of the sampling. 

In contrast, we consider personalized federated learning --- the problem of dependent random variables only shows up in the analysis of FedAlt with partial participation, a setting not considered in prior works. 
We employ virtual \emph{personal} parameters $\tv_{i,k}\pow{t}$ to overcome this problem. We believe that this technique of decoupling dependent random variables can be of independent interest for (distributed) stochastic optimization, 
including personalized extensions 
of nonsmooth federated learning objectives~\cite{deng2020distributionally,pillutla2021federated} or more general multi-task learning formulations~\cite{misra2016cross}.

\subsection{Convergence Analysis of \pflam} \label{sec:a:proof:am}

We give the full form of \pflam in Algorithms~\ref{algo:pfl-am} for the general case of unequal $\alpha_i$'s but focus on $\alpha_i=1/n$ for the analysis.
Theorem~\ref{thm:pfl-am:main} of the main paper is a simplification of Corollary~\ref{cor:a:pfl-am} below, which in turn is proved based on Theorem~\ref{thm:pfl-am}.

Throughout this section, we use the constants 
\begin{align*}
    \sigma_{\mathrm{alt}, 1}^2 = \frac{\delta^2}{L_u}\left(1-\frac{m}{n}\right) + \frac{\sigma_u^2}{L_u} + \frac{\sigma_v^2 (m + \chi^2(n-m))}{L_v n}\,,
    \qquad
    \sigma_{\mathrm{alt}, 2}^2 = \frac{\sigma_u^2 + \delta^2}{L_u}(1-\tau_u^{-1}) + \frac{\sigma_v^2 m}{L_v n}(1-\tau_v^{-1}) + \frac{\chi^2 \sigma_v^2}{L_v} \,.
\end{align*}
We also recall the definitions
    \[
        \Delta_u\pow{t} = \normsq*{\grad_u F\left(u\pow{t}, V\pow{t+1} \right)}\,,
        \quad\text{and},\quad 
        \Delta_v\pow{t} = \frac{1}{n}\sum_{i=1}^n \normsq*{\grad_v F_i\left(u\pow{t}, v_i\pow{t}\right)} \,.
    \]

\begin{theorem}[\textbf{Convergence of \hlalt{FedAlt}}] 
\label{thm:pfl-am}
Suppose Assumptions~\ref{asmp:smoothness}, \ref{asmp:stoc-grad-var} and~\ref{asmp:grad-diversity} hold and the learning rates in FedAlt are chosen as     $\gamma_u = \lr / (L_u \tau_u)$ and 
    $\gamma_v = \lr / (L_v \tau_v)$, with
\[
        \lr \le \min\left\{\frac{1}{24(1+\rho^2)}, \frac{m}{128\chi^2(n-m)}, \sqrt{\frac{m}{\chi^2 n}} \right\}\,.
\]
    Then, ignoring absolute constants, we have
        \begin{align*}
        &\frac{1}{T}\sum_{t=0}^{T-1} \left(\frac{1}{ L_u} \expect\bigl[\Delta_u\pow{t}\bigr]
        + \frac{m}{nL_v} \expect\bigl[\Delta_v\pow{t}\bigr] \right)
        \le
        \frac{\Delta F_0}{\lr T} 
        + 
        \lr \, \sigma_{\mathrm{alt}, 1}^2
        + 
        \lr^2 \, \sigma_{\mathrm{alt}, 2}^2 \,.
    \end{align*}
\end{theorem}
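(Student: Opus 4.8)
\textbf{Proof proposal for Theorem~\ref{thm:pfl-am}.}

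The plan is to analyze one communication round by splitting the progress $F(u\pow{t+1},V\pow{t+1}) - F(u\pow{t},V\pow{t})$ into the $v$-step $\Tcal_v := F(u\pow{t},V\pow{t+1}) - F(u\pow{t},V\pow{t})$ and the $u$-step $\Tcal_u := F(u\pow{t+1},V\pow{t+1}) - F(u\pow{t},V\pow{t+1})$, bound each in conditional expectation $\expect_t[\cdot]$, telescope over $t=0,\dots,T-1$, and finally optimize over the stepsize parameter $\lr$. First I would handle the $v$-step: for each selected device $i\in S\pow{t}$, I would view the $\tau_v$ inner SGD steps on $v$ with $u$ fixed as a standard local-SGD descent on $F_i(u\pow{t},\cdot)$. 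Using $L_v$-smoothness in $v$, unbiasedness and bounded variance $\sigma_v^2$ (Assumption~\ref{asmp:stoc-grad-var}), a bound on client drift of the inner iterates $v_{i,k}\pow{t}$ from $v_i\pow{t}$ (a standard lemma controlling $\sum_k \expect\|v_{i,k}-v_i\pow{t}\|^2$ by $\gamma_v^2\tau_v^2(\Delta_v\pow{t}$-like terms $+\sigma_v^2)$), and averaging over the uniform sampling of $m$ devices out of $n$, I would obtain something like $\expect_t[\Tcal_v] \le -c\,\frac{m}{n}\gamma_v\tau_v\,\Delta_v\pow{t} + O\!\big(\frac{m}{n}\gamma_v^2\tau_v^2\,(\text{smooth terms}) + \frac{m}{n}\gamma_v^2\tau_v\sigma_v^2\big)$, where the $m/n$ factor comes from only $m$ devices updating their $v_i$. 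With $\gamma_v=\lr/(L_v\tau_v)$ this reproduces the $\Delta_v$ contribution on the LHS and the $\sigma_v^2 m/(L_v n)$ contribution to the variance terms.

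Next I would handle the $u$-step, which is where the virtual-full-participation machinery from \S\ref{sec:a:convrg:vfp} is essential. Invoke $L_u$-smoothness of $F(\cdot,V\pow{t+1})$, then insert $\tV\pow{t+1}$ into the first-order term, picking up the error term $E\pow{t}$ which \S\ref{sec:a:convrg:vfp} already bounds by $\tfrac{L_u}{2}\|u\pow{t+1}-u\pow{t}\|^2 + \tfrac{\chi^2 L_v}{2n}\sum_i\|\tv_i\pow{t+1}-v_i\pow{t+1}\|^2$. Because $\tV\pow{t+1}$ is $\sigma(u\pow{t},V\pow{t})$-measurable, the inner product $\langle\grad_u F(u\pow{t},\tV\pow{t+1}),\,\expect_t[u\pow{t+1}-u\pow{t}]\rangle$ simplifies, and using the partial gradient diversity (Assumption~\ref{asmp:grad-diversity}, with the $\rho^2$ term) plus the $u$-side client drift lemma and variance $\sigma_u^2$, I would get $\expect_t[\Tcal_u] \le -c\,\gamma_u\tau_u\,\Delta_u\pow{t} + O\big(\gamma_u^2\tau_u^2(\cdots) + \gamma_u^2\tau_u(\sigma_u^2+\delta^2)\big) + \expect_t[E\pow{t}]$, where the $(1-m/n)\delta^2$ term appears because the sampled-device gradient average differs from $\grad_u F$ by the diversity $\delta^2$ scaled by the sampling variance factor $1-m/n$. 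The cross term $\chi^2$ shows up through $E\pow{t}$ (the $\tv$-vs-$v$ discrepancy, nonzero only on the $n-m$ non-participating devices, giving $\chi^2(n-m)/n$) and through controlling how the $v$-update perturbs the $u$-gradient, feeding into both $\sigma_{\mathrm{alt},1}^2$ and $\sigma_{\mathrm{alt},2}^2$.

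The main obstacle, and the part demanding the most care, is the bookkeeping of how the virtual iterates $\tv_i\pow{t+1}$ relate to both the true $v_i\pow{t+1}$ (for participating devices, equal; for non-participating, $\tv$ moves while $v$ stays) and to the gradients entering the $u$-step: I must control $\expect_t\|\tv_i\pow{t+1}-v_i\pow{t+1}\|^2$ and verify that the coupling never destabilizes the recursion, which is exactly why the stepsize constraint $\lr\le \min\{\tfrac{1}{24(1+\rho^2)},\ \tfrac{m}{128\chi^2(n-m)},\ \sqrt{m/(\chi^2 n)}\}$ is needed — the second and third bounds tame the $\chi^2$ cross-coupling so the negative $-c\,\Delta_u,-c\,\Delta_v$ terms dominate the induced positive quadratic terms. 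After establishing the per-round bound $\expect_t[F(u\pow{t+1},V\pow{t+1})] - F(u\pow{t},V\pow{t}) \le -c\lr\big(\tfrac{1}{L_u}\Delta_u\pow{t} + \tfrac{m}{nL_v}\Delta_v\pow{t}\big) + \lr^2\sigma_{\mathrm{alt},1}^2 + \lr^3\sigma_{\mathrm{alt},2}^2$ (the powers of $\lr$ tracking which terms are $O(\gamma^2)$ vs.\ $O(\gamma^3)$ after substituting $\gamma_u,\gamma_v$), I would telescope, use $F\ge F^\star$ so the sum telescopes to $\Delta F_0$, divide by $c\lr T$, and absorb constants to reach the stated bound; Corollary~\ref{cor:a:pfl-am} and then Theorem~\ref{thm:pfl-am:main} follow by plugging in the standard stepsize choice $\lr \asymp \min\{(\Delta F_0/(\sigma_{\mathrm{alt},1}^2 T))^{1/2},\ (\Delta F_0/(\sigma_{\mathrm{alt},2}^2 T))^{1/3},\ \text{(the constraint cap)}\}$ and balancing the three terms.
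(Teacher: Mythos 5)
Your proposal follows essentially the same route as the paper's proof: the same $v$-step/$u$-step decomposition, the same virtual-full-participation insertion of $\tV\pow{t+1}$ with the error term $E\pow{t}$ bounded via Young's inequality and the $\chi^2$ cross-smoothness, the same use of sampling-without-replacement variance for the $(1-m/n)\delta^2$ term, the same client-drift lemmas, and the same telescoping and stepsize tuning. The only detail you leave implicit is that the $u$-step descent comes out in terms of $\normsq{\grad_u F(u\pow{t},\tV\pow{t+1})}$ rather than $\Delta_u\pow{t}$, so a final conversion step (costing an extra lower-order $O(\gamma_u\gamma_v^2)$ term and using the constraint $\lr\le\sqrt{m/(\chi^2 n)}$) is needed, exactly as the paper does in its ``Obtaining the Final Bound'' paragraph.
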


Before proving the theorem, we have the corollary with  optimized learning rates. 
\begin{corollary}[\textbf{Final Rate of FedAlt}] \label{cor:a:pfl-am}
    Consider the setting of Theorem~\ref{thm:pfl-am} and let the number of rounds $T$ be known in advance.
    Suppose we set the learning rates $\gamma_u = \lr/(\tau L_u)$ and $\gamma_v = \lr/(\tau L_v)$, where (ignoring absolute constants), 
    \begin{align*}
        \lr = 
        \left(\frac{\Delta F_0}{T \sigma_{\mathrm{alt}, 1}^2 }\right)^{1/2}
        \bigwedge 
        \left(
            \frac{\Delta F_0^2}{T^2 \, \sigma_{\mathrm{alt}, 2}^2}
        \right)^{1/3}
        \bigwedge \frac{1}{1+\rho^2}
        \bigwedge \frac{m}{\chi^2(n-m)}
        \bigwedge \sqrt{\frac{m}{\chi^2 n}}
        \,.
    \end{align*}
    We have, ignoring absolute constants, 
    \begin{align*}
        &\frac{1}{T}\sum_{t=0}^{T-1}
        \left(
        \frac{1}{L_u} \expect\normsq*{\grad_u F\left(u\pow{t}, V\pow{t}\right)}
        + 
        \frac{m}{L_v n^2} \sum_{i=1}^n \expect\normsq*{\grad_v F_i\left(u\pow{t}, v_i\pow{t}\right)}
        \right)
        \le  \\
        &\frac{\left(\Delta F_0 \, \sigma_{\mathrm{alt}, 1}^2 \right)^{1/2}}{\sqrt{T}} 
        +  \frac{\left(\Delta F_0^2 \, \sigma_{\mathrm{alt}, 2}^2     \right)^{1/3}}{T^{2/3}}  
        + 
        \frac{\Delta F_0}{T}\left(
            1+\rho^2 + \chi^2\left(\frac{n}{m} - 1\right) + \sqrt{\chi^2 \frac{n}{m}}
        \right)
        \,.
    \end{align*}
\end{corollary}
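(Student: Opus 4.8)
The plan is to obtain the Corollary from Theorem~\ref{thm:pfl-am} by optimizing the free learning rate $\lr$, so that essentially all of the substance sits in Theorem~\ref{thm:pfl-am} itself. Granting that theorem, its bound has the shape $\Delta F_0/(\lr T) + \lr\,\sigma_{\mathrm{alt},1}^2 + \lr^2\,\sigma_{\mathrm{alt},2}^2$ over the admissible range $\lr\le\lr_{\max}$, with $\lr_{\max}$ the minimum of $(24(1+\rho^2))^{-1}$, $m/(128\chi^2(n-m))$, and $\sqrt{m/(\chi^2 n)}$. I would then apply the standard step-size lemma: take $\lr$ equal to the minimum of $\lr_{\max}$, the value that balances $\Delta F_0/(\lr T)$ against $\lr\,\sigma_{\mathrm{alt},1}^2$, and the value that balances $\Delta F_0/(\lr T)$ against $\lr^2\,\sigma_{\mathrm{alt},2}^2$; a three-way case analysis on which of the three attains the minimum produces the three stated terms — the $\sqrt{\Delta F_0\sigma_{\mathrm{alt},1}^2/T}$ term, the $(\Delta F_0^2\sigma_{\mathrm{alt},2}^2)^{1/3}/T^{2/3}$ term, and the residual $\Delta F_0/(\lr_{\max}T)$, where $1/\lr_{\max}$ is, up to absolute constants, exactly the bracket $1+\rho^2+\chi^2(n/m-1)+\sqrt{\chi^2 n/m}$ in the statement. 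One extra step is needed because Theorem~\ref{thm:pfl-am} controls $\Delta_u\pow t=\normsq*{\grad_u F(u\pow t,V\pow{t+1})}$ whereas the Corollary's left-hand side uses $\grad_u F(u\pow t,V\pow t)$: a single application of Assumption~\ref{asmp:smoothness}(c), together with the bound on $\frac1n\sum_i\normsq*{v_i\pow{t+1}-v_i\pow t}$ already produced along the way, converts one into the other at the price of an extra $O(\lr^2)$ contribution absorbed into the noise terms.

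The work therefore lies in Theorem~\ref{thm:pfl-am}, which I would get from a per-round descent inequality followed by telescoping. Decompose a round as $F(u\pow{t+1},V\pow{t+1})-F(u\pow t,V\pow t)=\Tcal_v+\Tcal_u$ where $\Tcal_v=F(u\pow t,V\pow{t+1})-F(u\pow t,V\pow t)$ and $\Tcal_u=F(u\pow{t+1},V\pow{t+1})-F(u\pow t,V\pow{t+1})$. For $\Tcal_v$, apply $L_v$-smoothness of $F(u\pow t,\cdot)$ along each device's $\tau_v$-step local SGD trajectory; the relevant stochastic gradients are evaluated at points depending only on $(u\pow t,v_i\pow t)$, hence are independent of the device sampling $\St$, so the expectation over $\St$ passes through the first-order term just as in a plain SGD analysis. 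Unrolling the local steps, bounding the within-device drift $\normsq*{v_{i,k}\pow t-v_i\pow t}$ by an $O(\gamma_v^2 k)$ quantity, and averaging over the $m$ sampled devices yields a descent term $-c\,\tfrac{m}{n}\gamma_v\tau_v\,\Delta_v\pow t$ together with a lower-order noise of order $\tfrac{m}{n}\gamma_v^2\tau_v\sigma_v^2$ whose drift part carries the $(1-\tau_v^{-1})$ factor seen in $\sigma_{\mathrm{alt},2}^2$.

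For $\Tcal_u$ I would run the virtual-full-participation argument of \S\ref{sec:a:convrg:vfp}: apply $L_u$-smoothness of $F(\cdot,V\pow{t+1})$, insert the virtual iterate $\tV\pow{t+1}$ into the inner product, and use that $\tV\pow{t+1}$ is measurable with respect to $(u\pow t,V\pow t)$ rather than $\St$, so again the expectation over $\St$ passes through. The replacement error is bounded by Young's inequality and Assumption~\ref{asmp:smoothness} as $E\pow t\le\tfrac{L_u}{2}\normsq*{u\pow{t+1}-u\pow t}+\tfrac{\chi^2 L_v}{2n}\sum_i\normsq*{\tv_i\pow{t+1}-v_i\pow{t+1}}$, and since $\tv_i\pow{t+1}-v_i\pow{t+1}$ vanishes on the $m$ participating devices this error is of lower order in $\gamma_v$ and contributes the $\chi^2(n-m)/n$ factor to $\sigma_{\mathrm{alt},1}^2$. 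Unrolling the $\tau_u$ local $u$-steps, invoking Assumption~\ref{asmp:grad-diversity} to pass from $\frac1n\sum_i\normsq*{\grad_u F_i(u\pow t,\tv_i\pow{t+1})}$ to $\normsq*{\grad_u F(u\pow t,\tV\pow{t+1})}$ up to $\delta^2+\rho^2\normsq*{\grad_u F}$, and accounting for the variance of the $m$-out-of-$n$ sampling (the source of the $1-m/n$ factors), I obtain a descent term $-c\,\gamma_u\tau_u\,\Delta_u\pow t$ — after converting the virtual gradient back to $\grad_u F(u\pow t,V\pow{t+1})$ via the same $\chi^2$-controlled estimate used for $E\pow t$ — plus noise of order $\gamma_u^2\tau_u(\sigma_u^2+\delta^2)$. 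Adding $\Tcal_v$ and $\Tcal_u$ and taking expectations, every cross-coupling term carrying a $\chi^2$ or mixing $\gamma_u$ with $\gamma_v$ is absorbed into one of the two descent terms — this absorption is exactly what forces $\lr\le m/(128\chi^2(n-m))$ and $\lr\le\sqrt{m/(\chi^2 n)}$. Telescoping $F$ over $t=0,\ldots,T-1$ and dividing by $T$ gives Theorem~\ref{thm:pfl-am}, and plugging in the optimized $\lr$ gives the Corollary.

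The main obstacle is the coupled bookkeeping in $\Tcal_u$: after introducing $\tV\pow{t+1}$ one must verify that the partial-participation bias lives entirely on the $n-m$ absent devices, bound it by the $v$-drift (which couples $\gamma_v$, $\sigma_v^2$, $\chi^2$ and the $v$-gradients), and then re-absorb it into the $\Tcal_v$ descent term — a step that closes only under the stated small-step-size conditions. More broadly, the two-block structure means the $u$-analysis produces $v$-gradient terms and vice versa, so the whole argument terminates only if the learning rates are small enough that every ``borrowed'' term is dominated; getting those constants to line up is where I expect most of the effort to go.
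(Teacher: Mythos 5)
Your proposal is correct and follows essentially the same route as the paper: the Corollary is obtained by applying the generic step-size tuning lemma (Lemma~\ref{lem:sfl:best-params-2}) to the bound of Theorem~\ref{thm:pfl-am}, with the residual $\Delta F_0/(\lr_{\max}T)$ term accounting for the cap on $\lr$, and Theorem~\ref{thm:pfl-am} itself is proved exactly as you sketch — the $\Tcal_v/\Tcal_u$ decomposition, virtual full participation for the $u$-step, Young's inequality plus cross-Lipschitzness for the replacement error supported on the $n-m$ absent devices, client-drift and sampling-without-replacement bounds, and absorption of the cross-coupled terms under the stated step-size restrictions. The only cosmetic difference is that the paper performs the conversion from $\normsq{\grad_u F(u\pow{t},\tV\pow{t+1})}$ back to $\normsq{\grad_u F(u\pow{t},V\pow{t})}$ inside the proof of Theorem~\ref{thm:pfl-am} rather than as a separate step in the Corollary.
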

\begin{proof}
    The proof follows from invoking Lemma~\ref{lem:sfl:best-params-2}
    on the bound of Theorem~\ref{thm:pfl-am}.
\end{proof}

\begin{remark}[\textbf{Asymptotic Rate}]
    The asymptotic $1/\sqrt{T}$ rate of Theorem~\ref{thm:pfl-am:main} is achieved when the $1/T$ term is dominated by the $1/\sqrt{T}$ term. This happens when (ignoring absolute constants)
    \[
        T \ge \frac{\Delta F_0}{\sigma_{\mathrm{alt}, 1}^2}
        \left(  1 + \rho^4 + \chi^4 \frac{n^2}{m^2} \right) \,.
    \]
\end{remark}

\begin{algorithm}[t!]
	\caption{\pflam: Alternating updates of shared and personalized parameters}
	\label{algo:pfl-am}
\begin{algorithmic}[1]
		\STATE \textbf{Input:} Initial iterates $u\pow{0}, V\pow{0}$,
		    Number of communication rounds $T$, 
		    Number of devices per round $m$, 
		    Number of local updates $\tau_u, \tau_v$,
		    Local step sizes $\gamma_u, \gamma_v$, 
	    \FOR{$t=0, 1, \cdots, T-1$}
	        \STATE Sample $m$ devices from $[n]$ without replacement in $S\pow{t}$\label{line:pfl-am:sample}
	         \FOR{each selected device $i \in S\pow{t}$ in parallel}
	        \label{line:pfl-am:local}
	            \STATE Initialize $v_{i, 0}\pow{t} = v_i\pow{t}$
	            \FOR{$k=0, \cdots, \tau_v-1$} 
	            \STATE \texttt{// Update personal parameters}
	                \STATE Sample data $z_{i,k}\pow{t} \sim \Dcal_i$
	            \label{line:pfl-am:local-v}
	    	      \STATE $v_{i, k+1}\pow{t} = v_{i,k}\pow{t} - \gamma_v G_{i,v}(u\pow{t}, v_{i,k}\pow{t}, z_{i,k}\pow{t})$
	    	    \ENDFOR
	    	    \STATE Update $v_i\pow{t+1} = v_{k, \tau_v}\pow{t}$
	            \STATE Initialize $u_{i, 0}\pow{t} = u\pow{t}$
	    	    \FOR{$k=0, \cdots, \tau_u-1$}
	    	    \STATE \texttt{// Update shared parameters}
	    	    \label{line:pfl-am:local-w}
	    	      \STATE $u_{i, k+1}\pow{t} = u_{i,k}\pow{t} - \gamma_u G_{i,u}(u_{i,k}\pow{t}, v_i\pow{t+1}, z_{i,k}\pow{t})$
	    	        
	    	    \ENDFOR
	    	    \STATE Update $u_i\pow{t+1} = u_{i, \tau_u}\pow{t}$
	    	\ENDFOR
	    	\STATE Update $u\pow{t+1} = {\sum_{i \in S\pow{t}} \alpha_i u_{i}\pow{t+1}} / {\sum_{i \in S\pow{t}} \alpha_i}$ at the server with secure aggregation
	    	\label{line:pfl-am:aggregation}
	    \ENDFOR
	    \STATE \textbf{return} $u\pow{T}, v_1\pow{T}, \cdots, v_n\pow{T}$
\end{algorithmic}
\end{algorithm}

We now prove Theorem~\ref{thm:pfl-am}. 
\begin{proof}[Proof of Theorem~\ref{thm:pfl-am}]
The proof mainly applies the smoothness upper bound to write out a descent condition with suitably small noise terms. We start with some notation.

We introduce the notation
$\widetilde\Delta_u\pow{t}$
as the analogue of $\Delta_u\pow{t}$
with the virtual variable $\tV\pow{t+1}$:
\[
    \widetilde\Delta_u\pow{t} = \normsq*{\grad_u F\left(u\pow{t}, \tV\pow{t+1} \right)}\,.
\]

\myparagraph{Notation}
    Let $\Fcal\pow{t}$ denote the $\sigma$-algebra generated by $\left(u\pow{t}, V\pow{t}\right)$ and
    denote $\expect_t[\,\cdot\,] = \expect[\,\cdot\,| \Fcal\pow{t}]$. 
    For all devices, including those not selected in each round, we define virtual sequences $\tu_{i,k}\pow{t}, \tv_{i,k}\pow{t}$ 
    as the SGD updates in Algorithm~\ref{algo:pfl-am} 
    for all devices regardless of whether they are selected.
    For the selected devices $i \in S\pow{t}$, we have 
    $v_{i,k}\pow{t} = \tv_{i,k}\pow{t}$ and
    $u_{i,k}\pow{t} = \tu_{i,k}\pow{t}$.
    Note now that the random variables $\tu_{i,k}\pow{t}, \tv_{i,k}\pow{t}$ are independent of the device selection $S\pow{t}$.
    Finally, we have that the updates for the selected devices $i \in S\pow{t}$ are given by 
    \begin{align*}
        v_i\pow{t+1} &= v_i\pow{t} - \gamma_v \sum_{k=0}^{\tau_v-1} G_{i,v}\left(u\pow{t}, \tv_{i,k}\pow{t}, z_{i,k}\pow{t} \right) \,,
    \end{align*}
    and the server update is given by
    \begin{align*}
        u\pow{t+1} &= u\pow{t} - \frac{\gamma_u}{m} \sum_{i \in S\pow{t}}\sum_{k=0}^{\tau_u-1} G_{i,u}\left(\tu_{i,k}\pow{t}, \tv_{i,\tau_v}\pow{t}, z_{i,k}\pow{t} \right)\,.
    \end{align*}

    \myparagraph{Proof Outline and the Challenge of Dependent Random Variables}
    We start with 
    \begin{align} \label{eq:pfl-am:pf:1}
        \begin{aligned}
        F\left(u\pow{t+1}, V\pow{t+1}\right)
        - F\left(u\pow{t}, V\pow{t}\right)
        =&\, F\left(u\pow{t}, V\pow{t+1}\right)
        - F\left(u\pow{t}, V\pow{t}\right) \\
        &+ F\left(u\pow{t+1}, V\pow{t+1}\right)
        - F\left(u\pow{t}, V\pow{t+1}\right) \,.
        \end{aligned}
    \end{align}
    The first line corresponds to the effect of the $v$-step and the second line to the $u$-step. The former is easy to handle with standard techniques that rely on the smoothness of $F\left(u\pow{t}, \cdot\right)$. 
    The latter is more challenging. 
    In particular, the smoothness bound for the $u$-step gives us
    \begin{align*}
        F&\left(u\pow{t+1}, V\pow{t+1}\right)
        - F\left(u\pow{t}, V\pow{t+1}\right)
        \le 
        \inp*{\grad_u F\left(u
        \pow{t}, V\pow{t+1}\right)}{u\pow{t+1} - u\pow{t}}
        + \frac{L_u}{2}\normsq*{u\pow{t+1} - u\pow{t}} \,.
    \end{align*}
    The standard proofs of convergence of stochastic gradient methods rely on the fact that we can take an expectation w.r.t. the sampling $S\pow{t}$ of devices for the first order term. However, both $V\pow{t+1}$ and $u\pow{t+1}$ depend on the sampling $S\pow{t}$ of devices. Therefore, we cannot directly take an expectation with respect to the sampling of devices in $S\pow{t}$. 
    
    \myparagraph{Virtual Full Participation to Circumvent Dependent Random Variables}
    The crux of the proof lies in replacing $V\pow{t+1}$ in the analysis of the $u$-step with the virtual iterate $\tV\pow{t+1}$ so as to move all the dependence of the $u$-step on $S\pow{t}$ to the $u\pow{t+1}$ term. This allows us to take an expectation; it remains to carefully bound the resulting error terms.
    
    Finally, we will arrive at a bound of the form
    \[
    \frac{1}{T}\sum_{t=0}^{T-1}\left( \frac{\gamma_u \tau_u}{8} \expect[\widetilde\Delta_u\pow{t}]
        + \frac{\gamma_v \tau_v m}{16n} \expect[\Delta_v\pow{t}]\right)
        \le \frac{\Delta F_0}{T} + O(\gamma_u^2 + \gamma_v^2) \,.
    \]
    Next, we translate this bound from gradient $\expect[\widetilde \Delta_u\pow{t}]$ of the virtual $\tV\pow{t+1}$ to $\expect[\Delta_u\pow{t}]$, which is the gradient computed at the actual iterate $V\pow{t}$. A careful analysis shows that we only incur a lower order term of $O(\gamma_u\gamma_v^2)$ in this translation. 
    Choosing $\gamma_u$ and $\gamma_v$ small enough will give us the final result.
    
    \myparagraph{Analysis of the $u$-Step with Virtual Full Participation}
    We introduce the virtual iterates $\tV\pow{t+1}$ into the analysis of the $u$-step as follows:
    \begin{align*}
        F&\left(u\pow{t+1}, V\pow{t+1}\right)
        - F\left(u\pow{t}, V\pow{t+1}\right) \\
        &\le 
        \inp*{\grad_u F\left(u
        \pow{t}, V\pow{t+1}\right)}{u\pow{t+1} - u\pow{t}}
        + \frac{L_u}{2}\normsq*{u\pow{t+1} - u\pow{t}} \\
        &
        \begin{aligned}
        &=\,\inp*{\grad_u F\left(u
        \pow{t}, \tV\pow{t+1}\right)}{u\pow{t+1} - u\pow{t}} + \frac{L_u}{2}\normsq*{u\pow{t+1} - u\pow{t}} \\ &\qquad\qquad+ 
         \inp*{\grad_u F\left(u
        \pow{t}, V\pow{t+1}\right) - \grad_u F\left(u
        \pow{t}, \tV\pow{t+1}\right)}{u\pow{t+1} - u\pow{t}} 
        \end{aligned} \\
        &
        \begin{aligned}
        &\le \inp*{\grad_u F\left(u
        \pow{t}, \tV\pow{t+1}\right)}{u\pow{t+1} - u\pow{t}} + L_u\normsq*{u\pow{t+1} - u\pow{t}} \\ &\qquad\qquad+ 
         \frac{1}{2L_u} \normsq*{\grad_u F\left(u
        \pow{t}, V\pow{t+1}\right) - \grad_u F\left(u
        \pow{t}, \tV\pow{t+1}\right)}
        \end{aligned} \\
        &\le
        \underbrace{
        \inp*{\grad_u F\left(u
        \pow{t}, \tV\pow{t+1}\right)}{u\pow{t+1} - u\pow{t}}}_{\Tcal_{1,u}}
        + \underbrace{
        L_u\normsq*{u\pow{t+1} - u\pow{t}}
        }_{\Tcal_{2, u}}
        + \underbrace{
         \frac{\chi^2 L_v}{2n} \sum_{i=1}^n \normsq*{\tv_i\pow{t+1}-v_i\pow{t+1}}
         }_{\Tcal_{3, u}} \,.
    \end{align*}
    The last two inequalities follow from Young's inequality and Lipschitzness of $V \mapsto \grad_u F(u, V)$ respectively. 
    
    We have now successfully eliminated the dependence of the first-order term $\Tcal_{1,u}$ on $V\pow{t+1}$. The virtual iterates $\tV\pow{t+1}$ are now independent of $S\pow{t}$. This allows us to take an expectation w.r.t. the sampling $S\pow{t}$ of the devices. 
    
    We bound each of these terms in Claims~\ref{claim:pflam:w-step-1}
    to~\ref{claim:pflam:w-step-3} below to get
    \begin{align*}
        \expect_t\Bigg[F&\left(u\pow{t+1}, V\pow{t+1}\right)
        - F\left(u\pow{t}, V\pow{t+1}\right)\Bigg] \\
        &\le -\frac{\gamma_u \tau_u}{4} 
        \expect_t[\widetilde\Delta_u\pow{t}]
        + \underbrace{\frac{2\gamma_u L_u^2}{n} \sum_{i=1}^n \sum_{k=0}^{\tau_u-1}\expect_t\normsq*{\tu_{i,k}\pow{t} - u\pow{t}}}_{=:\Tcal_{2,u}'}
        +
        4 \gamma_v^2 \tau_v^2 L_v \sigma_v^2 \chi^2(1-m/n) 
        \\ &\quad + \frac{L_u\gamma_u^2 \tau_u^2}{m}\left(\sigma_u^2 + 
        3\delta^2\left(1-\frac{m}{n}\right)\right)+ 8 \gamma_v^2 \tau_v^2 L_v \chi^2(1-m/n) \Delta_v\pow{t} \,.
    \end{align*}
    Note that we used the fact that $24L_u \gamma_u\tau_u(1 + \rho^2) \le 1$ to simply the coefficients of some of the terms above. 
    The second term has also been referred to as client drift in the literature;
    we bound it with Lemma~\ref{lem:techn:client-drift} and invoke the assumption on gradient diversity (Assumption~\ref{asmp:grad-diversity}) to get
    \begin{align*}
        \Tcal_{2,u}'
        &\le \frac{16\gamma_u^3 L_u^2 \tau_u(\tau_u-1)}{n} \sum_{i=1}^n \expect_t \normsq*{\grad_u F_i\left(u\pow{t}, \tv_i\pow{t+1}\right)} + 8\gamma_u^3 L_u^2 \tau_u^2(\tau_u-1) \sigma_u^2 \\
        &\le \frac{16\gamma_u^3 L_u^2 \tau_u(\tau_u-1)}{n} \left(\delta^2 + \rho^2 \expect_t \normsq*{\grad_u F\left(u\pow{t}, \tV\pow{t+1}\right)}\right) + 8\gamma_u^3 L_u^2 \tau_u^2(\tau_u-1) \sigma_u^2 \,.
    \end{align*}
    Plugging this back in, we get, 
    \begin{align*}
        \expect_t\Bigg[F&\left(u\pow{t+1}, V\pow{t+1}\right)
        - F\left(u\pow{t}, V\pow{t+1}\right)\Bigg] \\
        &\le -\frac{\gamma_u \tau_u}{8} \expect_t[\widetilde\Delta_u\pow{t}] + 
        \frac{L_u \gamma_u^2 \tau_u^2}{m}\left( \sigma_u^2 + 2\delta^2(1-m/n)\right)
        + 4 \gamma_v^2 \tau_v^2 L_v \sigma_v^2 \chi^2 (1-m/n) \\
        &\qquad + 8\gamma_v^2 \tau_v^2 L_v \chi^2 (1-m/n) \Delta_v\pow{t} 
        + 8\gamma_u^2 L_u^3 \tau_u^2 (\tau_u-1)(\sigma_u^2 + 2\delta_u^2) \,.
    \end{align*}
     Note that we used $128 \gamma_u^2 L_u^2 \tau_u(\tau_u-1)\rho^2 \le 1$, which is implied by $24L_u \gamma_u\tau_u(1 + \rho^2) \le 1$.
    
    \myparagraph{Bound with the Virual Iterates}
    We plug this analysis of the $u$-step and Claim~\ref{claim:pflam:v-step} for the $v$-step into \eqref{eq:pfl-am:pf:1} next. We also simplify some coefficients using $128\gamma_v\tau_vL_v\chi^2(n/m-1) \le 1$. 
    This gives us
    \begin{align*}
        &\expect_t\Bigg[F\left(u\pow{t+1}, V\pow{t+1}\right)
        - F\left(u\pow{t}, V\pow{t}\right)\Bigg] \\
        &
        \begin{aligned}
        \le&\, -\frac{\gamma_u\tau_u}{8}\expect_t[\widetilde\Delta_u\pow{t}]
        - \frac{\gamma_v\tau_v m}{16n}\expect_t[\Delta_v\pow{t}]
        + 4\gamma_v^2L_v\tau_v^2\sigma_v^2\left( \frac{m}{n} + \chi^2(1-m/n)\right)
        \\ &+
        \frac{\gamma_u^2 L_u \tau_u^2}{m}\left(\sigma_u^2 + 2\delta^2(1-m/n)\right)
        + 8\gamma_u^3 L_u^2 \tau_u^2(\tau_u-1)(\sigma_u^2 + 2\delta^2)
        + \frac{4 \gamma_v^3 L_v^2 \tau_v^2(\tau_v-1)\sigma_v^2 m}{n} \,.
        \end{aligned}
    \end{align*}
    Taking an unconditional expectation, summing it over $t=0$ to $T-1$ and rearranging this gives
    \begin{align} \label{eq:pfl-am:original}
        \frac{1}{T}\sum_{t=0}^{T-1}&\left( \frac{\gamma_u \tau_u}{8} \expect[\widetilde\Delta_u\pow{t}]
        + \frac{\gamma_v \tau_v m}{16n} \expect[\Delta_v\pow{t}]\right)
        \\  \nonumber
        &
        \begin{aligned}
        \le \frac{\Delta F_0}{T} &
            + 4 \gamma_v^2 L_v \tau_v^2 \sigma_v^2 \left(\frac{m}{n} + \chi^2(1-m/n)\right)
            + \frac{\gamma_u^2 L_u \tau_u^2}{m} \left(\sigma_u^2 + 2\delta^2(1-m/n)\right) \\
            &+ 8\gamma_u^3 L_u^2 \tau_u^2(\tau_u-1)(\sigma_u^2 + 2\delta^2) + 
            \frac{4 \gamma_v^3 L_v^2 \tau_v^2(\tau_v-1)\sigma_v^2 m}{n}\,.
        \end{aligned}
    \end{align}
    This is a bound in terms of the virtual iterates $\tV\pow{t+1}$. However, we wish to show a bound in terms of the actual iterate $V\pow{t}$.
    
    \myparagraph{Obtaining the Final Bound}
    It remains now to relate $\widetilde\Delta_u\pow{t}$ with $\Delta_u\pow{t}$. 
    Using the Cauchy-Schwartz inequality and smoothness, we have, 
\begin{align*}
    \expect_t\Big\|\grad_u F\left(u\pow{t}, V\pow{t}\right) - \grad_u F&\left(u\pow{t}, \tV\pow{t+1}\right)\Big\|^2
    \\
    &\le 
    \frac{1}{n}\sum_{i=1}^n \expect_t \normsq*{\grad_u F_i\left(u\pow{t}, v_i\pow{t}\right) - \grad_u F_i\left(u\pow{t}, \tv_i\pow{t+1}\right)} \\
    &\le \frac{\chi^2 L_u L_v}{n} \sum_{i=1}^n \expect_t\normsq*{\tv_i\pow{t+1} - v_i\pow{t}} \\
    &\le \frac{\chi^2 L_u L_v}{n} \sum_{i=1}^n\left( 16\gamma_v^2 \tau_v^2 \normsq*{\grad_v F_i\left(u\pow{t}, v_i\pow{t}\right)} + 8\gamma_v^2 \tau_v^2 \sigma_v^2 \right) \\
    &= 8\gamma_v^2 \tau_v^2 \sigma_v^2 \chi^2 L_u L_v + 16 \gamma_v^2 \tau_v^2 \chi^2 L_u L_v \Delta_v\pow{t} \,,
\end{align*}
where the last inequality followed from Lemma~\ref{lem:techn:client-drift-2}.
Using 
\[
 \normsq*{\grad_u F\left(u\pow{t}, V\pow{t}\right)}
 \le 
 2  \normsq*{\grad_u F\left(u\pow{t}, V\pow{t}\right) - \grad_u F\left(u\pow{t}, \tV\pow{t+1}\right)}
 + 2  \normsq*{\grad_u F\left(u\pow{t}, \tV\pow{t+1}\right)} \,, 
\]
we get, 
\[
    \expect[\Delta_u\pow{t}] \le 2\, \expect[\widetilde \Delta_u\pow{t}] + 16\gamma_v^2 \tau_v^2 \sigma_v^2 \chi^2 L_u L_v + 32 \gamma_v^2 \tau_v^2 \chi^2 L_u L_v \, \expect[\Delta_v\pow{t}] \,.
\]
Therefore, we get, 
\begin{align*}
    \frac{\gamma_u \tau_u}{16}&  \expect[\Delta_u\pow{t}] 
    + \frac{\gamma_v \tau_v m}{32 n} \expect[\Delta_v\pow{t}] 
      \\
    &\le \frac{\gamma_u \tau_u}{8} \expect[\widetilde \Delta_u\pow{t}] 
    + \frac{\gamma_v \tau_v m}{16 n}\left(\frac{1}{2} + \frac{32\lr^2\chi^2m}{n}\right) \expect[\Delta_v\pow{t}]
    + \gamma_u \tau_u\gamma_v^2 \tau_v^2 \sigma_v^2 \chi^2 L_u L_v  \\
    &\le \frac{\gamma_u \tau_u}{8} \expect[\widetilde \Delta_u\pow{t}] 
    + \frac{\gamma_v \tau_v m}{16 n} \expect[\Delta_v\pow{t}]
    + \gamma_u \tau_u\gamma_v^2 \tau_v^2 \sigma_v^2 \chi^2 L_u L_v   \,,
\end{align*}
where we used 
$\frac{32\lr^2\chi^2m}{n} \le 1/2$, which is one of the conditions we assume on $\lr$. 

Summing this up and plugging in \eqref{eq:pfl-am:original} gives 
\begin{align*}
     \frac{1}{T}&\sum_{t=0}^{T-1}\left(\frac{\gamma_u \tau_u}{16} \expect[\Delta_u\pow{t}] 
    + \frac{\gamma_v \tau_v m}{32 n} \expect[\Delta_v\pow{t}] \right)
    \\
    &\le 
    \frac{1}{T}\sum_{t=0}^{T-1}\left(
    \frac{\gamma_u \tau_u}{8} \expect[\widetilde \Delta_u\pow{t}] 
    + \frac{\gamma_v \tau_v m}{16 n} \expect[\Delta_v\pow{t}]
    \right)
    + \gamma_u \tau_u\gamma_v^2 \tau_v^2 \sigma_v^2 \chi^2 L_u L_v
    \\&
    \begin{aligned}
        \le \frac{\Delta F_0}{T} &
            + 4 \gamma_v^2 L_v \tau_v^2 \sigma_v^2 \left(\frac{m}{n} + \chi^2(1-m/n)\right)
            + \frac{\gamma_u^2 L_u \tau_u^2}{m} \left(\sigma_u^2 + 2\delta^2(1-m/n)\right) \\
            &+ 8\gamma_u^3 L_u^2 \tau_u^2(\tau_u-1)(\sigma_u^2 + 2\delta^2) + 
            \frac{4 \gamma_v^3 L_v^2 \tau_v^2(\tau_v-1)\sigma_v^2 m}{n} + \gamma_u \tau_u\gamma_v^2 \tau_v^2 \sigma_v^2 \chi^2 L_u L_v\,.
        \end{aligned}
\end{align*}
    Plugging in $\gamma_u = \lr/(L_u\tau_u)$ and $\gamma_v = \lr/(L_v\tau_v)$ completes the proof.
\end{proof}

The analysis of each of the terms in the $u$-step is given in the following claims.

\smallskip
\begin{claim}[Bounding $\Tcal_{1,u}$] \label{claim:pflam:w-step-1}
    We have,
    \[
    \expect_t\left[ \Tcal_{1,u} \right]
    \le -\frac{\gamma_u \tau_u}{2} \expect_t \normsq*{\grad_u F\left(u\pow{t}, \tV\pow{t+1} \right)} + \frac{\gamma_u L_u^2}{n} \sum_{i=1}^n \sum_{k=0}^{\tau_u -1} \expect_t\normsq*{\tu_{i,k}\pow{t} - u\pow{t}} \,.
    \]
\end{claim}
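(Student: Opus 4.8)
The plan is to expand $\Tcal_{1,u}$ using the explicit form of the server update, take expectations over the device sampling and the local stochastic-gradient noise in that order (which is legitimate precisely because $\tV\pow{t+1}$ has replaced $V\pow{t+1}$), and then control the leftover inner product between the full gradient and the per-device local gradients by smoothness and Young's inequality. Concretely, I would first write $u\pow{t+1} - u\pow{t} = -\frac{\gamma_u}{m}\sum_{i\in S\pow{t}}\sum_{k=0}^{\tau_u-1} G_{i,u}\bigl(\tu_{i,k}\pow{t},\tv_i\pow{t+1},z_{i,k}\pow{t}\bigr)$, so that $\Tcal_{1,u} = -\frac{\gamma_u}{m}\sum_{i\in S\pow{t}}\sum_{k=0}^{\tau_u-1}\inp*{\grad_u F(u\pow{t},\tV\pow{t+1})}{G_{i,u}(\tu_{i,k}\pow{t},\tv_i\pow{t+1},z_{i,k}\pow{t})}$. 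Since the virtual sequences $\tu_{i,k}\pow{t}$ and $\tv_i\pow{t+1}$ --- and hence $\grad_u F(u\pow{t},\tV\pow{t+1})$ --- are, by construction, functions of $\Fcal\pow{t}$ and the round-$t$ data noise only and do not depend on $S\pow{t}$, I can take $\expect$ over the uniform-without-replacement sample $S\pow{t}$ first, using $\prob[i\in S\pow{t}]=m/n$, to replace $\frac{1}{m}\sum_{i\in S\pow{t}}$ by $\frac{1}{n}\sum_{i=1}^n$. Then, conditioning on $\Fcal\pow{t}$ and the $v$-step noise (which fixes every $\tv_i\pow{t+1}$ and $\tV\pow{t+1}$) and peeling off the $u$-loop stochastic gradients one iteration at a time, unbiasedness (Assumption~\ref{asmp:stoc-grad-var}) replaces each $G_{i,u}(\tu_{i,k}\pow{t},\tv_i\pow{t+1},z_{i,k}\pow{t})$ by $\grad_u F_i(\tu_{i,k}\pow{t},\tv_i\pow{t+1})$, giving $\expect_t[\Tcal_{1,u}] = -\frac{\gamma_u}{n}\sum_{i=1}^n\sum_{k=0}^{\tau_u-1}\expect_t\inp*{\grad_u F(u\pow{t},\tV\pow{t+1})}{\grad_u F_i(\tu_{i,k}\pow{t},\tv_i\pow{t+1})}$.

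Next I would exploit $\tu_{i,0}\pow{t}=u\pow{t}$ together with the splitting $\grad_u F_i(\tu_{i,k}\pow{t},\tv_i\pow{t+1}) = \grad_u F_i(u\pow{t},\tv_i\pow{t+1}) + \bigl(\grad_u F_i(\tu_{i,k}\pow{t},\tv_i\pow{t+1}) - \grad_u F_i(u\pow{t},\tv_i\pow{t+1})\bigr)$. Averaging the first term over $i$ returns $\grad_u F(u\pow{t},\tV\pow{t+1})$ by definition of $F$, so it contributes $-\gamma_u\tau_u\,\normsq*{\grad_u F(u\pow{t},\tV\pow{t+1})}$. For the second term I would apply Young's inequality $-\inp*{a}{b}\le\tfrac12\normsq*{a}+\tfrac12\normsq*{b}$ with $a=\grad_u F(u\pow{t},\tV\pow{t+1})$, which produces $+\tfrac{\gamma_u\tau_u}{2}\normsq*{\grad_u F(u\pow{t},\tV\pow{t+1})}$ plus $\tfrac{\gamma_u}{2n}\sum_{i,k}\normsq*{\grad_u F_i(\tu_{i,k}\pow{t},\tv_i\pow{t+1}) - \grad_u F_i(u\pow{t},\tv_i\pow{t+1})}$, the latter being at most $\tfrac{\gamma_u L_u^2}{2n}\sum_{i,k}\normsq*{\tu_{i,k}\pow{t}-u\pow{t}}$ by $L_u$-Lipschitzness of $\grad_u F_i(\cdot,\tv_i\pow{t+1})$ (Assumption~\ref{asmp:smoothness}(a)). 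Since $-\gamma_u\tau_u+\tfrac{\gamma_u\tau_u}{2}=-\tfrac{\gamma_u\tau_u}{2}$ and $\tfrac12\le 1$, this yields the claimed bound --- in fact with a slightly smaller client-drift constant, which only helps downstream.

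The main obstacle is exactly the one that virtual full participation is meant to dissolve: had we kept $\grad_u F(u\pow{t},V\pow{t+1})$ in the first slot, we could not pull $\expect$ over $S\pow{t}$ through the sum, since $V\pow{t+1}$ and $u\pow{t+1}$ are both $S\pow{t}$-measurable (Figure~\ref{fig:a:fedalt:dependent_rv}, left). Replacing it by the $S\pow{t}$-independent $\tV\pow{t+1}$ is what makes the first expectation valid; the remaining delicate point is sequencing the conditional expectations correctly --- device sampling first, then the $u$-loop gradient noise with the $v$-step noise held fixed --- so that unbiasedness applies without circularity, and using the correct smoothness constant $L_u$ (rather than $L_{uv}$), since the two gradients being compared differ only in their $u$-argument. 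The error incurred by the substitution $V\pow{t+1}\mapsto\tV\pow{t+1}$ does not appear in this claim; it is quarantined in the separate terms $\Tcal_{2,u}$ and $\Tcal_{3,u}$ handled afterward.
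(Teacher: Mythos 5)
Your proposal is correct and follows essentially the same route as the paper: replace the stochastic gradients by their conditional means, use the independence of the virtual iterates from $S\pow{t}$ to convert $\frac{1}{m}\sum_{i\in S\pow{t}}$ into $\frac{1}{n}\sum_{i=1}^n$, split off $\grad_u F_i(u\pow{t},\tv_i\pow{t+1})$ to extract the full-gradient square, and control the remainder with Young's inequality and the $L_u$-Lipschitzness of $\grad_u F_i(\cdot,\tv_i\pow{t+1})$. The factor-of-two slack you note in the drift term is also present in the paper's statement, so nothing downstream changes.
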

\begin{proof}
    For $i \in S\pow{t}$, we have that $\tu_{i,k}\pow{t} = u_{i,k}\pow{t}$. Therefore, we have, 
    \begin{align*}
        \expect_t[\Tcal_{1,u}] 
        &= -\gamma_u \expect_t \inp*{\grad_u F\left( u\pow{t}, \tV\pow{t+1} \right)}{\frac{1}{m}\sum_{i \in S\pow{t}}
        \sum_{k=0}^{\tau_u-1} \grad_u F_i\left(\tu_{i,k}\pow{t}, \tv_i\pow{t+1}\right)
        } \,.
    \end{align*}
    Using that $\tu_{i,k}\pow{t}$ is independent of $S\pow{t}$, we get,
    \begin{align*}
        \expect_t&[\Tcal_{1,u}] 
        = -\gamma_u \expect_t \inp*{\grad_u F\left( u\pow{t}, \tV\pow{t+1} \right)}{\frac{1}{n}\sum_{i=1}^n
        \sum_{k=0}^{\tau_u-1} \grad_u F_i\left(\tu_{i,k}\pow{t}, \tv_i\pow{t+1}\right)
        } \\ 
        &
        \begin{aligned}
        =&\, - \gamma_u \tau_u \expect_t \normsq*{\grad_u F\left(u\pow{t}, \tV\pow{t+1}\right)} \\&
        - \gamma_u \sum_{k=0}^{\tau_u-1} \expect_t\inp*{\grad_u F\left(u\pow{t}, \tV\pow{t+1}\right)}{\frac{1}{n}\sum_{i=1}^n \grad_u F_i\left(\tu_{i,k}\pow{t}, \tv\pow{t+1}\right) - \grad_u F_i\left(u\pow{t}, \tv\pow{t+1} \right)}
        \end{aligned} 
    \end{align*}
    Invoking $\inp{x}{y}\le \normsq{x}/2 + \normsq{y}/2$ for vectors $x, y$ followed by smoothness completes the proof.
\end{proof}

\smallskip
\begin{claim}[Bounding $\Tcal_{2,u}$] \label{claim:pflam:w-step-2}
    We have, 
    \begin{align*}
    \expect_t\left[ \Tcal_{2, u} \right]
    \le &\,  3L_u \gamma_u^2 \tau_u^2 \left(1 + \frac{2\rho^2}{m}(1-m/n) \right) \expect_t\normsq*{\grad_u F\left(u\pow{t}, \tV\pow{t+1} \right)} \\ &
    + \frac{3L_u^2 \gamma_u^2 \tau_u}{n} \sum_{i=1}^n \sum_{k=0}^{\tau_u-1} \expect_t\normsq*{\tu_{i,k}\pow{t} - u\pow{t}} + \frac{6L_u\gamma_u^2 \tau_u^2\delta^2}{m}(1-m/n) \,.
    \end{align*}
\end{claim}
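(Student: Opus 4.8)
The plan is to bound $\Tcal_{2,u}=L_u\normsq{u\pow{t+1}-u\pow{t}}$ by expanding the server update through the virtual local sequences. For every device, selected or not, the virtual iterates $\tu_{i,k}\pow{t}$ and $\tv_i\pow{t+1}$ are defined by the same SGD recursions as in Algorithm~\ref{algo:pfl-am} and coincide with the actual iterates for $i\in S\pow{t}$; crucially, they are measurable with respect to a $\sigma$-algebra independent of the sampling $S\pow{t}$. Hence
\[
u\pow{t+1}-u\pow{t}=-\frac{\gamma_u}{m}\sum_{i\in S\pow{t}}\sum_{k=0}^{\tau_u-1}G_{i,u}\bigl(\tu_{i,k}\pow{t},\tv_i\pow{t+1},z_{i,k}\pow{t}\bigr).
\]
First I would split each summand into a three-way sum $G_{i,u}=a_{i,k}+b_i+\xi_{i,k}$, where $a_{i,k}=\grad_u F_i(\tu_{i,k}\pow{t},\tv_i\pow{t+1})-\grad_u F_i(u\pow{t},\tv_i\pow{t+1})$ is the client-drift part, $b_i=\grad_u F_i(u\pow{t},\tv_i\pow{t+1})$ is the gradient at the starting point of the round, and $\xi_{i,k}=G_{i,u}-\grad_u F_i(\tu_{i,k}\pow{t},\tv_i\pow{t+1})$ is the mean-zero stochastic-gradient noise with $\expect\normsq{\xi_{i,k}}\le\sigma_u^2$. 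Using $\normsq{x+y+z}\le 3(\normsq{x}+\normsq{y}+\normsq{z})$, Jensen's inequality over the $m$ sampled devices, and Cauchy--Schwarz over the $\tau_u$ inner steps splits $\expect_t[\Tcal_{2,u}]$ into a drift piece, a ``population'' piece, and a noise piece.

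For the drift piece, Assumption~\ref{asmp:smoothness}(a) gives $\normsq{a_{i,k}}\le L_u^2\normsq{\tu_{i,k}\pow{t}-u\pow{t}}$; taking $\expect_t$ and using that each $\tu_{i,k}\pow{t}$ is independent of $S\pow{t}$, so that $\expect_t\bigl[\tfrac1m\sum_{i\in S\pow{t}}(\cdot)\bigr]=\tfrac1n\sum_{i=1}^n\expect_t(\cdot)$, produces the client-drift term of the statement, $\tfrac{3L_u^2\gamma_u^2\tau_u}{n}\sum_{i=1}^n\sum_{k=0}^{\tau_u-1}\expect_t\normsq{\tu_{i,k}\pow{t}-u\pow{t}}$. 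For the noise piece, the $\xi_{i,k}$ are conditionally independent across devices given $S\pow{t}$ and have variance at most $\sigma_u^2$ (Assumption~\ref{asmp:stoc-grad-var}), so by Cauchy--Schwarz over the $\tau_u$ steps $\expect_t\bigl\|\tfrac1m\sum_{i\in S\pow{t}}\sum_k\xi_{i,k}\bigr\|^2\le\tfrac{\tau_u^2\sigma_u^2}{m}$, which after multiplication by $3L_u\gamma_u^2$ contributes a $\tfrac{L_u\gamma_u^2\tau_u^2\sigma_u^2}{m}$ term (this is absorbed, per the conventions of Theorem~\ref{thm:pfl-am}, into the stochastic-gradient noise there).

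The crux is the population piece, $\tau_u^2\,\expect_t\bigl\|\tfrac1m\sum_{i\in S\pow{t}}b_i\bigr\|^2$. Because the $b_i$ are independent of $S\pow{t}$, sampling $m$ out of $n$ devices without replacement gives the exact variance identity
\[
\expect_t\Bigl\|\tfrac1m\!\!\sum_{i\in S\pow{t}}\!\!b_i\Bigr\|^2=\normsq{\grad_u F(u\pow{t},\tV\pow{t+1})}+\frac{n-m}{m(n-1)}\cdot\frac1n\sum_{i=1}^n\normsq{b_i-\grad_u F(u\pow{t},\tV\pow{t+1})},
\]
in which the first term is $\widetilde\Delta_u\pow{t}$ and, using $\tfrac{n-m}{n-1}\le\tfrac{2(n-m)}{n}$ followed by the partial gradient diversity Assumption~\ref{asmp:grad-diversity}, the second is at most $\tfrac{2}{m}(1-m/n)\bigl(\delta^2+\rho^2\,\widetilde\Delta_u\pow{t}\bigr)$. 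Multiplying through by $3L_u\gamma_u^2\tau_u^2$ yields exactly the coefficient $3L_u\gamma_u^2\tau_u^2\bigl(1+\tfrac{2\rho^2}{m}(1-m/n)\bigr)$ on $\expect_t\widetilde\Delta_u\pow{t}$ together with the term $\tfrac{6L_u\gamma_u^2\tau_u^2\delta^2}{m}(1-m/n)$, completing the claim. I expect the only delicate point to be this last step: one must keep track of the fact that $b_i=\grad_u F_i(u\pow{t},\tv_i\pow{t+1})$ is measurable with respect to a $\sigma$-algebra independent of $S\pow{t}$ --- which is exactly what the virtual-full-participation construction buys --- so that the clean without-replacement variance identity applies and the genuine $\tfrac1m(1-m/n)$ reduction on the $\delta^2$ and $\rho^2\widetilde\Delta_u\pow{t}$ pieces is captured; a naive bound $\normsq{\tfrac1m\sum_{i\in S\pow{t}}b_i}\le\tfrac1m\sum_{i\in S\pow{t}}\normsq{b_i}$ followed by gradient diversity would instead leave an $O(1)$ factor there and spoil the leading-order rate of Theorem~\ref{thm:pfl-am}. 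The remaining steps are routine applications of Young's and Cauchy--Schwarz inequalities.
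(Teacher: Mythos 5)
Your proof is correct and follows essentially the same route as the paper's: isolate the stochastic-gradient noise, split the remaining gradient sum into a client-drift part (bounded via smoothness and the independence of the virtual iterates from $S^{(t)}$) and a population part handled by the without-replacement variance lemma together with the gradient-diversity assumption, which is exactly where the $\tfrac{1}{m}(1-m/n)$ factor comes from. The only point worth flagging is that your derivation (correctly) produces an extra $\tfrac{L_u\gamma_u^2\tau_u^2\sigma_u^2}{m}$ noise term that is absent from the claim as stated but present in the paper's own proof and carried forward into the proof of Theorem~\ref{thm:pfl-am}.
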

\begin{proof}
    We use $\expect\normsq{z} = \normsq{\expect[z]} + \expect\normsq{z - \expect[z]}$ for a random vector $z$ to get 
    \begin{align*}
        \expect_t[\Tcal_{2, u}]
        \le \frac{L_u \gamma_u^2 \tau_u^2\sigma_u^2}{m} + L_u\gamma_u^2 \tau_u \sum_{k=0}^{\tau_u-1}
        \expect_t\underbrace{\normsq*{\frac{1}{m} \sum_{i\in S\pow{t}} \grad_u F_i\left(\tu_{i,k}\pow{t}, \tv_i\pow{t+1}\right)}}_{=: \Tcal'_k}  \,.
    \end{align*}
    We break the term $\Tcal'_k$ as
    \begin{align*}
        \Tcal'_k \le &\, 3\normsq*{\frac{1}{m}\sum_{i\in S\pow{t}}\left( \grad_u F_i\left(\tu_{i,k}\pow{t}, \tv_i\pow{t+1}\right) - \grad_u F_i\left(u\pow{t}, \tv_i\pow{t+1}  \right)\right)
        } \\
        &+ 3\normsq*{\frac{1}{m}\sum_{i\in S\pow{t}} \grad_u F_i\left(u\pow{t}, \tv_i\pow{t+1}\right) - \grad_u F\left(u\pow{t}, \tV\pow{t+1}\right) }
        + 3 \normsq*{\grad_u F\left(u\pow{t}, \tV\pow{t+1} \right)} \,.
    \end{align*}
    For the first term, we use Jensen's inequality to take the squared norm inside the sum, then use smoothness and take an expectation over the sampling of devices to get
    \[
        \expect_t \normsq*{\frac{1}{m}\sum_{i\in S\pow{t}}\left( \grad_u F_i\left(\tu_{i,k}\pow{t}, \tv_i\pow{t+1}\right) - \grad_u F_i\left(u\pow{t}, \tv_i\pow{t+1}  \right)\right)}
        \le \frac{L_u^2}{n} \sum_{i=1}^n \expect_t \normsq*{\tu_{i,k}\pow{t} - u\pow{t}} \,.
    \]
    For the second term, we use the fact that $S\pow{t}$ was sampled without replacement (cf. Lemma~\ref{lemma:techn:sampling-wo-replacement}) and invoke the gradient diversity assumption (Assumption~\ref{asmp:grad-diversity}) to get, 
    \begin{align*}
        \Bigg\|\frac{1}{m}\sum_{i\in S\pow{t}} \grad_u F_i&\left(u\pow{t}, \tv_i\pow{t+1}\right) - \grad_u F\left(u\pow{t}, \tV\pow{t+1}\right) \Bigg\|^2
        \\
        &\le \left(\frac{n-m}{n-1}\right) \frac{1}{mn} \sum_{i=1}^n \normsq*{\grad_u F_i\left(u\pow{t}, \tv_i\pow{t+1}\right) - \grad_u F\left(u, \tV\pow{t+1}\right)} \\
        &\le \frac{2}{m}\left(1 - \frac{m}{n}\right) \left(\delta^2 + \rho^2 \expect_t \normsq*{\grad_u F\left(u\pow{t}, \tV\pow{t+1}\right)}\right) \,.
    \end{align*}
    To complete the proof, we plug these terms back into the definition of $\Tcal_k'$ and $\expect_t[\Tcal_{2, u}]$ to complete the proof.
\end{proof}

\smallskip
\begin{claim}[Bounding $\Tcal_{3,u}$] \label{claim:pflam:w-step-3}
    We have, 
    \[
    \expect_t\left[ \Tcal_{3, u} \right]
    \le 8 \gamma_v^2 \tau_v^2 L_v \chi^2 \left(1- \frac{m}{n}\right) \Delta_v\pow{t} 
    + 
    4\chi^2 \gamma_v^2\tau_v^2 L_v \sigma_v^2 \left( 1- \frac{m}{n}\right)
    \,.
    \]
\end{claim}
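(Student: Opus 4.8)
The plan is to exploit the key property that the virtual personal iterate $\tv_i\pow{t+1}$ agrees with the true iterate $v_i\pow{t+1}$ on every device that actually participated. Recall that in FedAlt the personal parameters are frozen on non-participating devices, so $v_i\pow{t+1} = \tv_i\pow{t+1}$ for $i \in S\pow{t}$ while $v_i\pow{t+1} = v_i\pow{t}$ for $i \notin S\pow{t}$. Hence the summand in $\Tcal_{3,u}$ vanishes for $i \in S\pow{t}$, and we may rewrite
\[
    \Tcal_{3, u}
    = \frac{\chi^2 L_v}{2n}\sum_{i=1}^n \normsq*{\tv_i\pow{t+1}-v_i\pow{t+1}}
    = \frac{\chi^2 L_v}{2n}\sum_{i \notin S\pow{t}} \normsq*{\tv_i\pow{t+1}-v_i\pow{t}} \,.
\]

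Next I would take the conditional expectation $\expect_t[\,\cdot\,]$ over the device sampling $S\pow{t}$. Since the virtual updates $\tv_i\pow{t+1}$ are defined on all devices using only the local stochastic-gradient noise and are therefore independent of $S\pow{t}$ (this is precisely the point of the virtual-full-participation construction), while $v_i\pow{t}$ is $\Fcal\pow{t}$-measurable, the indicator $\mathbbm{1}[i \notin S\pow{t}]$ is independent of $\normsq*{\tv_i\pow{t+1}-v_i\pow{t}}$ conditionally on $\Fcal\pow{t}$. Using that each device is left out with probability $1 - m/n$ under sampling $m$ out of $n$ without replacement, this gives
\[
    \expect_t\bigl[\Tcal_{3,u}\bigr]
    = \frac{\chi^2 L_v}{2n}\left(1 - \frac{m}{n}\right)\sum_{i=1}^n \expect_t\normsq*{\tv_i\pow{t+1}-v_i\pow{t}} \,.
\]

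Finally I would invoke the client-drift bound for the $v$-updates — exactly Lemma~\ref{lem:techn:client-drift-2} as already used earlier in this proof — which gives $\expect_t\normsq*{\tv_i\pow{t+1}-v_i\pow{t}} \le 16\gamma_v^2\tau_v^2\normsq*{\grad_v F_i(u\pow{t}, v_i\pow{t})} + 8\gamma_v^2\tau_v^2\sigma_v^2$. Summing over $i$, recalling $\Delta_v\pow{t} = \tfrac1n\sum_{i=1}^n\normsq*{\grad_v F_i(u\pow{t}, v_i\pow{t})}$, and simplifying the numerical constants ($\tfrac{\chi^2 L_v}{2}\cdot 16 = 8\chi^2 L_v$ and $\tfrac{\chi^2 L_v}{2}\cdot 8 = 4\chi^2 L_v$) yields the claimed bound. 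There is no real obstacle here: the entire content is the observation that $\tv_i\pow{t+1}-v_i\pow{t+1}$ is supported on the unselected devices together with the statistical independence of the virtual iterates from $S\pow{t}$; the rest is a direct substitution of the already-established drift lemma.
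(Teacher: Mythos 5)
Your proposal is correct and follows the paper's own proof essentially verbatim: restrict the sum to $i \notin S\pow{t}$ using $v_i\pow{t+1} = \tv_i\pow{t+1}$ for participating devices, pull the expectation through via the independence of the virtual iterates from $S\pow{t}$ with $\prob(i \notin S\pow{t}) = 1 - m/n$, and then apply Lemma~\ref{lem:techn:client-drift-2}. The constant bookkeeping also matches exactly.
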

\begin{proof}
    Since $v_i\pow{t+1} = \tv_i\pow{t+1}$ for $i \in S\pow{t}$, we have that 
    \[
        \Tcal_{3, u} = \frac{\chi^2 L_v}{2n} \sum_{i \notin S\pow{t}} \normsq*{\tv_i\pow{t+1} - v_i\pow{t}} \,.
    \]
    Since $\normsq*{\tv_i\pow{t+1} - v_i\pow{t}}$ is independent of $S\pow{t}$, we can take an expectation to get 
    \begin{align*}
        \expect_t[\Tcal_{3, u}] 
        &= \frac{\chi^2 L_v}{2n} \sum_{i=1}^n \prob(i \notin S\pow{t}) \, \expect_t \normsq*{\tv_i\pow{t+1} - v_i\pow{t}} \\
        &= \frac{\chi^2 L_v}{2n}\left(1- \frac{m}{n}\right) \sum_{i=1}^n   \expect_t \normsq*{\tv_i\pow{t+1} - v_i\pow{t}} \,.
    \end{align*}
    Plugging in Lemma~\ref{lem:techn:client-drift-2} completes the proof.
\end{proof}

The analysis of the $v$-step is given in the next result. 
\begin{claim} \label{claim:pflam:v-step}
    Consider the setting of Theorem~\ref{thm:pfl-am}
    and assume that $\gamma_v\tau_v L_v \le 1/8$. We have, 
    \[
    \expect_t\left[ F\left(u\pow{t}, V\pow{t+1}\right) - F\left(u\pow{t}, V\pow{t}\right)  \right]
    \le -\frac{\gamma_v \tau_v m \Delta_v\pow{t}}{8n} + \frac{\gamma_v^2\tau_v^2L_v \sigma_v^2 m}{2n}
    + \frac{4 \gamma_v^3 L_v^2 \tau_v^2(\tau_v - 1)\sigma_v^2 m}{n} \,.
    \]
\end{claim}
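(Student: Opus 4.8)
The plan is to exploit the separability of $V \mapsto F\bigl(u\pow{t}, V\bigr)$ and reduce the claim to a standard one-round local-SGD descent bound on each device. Since $F\bigl(u\pow{t}, V\bigr) = \frac1n\sum_{i=1}^n F_i\bigl(u\pow{t}, v_i\bigr)$ and only the devices in $S\pow{t}$ move their personal parameters (with $v_i\pow{t+1} = \tv_i\pow{t+1}$ for $i \in S\pow{t}$ and $v_i\pow{t+1} = v_i\pow{t}$ otherwise), we have
\[
    F\bigl(u\pow{t}, V\pow{t+1}\bigr) - F\bigl(u\pow{t}, V\pow{t}\bigr) = \frac1n\sum_{i \in S\pow{t}}\Bigl(F_i\bigl(u\pow{t}, \tv_i\pow{t+1}\bigr) - F_i\bigl(u\pow{t}, v_i\pow{t}\bigr)\Bigr) \,.
\]
Taking $\expect_t[\,\cdot\,]$ and using that the virtual iterate $\tv_i\pow{t+1}$ is independent of the sampling $S\pow{t}$, together with $\prob\bigl(i \in S\pow{t}\bigr) = m/n$, the right-hand side becomes $\frac{m}{n^2}\sum_{i=1}^n \expect_t\bigl[F_i\bigl(u\pow{t},\tv_i\pow{t+1}\bigr) - F_i\bigl(u\pow{t},v_i\pow{t}\bigr)\bigr]$. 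It therefore suffices to bound, for each fixed $i$, the one-round progress of $\tau_v$ local SGD steps (with step size $\gamma_v$) on the $L_v$-smooth function $g_i := F_i\bigl(u\pow{t}, \cdot\,\bigr)$, started from $\tv_{i,0}\pow{t} = v_i\pow{t}$ and ending at $\tv_{i,\tau_v}\pow{t} = \tv_i\pow{t+1}$.

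For this per-device bound I would apply $L_v$-smoothness of $g_i$ between $v_i\pow{t}$ and $\tv_i\pow{t+1}$, substitute $\tv_i\pow{t+1} - v_i\pow{t} = -\gamma_v\sum_{k=0}^{\tau_v-1}\widetilde{\nabla}_v F_i\bigl(u\pow{t}, \tv_{i,k}\pow{t}\bigr)$, and take $\expect_t$. In the first-order term I would use unbiasedness (Assumption~\ref{asmp:stoc-grad-var}) and the polarization identity $\inp{a}{b} = \tfrac12\normsq{a}+\tfrac12\normsq{b}-\tfrac12\normsq{a-b}$ with $a = \grad_v F_i\bigl(u\pow{t},v_i\pow{t}\bigr)$ and $b = \grad g_i\bigl(\tv_{i,k}\pow{t}\bigr)$, followed by $L_v$-Lipschitzness of $\grad g_i$; this exposes a $-\tfrac{\gamma_v\tau_v}{2}\normsq*{\grad_v F_i(u\pow{t},v_i\pow{t})}$ term, negative intermediate-gradient terms $-\tfrac{\gamma_v}{2}\sum_k \expect_t\normsq*{\grad g_i(\tv_{i,k}\pow{t})}$, and a client-drift term $\tfrac{\gamma_v L_v^2}{2}\sum_k \expect_t\normsq*{\tv_{i,k}\pow{t} - v_i\pow{t}}$. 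In the quadratic term $\tfrac{L_v}{2}\expect_t\normsq*{\tv_i\pow{t+1}-v_i\pow{t}}$ I would bound $\normsq*{\sum_k \widetilde{\nabla}_v F_i}$ by Jensen together with Assumption~\ref{asmp:stoc-grad-var} (exploiting the martingale structure of the noise across inner steps), producing a $\sigma_v^2$ noise term plus a multiple of $\sum_k\expect_t\normsq*{\grad g_i(\tv_{i,k}\pow{t})}$ that is cancelled by the negative intermediate-gradient contribution once $\gamma_v\tau_v L_v \le 1/8$ (which in particular gives $\gamma_v\tau_v L_v \le 1$). Finally I would bound the drift sum $\sum_k\expect_t\normsq*{\tv_{i,k}\pow{t}-v_i\pow{t}}$ via Lemma~\ref{lem:techn:client-drift}, which contributes $O(\gamma_v^2\tau_v^2)\normsq*{\grad_v F_i(u\pow{t},v_i\pow{t})} + O(\gamma_v^2\tau_v^2(\tau_v-1))\sigma_v^2$; using $\gamma_v\tau_v L_v \le 1/8$ to see $\gamma_v^3 L_v^2\tau_v^2 \le \gamma_v\tau_v/16$, the gradient piece is absorbed into the leading $-\tfrac{\gamma_v\tau_v}{2}$ term, leaving $-\tfrac{\gamma_v\tau_v}{8}\normsq*{\grad_v F_i(u\pow{t},v_i\pow{t})}$ plus the claimed $O(\gamma_v^2\tau_v^2 L_v\sigma_v^2)$ and $O(\gamma_v^3\tau_v^2(\tau_v-1)L_v^2\sigma_v^2)$ noise. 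Averaging over $i$, recalling $\Delta_v\pow{t} = \tfrac1n\sum_i\normsq*{\grad_v F_i(u\pow{t},v_i\pow{t})}$, and multiplying by $m/n$ yields the stated inequality.

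The main obstacle is the bookkeeping inside the per-device step: one must carefully track the three competing contributions to $\sum_k\expect_t\normsq*{\grad g_i(\tv_{i,k}\pow{t})}$ (from the first-order term, from the smoothness quadratic term, and from the drift bound) and verify that the assumed step-size condition $\gamma_v\tau_v L_v \le 1/8$ makes their net coefficient nonpositive, so that only the initial gradient $\grad_v F_i\bigl(u\pow{t},v_i\pow{t}\bigr)$ and the pure-noise terms survive in the final bound. The remaining ingredients — the martingale structure of the stochastic-gradient noise over the $\tau_v$ inner iterations and the invocation of Lemma~\ref{lem:techn:client-drift} — are routine.
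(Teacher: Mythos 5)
Your proposal is correct and follows essentially the same route as the paper: reduce the round-level expectation to a per-device bound via the independence of $\tv_i\pow{t+1}$ from $S\pow{t}$ and $\prob(i\in S\pow{t})=m/n$, apply $L_v$-smoothness of $F_i(u\pow{t},\cdot)$, and control the client drift with Lemma~\ref{lem:techn:client-drift} under the step-size condition. The only (cosmetic) difference is that you cancel the intermediate-gradient norms $\sum_k\expect_t\normsq{\grad_v F_i(u\pow{t},\tv_{i,k}\pow{t})}$ from the quadratic term directly against the negative contribution produced by the polarization identity, whereas the paper uses Young's inequality in the first-order term and instead re-expands those norms around $\grad_v F_i(u\pow{t},v_i\pow{t})$ via a second add-subtract-smoothness step; both yield the stated constants.
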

\begin{proof}
    From smoothness, we get, 
    \[
        F_i\left(u\pow{t}, \tv_i\pow{t+1}\right) - F_i\left(u\pow{t}, v_i\pow{t}\right)
        \le \underbrace{
        \inp*{\grad_v F_i\left(u\pow{t}, v_i\pow{t}\right)}{\tv_i\pow{t+1}-v_i\pow{t}}
        }_{\Tcal_{1, v}}
        + \underbrace{
        \frac{L_v}{2}\normsq*{\tv_i\pow{t+1} - v_i\pow{t}}
        }_{\Tcal_{2, v}} \,.
    \]
    We bound the first term as
    \begin{align*}
        \expect_t[\Tcal_{1, v}]
        &= -\gamma_v \expect_t \inp*{\grad_v F_i\left(u\pow{t}, v_i\pow{t}\right)}{\sum_{k=0}^{\tau_v-1} \grad_v F_i\left(u\pow{t}, \tv_{i,k}\pow{t}\right)} \\
        &\begin{aligned}
        =& -\gamma_v\tau_v \normsq*{\grad_v F_i\left(u\pow{t}, v_i\pow{t}\right)}
        \\ &
        -\gamma_v \sum_{k=0}^{\tau_v-1} \expect_t \inp*{\grad_v F_i\left(u\pow{t}, v_i\pow{t}\right)}{\grad_v F_i\left(u\pow{t}, \tv_{i,k}\pow{t}\right) - \grad_v F_i\left(u\pow{t}, v_i\pow{t}\right)}
        \end{aligned}\\
        &\le -\frac{\gamma_v\tau_v}{2}\normsq*{\grad_v F_i\left(u\pow{t}, v_i\pow{t}\right)}
        + \frac{\gamma_v}{2} \sum_{k=0}^{\tau_v-1} \expect_t\normsq*{\grad_v F_i\left(u\pow{t}, \tv_{i,k}\pow{t}\right) - \grad_v F_i\left(u\pow{t}, v_i\pow{t}\right)} \\
        &\le -\frac{\gamma_v\tau_v}{2}\normsq*{\grad_v F_i\left(u\pow{t}, v_i\pow{t}\right)}
        + \frac{\gamma_v L_v^2}{2} \sum_{k=0}^{\tau_v-1}\normsq*{\tv_{i,k}\pow{t} - v_i\pow{t}} \,.
    \end{align*}
    Next, we observe that 
    \[
        \expect_z\normsq{G_{i, v}(u, v_i, z)}
        = \normsq*{\grad_v F_i(u, v_i)} + \expect_z\normsq{G_{i, v}(u, v_i, z) - \grad_v F_i(u, v_i)}
        \le \normsq*{\grad_v F_i(u, v_i)} + \sigma_v^2 \,.
    \]
    We invoke this inequality to handle the second term as
    \begin{align*}
        \expect_t[\Tcal_{2, v}]
        &\le \frac{\gamma_v^2 L_v\tau_v}{2} \sum_{k=0}^{\tau_v-1} \expect_t\normsq*{G_{i, v}\left(u\pow{t}, \tv_{i,k}\pow{t}, z_{i,k}\pow{t}\right)} \\
        &\le \frac{\gamma_v^2 L_v\tau_v^2 \sigma_v^2}{2} + \frac{\gamma_v^2 L_v \tau_v}{2}\sum_{k=0}^{\tau_v-1} \expect_t\normsq*{\grad_v F_i\left(u\pow{t}, \tv_{i,k}\pow{t}\right)} \\
        &
        \begin{aligned}
        \le \frac{\gamma_v^2 L_v\tau_v^2 \sigma_v^2}{2}
        &+ \gamma_v^2 L_v \tau_v^2 \normsq*{\grad_v F_i\left(u\pow{t}, v_i\pow{t} \right)} \\
        &+ \gamma_v^2 L_v \tau_v\sum_{k=0}^{\tau_v-1} \expect_t\normsq*{\grad_v F_i\left(u\pow{t}, \tv_{i,k}\pow{t}\right) - \grad_v F_i\left(u\pow{t}, v_i\pow{t} \right)} 
        \end{aligned}
        \\
        &\le \frac{\gamma_v^2 L_v\tau_v^2 \sigma_v^2}{2}
        + \gamma_v^2 L_v \tau_v^2 \normsq*{\grad_v F_i\left(u\pow{t}, v_i\pow{t} \right)}
        + \gamma_v^2 L_v^3 \tau_v\sum_{k=0}^{\tau_v-1} \expect_t\normsq*{\tv_{i,k}\pow{t} -  v_i\pow{t}} \,.
    \end{align*}
    Plugging these bounds for $\Tcal_{1, v}$ and $\Tcal_{2, v}$ into the initial smoothness bound
    and using $\gamma_v L_v \tau_v \le 1/4$ gives
    \begin{align*}
        \expect_t\Big[ 
            F_i\left(u\pow{t}, \tv_i\pow{t+1}  \right)
            &- F_i\left(u\pow{t}, v_i\pow{t} \right)
        \Big]
        \le \\ &-\frac{\gamma_v\tau_v}{4}\normsq*{\grad_v F_i\left(u\pow{t}, v_i\pow{t}\right)}
        + \gamma_v L_v^2 \sum_{k=0}^{\tau_v-1}\normsq*{\tv_{i,k}\pow{t} - v_i\pow{t}} + \frac{\gamma_v^2 L_v\tau_v^2\sigma_v^2}{2} \,.
    \end{align*}
    We invoke Lemma~\ref{lem:techn:client-drift} to bound the $\sum_k \expect_t\normsq{\tv_{i,k}\pow{t} - v_i\pow{t}}$ term, which is also known as client drift. We simplify some coefficients using $8\gamma_v \tau_v L_v \le 1$ to get 
    \begin{align*}
        \expect_t\Big[ 
            F_i\left(u\pow{t}, \tv_i\pow{t+1}  \right)
            &- F_i\left(u\pow{t}, v_i\pow{t} \right)
        \Big]
        \le \\&-\frac{\gamma_v\tau_v}{8}\normsq*{\grad_v F_i\left(u\pow{t}, v_i\pow{t}\right)} + \frac{\gamma_v^2 L_v\tau_v^2\sigma_v^2}{2} + 4 \gamma_v^3 L_v \tau_v^2(\tau_v-1)\sigma_v^2 \,.
    \end{align*}
    It remains to invoke that $S\pow{t}$ is a uniformly random sample of $m$ devices from $\{1, \cdots, n\}$ and that $\tv_i\pow{t+1}$ is independent of $S\pow{t}$. 
    To this end, note that
    \begin{align*}
    \expect_t\left[F\left(u\pow{t}, V\pow{t+1}\right) - F\left(u\pow{t}, V\pow{t}\right)\right]
    &= \frac{m}{n} \, \expect_t \left[
    \frac{1}{m} \sum_{i \in S\pow{t}} F_i\left(u\pow{t}, \tv_i\pow{t+1}\right) - F_i\left(u\pow{t}, v_i\pow{t}\right) 
    \right] 
    \\&\le \frac{m}{n^2}\sum_{i=1}^n \expect_t \left[
     F_i\left(u\pow{t}, \tv_i\pow{t+1}\right) - F_i\left(u\pow{t}, v_i\pow{t}\right) 
    \right]  \,.
    \end{align*}
    Plugging in the previous bound completes the proof.
\end{proof}

\begin{remark} \label{remark:pfl-am:grad-diversity}
    We only invoked the partial gradient diversity assumption (Assumption~\ref{assmp:grad-diversity}) at (virtual) iterates $(u\pow{t}, \tV\pow{t+1})$; therefore, it suffices if the assumption only holds at iterates $(u\pow{t}, \tV\pow{t+1})$ generated by FedAlt, rather than at all $(u, V)$.
\end{remark}
 
\subsection{Convergence Analysis of \pflsu} \label{sec:a:proof:su}
\begin{algorithm}[t!]
	\caption{\pflsu: Simultaneous update of shared and personal parameters}
	\label{algo:pfl-su}
\begin{algorithmic}[1]
		\STATE \textbf{Input:} Initial iterates $u\pow{0}, V\pow{0}$,
		    Number of communication rounds $T$, 
		    Number of devices per round $m$, 
		    Number of local updates $\tau$,
		    Local step sizes $\gamma_u, \gamma_v$.
	    \FOR{$t=0, 1, \cdots, T-1$}
	        \STATE Sample $m$ devices from $[n]$ without replacement in $S\pow{t}$\label{line:pfl-su:sample}
	         \FOR{each selected device $i \in S\pow{t}$ in parallel}
	        \label{line:pfl-su:local}
	            \STATE Initialize $v_{i, 0}\pow{t} = v_i\pow{t}$
	            and $u_{i, 0}\pow{t} = u\pow{t}$
	            \FOR{$k=0, \cdots, \tau-1$}
	            \STATE \texttt{// Update all parameters jointly}
	                \STATE Sample data $z_{i,k}\pow{t} \sim \cD_i$	                \label{line:pfl-su:w-v}
	    	      \STATE $v_{i, k+1}\pow{t} = v_{i,k}\pow{t} - \gamma_v G_{i,v}(u_{i,k}\pow{t}, v_{i,k}\pow{t}, z_{i,k}\pow{t})$
	    	      \STATE $u_{i, k+1}\pow{t} = u_{i,k}\pow{t} - \gamma_u G_{i,u}(u_{i,k}\pow{t}, v_{i,k}\pow{t}, z_{i,k}\pow{t})$
	    	    \ENDFOR
	    	    \STATE Update $v_i\pow{t+1} = v_{i, \tau}\pow{t}$
	    	    and $u_i\pow{t+1} = u_{i, \tau}\pow{t}$
	    	\ENDFOR
	    	\STATE Update $u\pow{t+1} = {\sum_{i \in S\pow{t}} \alpha_i u_{i}\pow{t+1}}/{\sum_{i \in S\pow{t}} \alpha_i}$ at the server with secure aggregation
	    	\label{line:pfl-su:aggregation}
	    \ENDFOR
	    \STATE \textbf{return} $u\pow{T}, v_1\pow{T}, \cdots, v_n\pow{T}$
\end{algorithmic}
\end{algorithm}

We give the full form of \pflsu in Algorithm~\ref{algo:pfl-su} for the general case of unequal $\alpha_i$'s but focus on $\alpha_i=1/n$ for the analysis.
Theorem~\ref{thm:pfl-su:main} of the main paper is a simplification of Corollary~\ref{cor:a:pfl-su} below, which in turn is proved based on Theorem~\ref{thm:pfl-su}.

Throughout this section, we use constants
\begin{align*}
    \sigma_{\mathrm{sim}, 1}^2 = (1+\chi^2)\left(\frac{\delta^2}{L_u}\left(1-\frac{m}{n}\right) + \frac{\sigma_u^2}{L_u} + \frac{\sigma_v^2m}{L_v n}\right)\,,
    \quad\text{and}\,, \quad
    \sigma_{\mathrm{sim}, 2}^2 = (1+\chi^2)\left(\frac{\delta^2}{L_u} + \frac{\sigma_u^2}{L_u} + \frac{\sigma_v^2}{L_v}\right) (1-\tau^{-1}) \,.
\end{align*}

\begin{theorem}[\textbf{Convergence of \hlsim{FedSim}}]
\label{thm:pfl-su}
Suppose Assumptions~\ref{asmp:smoothness}, \ref{asmp:stoc-grad-var} and~\ref{asmp:grad-diversity} hold and the learning rates in FedSim are chosen as $\gamma_u = \lr/(L_u\tau)$ and $\gamma_v=\lr/(L_v\tau)$ with
    \[
        \lr \le \min\left\{
        \frac{1}{12(1+\chi^2)(1+\rho^2)},~
        \sqrt{\frac{m/n}{196 (1-\tau^{-1})(1+\chi^2)(1+\rho^2)}}\right\} \,.
    \]
    Then, ignoring absolute constants, we have
    \begin{align*}
        &\frac{1}{T}\sum_{t=0}^{T-1} \left(
        \frac{1}{L_u} \expect\bigl[\Delta_u\pow{t}\bigr]
        + \frac{m}{n L_v} \expect\bigl[\Delta_v\pow{t} \bigr] \right)
        \le 
        \frac{\Delta F_0}{\lr T}
        + \lr \, \sigma_{\mathrm{sim}, 1}^2
        + 
        \lr^2 \, \sigma_{\mathrm{sim}, 2}^2 \,.
    \end{align*}
\end{theorem}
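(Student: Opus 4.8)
The plan is to follow the template of the FedAlt analysis (Theorem~\ref{thm:pfl-am}) but exploiting a crucial simplification: because FedSim evaluates both partial gradients at the \emph{common} local iterate, the first-order term of the smoothness upper bound is $\Fcal\pow{t}$-measurable, so the dependent-random-variables obstacle of \S\ref{sec:algos:vfp} never arises and no virtual iterates are needed. The argument is thus a coupled two-block version of a standard local-SGD-with-partial-participation proof. First I would apply the joint smoothness of $F$ in $(u,V)$ implied by Assumption~\ref{asmp:smoothness}: writing $\Delta u = u\pow{t+1}-u\pow{t}$, $\Delta v_i = v_i\pow{t+1}-v_i\pow{t}$ and evaluating all gradients at $(u\pow{t},V\pow{t})$,
\[
F\bigl(u\pow{t+1},V\pow{t+1}\bigr) - F\bigl(u\pow{t},V\pow{t}\bigr)
\le \bigl\langle \grad_u F,\Delta u\bigr\rangle + \frac{1}{n}\sum_i \bigl\langle \grad_v F_i,\Delta v_i\bigr\rangle + \frac{L_u}{2}\normsq{\Delta u} + \frac{L_v}{2n}\sum_i\normsq{\Delta v_i} + \frac{L_{uv}}{n}\sum_i \norm{\Delta u}\,\norm{\Delta v_i},
\]
and I would absorb the cross term via $L_{uv}\le\chi\sqrt{L_uL_v}$ and Young's inequality into the two diagonal quadratic terms; this is the origin of the $(1+\chi^2)$ prefactor in $\sigma_{\mathrm{sim},1}^2,\sigma_{\mathrm{sim},2}^2$.

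Next I would take the conditional expectation $\expect_t$ over the sampling $S\pow{t}$ and the minibatch noise. Since $\grad_u F(u\pow{t},V\pow{t})$ and $\grad_v F_i(u\pow{t},v_i\pow{t})$ are $\Fcal\pow{t}$-measurable, $\expect_t$ passes inside the inner products. Using that $S\pow{t}$ is a uniform sample of $m$ devices drawn without replacement, $\expect_t[\Delta u] = -\frac{\gamma_u}{n}\sum_{i=1}^n\sum_{k=0}^{\tau-1}\expect_t[\grad_u F_i(u_{i,k}\pow{t},v_{i,k}\pow{t})]$, which I would split into the ``signal'' $-\gamma_u\tau\,\grad_u F(u\pow{t},V\pow{t})$ plus a client-drift error controlled, via smoothness, by $\sum_{i,k}\expect_t\normsq{u_{i,k}\pow{t}-u\pow{t}}$ and $\sum_{i,k}\expect_t\normsq{v_{i,k}\pow{t}-v_i\pow{t}}$; the $v$-term is handled symmetrically, except that only $m$ of the $n$ blocks of $V$ move, so its progress carries an extra factor $m/n$ — exactly the $m/n$ weighting on $\Delta_v\pow{t}$ in the statement. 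For the quadratic terms I would use $\expect_t\normsq{\Delta u} = \normsq{\expect_t[\Delta u]} + \mathrm{Var}_t(\Delta u)$ and decompose the variance into a minibatch part bounded via Assumption~\ref{asmp:stoc-grad-var} and a device-sampling part which, by the sampling-without-replacement lemma (Lemma~\ref{lemma:techn:sampling-wo-replacement}), contributes $(1-m/n)$ times the partial gradient diversity $\delta^2 + \rho^2\normsq{\grad_u F}$ of Assumption~\ref{asmp:grad-diversity}.

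Then I would close the coupled client-drift recursion with a two-block analogue of Lemma~\ref{lem:techn:client-drift}: because $u_{i,k}\pow{t}$ and $v_{i,k}\pow{t}$ are updated simultaneously, the drift of one feeds the other, so each is bounded in terms of $\normsq{\grad_u F(u\pow{t},V\pow{t})}$, $\normsq{\grad_v F_i(u\pow{t},v_i\pow{t})}$, $\sigma_u^2$, $\sigma_v^2$, $\delta^2$ (and a $\rho^2$-scaled gradient term). The two stepsize restrictions on $\lr$ — the $1/(12(1+\chi^2)(1+\rho^2))$ bound and the $\sqrt{(m/n)/(196(1-\tau^{-1})(1+\chi^2)(1+\rho^2))}$ bound — are precisely what is needed to absorb the drift and the $\rho^2$-scaled diversity back into the negative $\Delta_u\pow{t},\Delta_v\pow{t}$ terms. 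Assembling everything yields a one-round descent inequality
\[
\expect_t\bigl[F(u\pow{t+1},V\pow{t+1})\bigr] - F(u\pow{t},V\pow{t})
\le -c_1\gamma_u\tau\,\Delta_u\pow{t} - c_2\,\tfrac{m}{n}\gamma_v\tau\,\Delta_v\pow{t} + (1+\chi^2)\, N\pow{t},
\]
where $N\pow{t}$ collects the $O(\lr^2)$ and $O(\lr^3)$ noise contributions from $\sigma_u^2,\sigma_v^2,\delta^2$. Taking total expectations, telescoping over $t=0,\dots,T-1$, dividing by $T$, and substituting $\gamma_u=\lr/(L_u\tau)$, $\gamma_v=\lr/(L_v\tau)$ gives the stated bound, with $\sigma_{\mathrm{sim},1}^2$ collecting the $O(\lr)$ noise and $\sigma_{\mathrm{sim},2}^2$ the $O(\lr^2)$ noise (the $(1-\tau^{-1})$ factor there reflecting that client drift vanishes when $\tau=1$); the optimized-stepsize corollary then follows from Lemma~\ref{lem:sfl:best-params-2}.

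The main obstacle will be the honest bookkeeping of the $m/n$ and $(1+\chi^2)$ factors together with the coupled drift recursion: the $u$-block sees $m/n$ only as variance reduction on the diversity term, whereas the $V$-block sees it as an overall multiplier on progress, and these must be reconciled so that both the negative terms and the noise terms emerge with the coefficients claimed in $\sigma_{\mathrm{sim},1}^2,\sigma_{\mathrm{sim},2}^2$. Unlike FedAlt, where the $v$- and $u$-phases are analyzed separately, here the drifts of $u$ and $v$ feed into one another, so the drift recursion must be unrolled jointly under the stated stepsize conditions before it can be absorbed.
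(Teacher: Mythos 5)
Your proposal matches the paper's proof essentially step for step: the block-smoothness bound with the cross term absorbed via Young's inequality into a $(1+\chi^2)$ factor (Lemma~\ref{lemma:techn:block-smoothness}), the observation that the first-order terms are $\Fcal\pow{t}$-measurable so no virtual-full-participation decoupling is needed, the sampling-without-replacement variance bound combined with partial gradient diversity, and the coupled two-block client-drift recursion (Lemma~\ref{lemma:pfl-su:client-drift}) absorbed under the stated stepsize conditions before telescoping. No gaps; this is the paper's argument.
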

Before proving the theorem, we give the following corollary with optimized learning rates. 

\begin{corollary}[\textbf{Final Rate of FedSim}] \label{cor:a:pfl-su}
    Consider the setting of Theorem~\ref{thm:pfl-su} and let the total number of rounds $T$ be known in advance.
    Suppose we set the learning rates $\gamma_u = \lr/(\tau L_u)$ and $\gamma_v = \lr/(\tau L_v)$, where (ignoring absolute constants), 
    \begin{align*}
        \lr = 
        \left(\frac{\Delta F_0}{T \, \sigma_{\mathrm{sim}, 1}^2 }\right)^{1/2}
        \bigwedge 
        \left(
            \frac{\Delta F_0^2}{T^2 \, \sigma_{\mathrm{sim}, 2}^2}
        \right)^{1/3}
        \bigwedge \frac{1}{(1+\chi^2) (1+\rho^2)}
        \bigwedge \sqrt{\frac{m/n}{(1-\tau^{-1}) (1+\chi^2) (1+\rho^2)}} \,.
    \end{align*}
    We have, ignoring absolute constants, 
    \begin{align*}
        &\frac{1}{T}\sum_{t=0}^{T-1}
        \left(
        \frac{1}{L_u} \expect\normsq*{\grad_u F\left(u\pow{t}, V\pow{t}\right)}
        + 
        \frac{m}{L_v n^2} \sum_{i=1}^n \expect\normsq*{\grad_v F_i\left(u\pow{t}, v_i\pow{t}\right)}
        \right)
        \le  \\
        &\frac{\left(\Delta F_0 \, \sigma_{\mathrm{sim}, 1}^2 \right)^{1/2}}{\sqrt{T}} 
        +  \frac{\left(\Delta F_0^2 \, \sigma_{\mathrm{sim}, 2}^2 \right)^{1/3}}{T^{2/3}}  
        + 
        \frac{\Delta F_0 (1+\chi^2) (1+\rho^2)}{T}
        + \frac{\Delta F_0 \sqrt{\frac{n}{m} (1-\tau^{-1}) (1+\chi^2) (1+\rho^2)}}{T}
        \,.
    \end{align*}
\end{corollary}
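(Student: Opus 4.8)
The plan is to treat Theorem~\ref{thm:pfl-su} as a black box and merely optimize over the effective learning rate $\lr$, exactly paralleling the one-line proof of Corollary~\ref{cor:a:pfl-am}. Theorem~\ref{thm:pfl-su} guarantees that for every $\lr$ in the admissible range $\lr \le \lr_{\max}$, where $\lr_{\max} := \min\{(12(1+\chi^2)(1+\rho^2))^{-1},\ \sqrt{(m/n)/(196(1-\tau^{-1})(1+\chi^2)(1+\rho^2))}\}$, the quantity on the left-hand side of the claimed inequality is at most $\Delta F_0/(\lr T) + \lr\,\sigma_{\mathrm{sim},1}^2 + \lr^2\,\sigma_{\mathrm{sim},2}^2$, up to absolute constants. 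So the only remaining task is to minimize the scalar function $g(\lr) = a/\lr + b\lr + c\lr^2$ over $0 < \lr \le \lr_{\max}$, with $a = \Delta F_0/T$, $b = \sigma_{\mathrm{sim},1}^2$, $c = \sigma_{\mathrm{sim},2}^2$.

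Next I would invoke the standard step-size tuning lemma (Lemma~\ref{lem:sfl:best-params-2}): the choice $\lr = \min\{\sqrt{a/b},\ (a/c)^{1/3},\ \lr_{\max}\}$ yields $g(\lr) = O(\sqrt{ab} + (a^2 c)^{1/3} + a/\lr_{\max})$. The reasoning is the usual case split: if the minimum is attained at $\sqrt{a/b}$ then $a/\lr + b\lr = 2\sqrt{ab}$ and the $c\lr^2$ term is lower order; if it is attained at $(a/c)^{1/3}$ then $a/\lr + c\lr^2$ is of order $(a^2c)^{1/3}$ and the $b\lr$ term is lower order; and if it is attained at $\lr_{\max}$ then $a/\lr = a/\lr_{\max}$ dominates while the other two terms are below their unconstrained optima. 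One then checks that this $\lr$ is exactly the one displayed in the statement, once one notes that the two extra clamps appearing there, $((1+\chi^2)(1+\rho^2))^{-1}$ and $\sqrt{(m/n)/((1-\tau^{-1})(1+\chi^2)(1+\rho^2))}$, coincide with $\lr_{\max}$ up to absolute constants.

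Finally I would substitute $a, b, c$ back in: $\sqrt{ab} = (\Delta F_0\,\sigma_{\mathrm{sim},1}^2)^{1/2}/\sqrt{T}$, $(a^2c)^{1/3} = (\Delta F_0^2\,\sigma_{\mathrm{sim},2}^2)^{1/3}/T^{2/3}$, and $a/\lr_{\max} = (\Delta F_0/T)\max\{12(1+\chi^2)(1+\rho^2),\ \sqrt{196(n/m)(1-\tau^{-1})(1+\chi^2)(1+\rho^2)}\}$, which up to constants is $\frac{\Delta F_0(1+\chi^2)(1+\rho^2)}{T} + \frac{\Delta F_0}{T}\sqrt{\frac{n}{m}(1-\tau^{-1})(1+\chi^2)(1+\rho^2)}$; summing these three pieces reproduces the right-hand side verbatim. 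The left-hand side needs no adjustment: unlike FedAlt, FedSim updates $u$ and $v$ simultaneously, so there is no virtual-full-participation detour, and Theorem~\ref{thm:pfl-su} is already stated in terms of $\Delta_u\pow{t} = \normsq{\grad_u F(u\pow{t}, V\pow{t})}$ and $\Delta_v\pow{t} = \frac{1}{n}\sum_i \normsq{\grad_v F_i(u\pow{t}, v_i\pow{t})}$, so no analogue of the $\widetilde\Delta_u\pow{t} \to \Delta_u\pow{t}$ translation from the FedAlt proof is required.

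I expect no genuine obstacle in the corollary itself — all the difficulty lives in Theorem~\ref{thm:pfl-su}, whose proof carries out the descent-lemma argument with simultaneous updates and the client-drift bounds, and which is assumed here. The only point demanding a little care is bookkeeping: verifying that the clamped choice of $\lr$ respects the admissibility constraint $\lr \le \lr_{\max}$ of the theorem (it does, since we clamp by something of the same order as $\lr_{\max}$), and that the $O(\cdot)$ notation cleanly absorbs the numerical constants $12$ and $196$ and the factor-of-two losses in the case split.
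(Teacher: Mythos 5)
Your proposal is correct and is exactly the paper's proof: Corollary~\ref{cor:a:pfl-su} is obtained by applying the step-size tuning lemma (Lemma~\ref{lem:sfl:best-params-2}) to the bound of Theorem~\ref{thm:pfl-su}, with the clamp terms reproducing the admissibility constraint on $\lr$ and the $a/\lr_{\max}$ term yielding the two $O(1/T)$ contributions. Your observation that no $\widetilde\Delta_u\pow{t}\to\Delta_u\pow{t}$ translation is needed for FedSim is also accurate.
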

\begin{proof}
    The proof follows from invoking Lemma~\ref{lem:sfl:best-params-2}
    on the bound of Theorem~\ref{thm:pfl-su}.
\end{proof}

\begin{remark}[\textbf{Asymptotic Rate}]
    The asymptotic $1/\sqrt{T}$ rate of Theorem~\ref{thm:pfl-su:main} is achieved when the $1/T$ term is dominated by the $1/\sqrt{T}$ term. This happens when (ignoring absolute constants)
    \[
        T \ge 
        \frac{\Delta F_0 (1+\chi^2)(1+\rho^2)}{\sigma_{\mathrm{sim}, 1}^2} \,
        \max\left\{
        (1-\tau^{-1}) \frac{n}{m}, \,
        \, (1+\chi^2)(1+\rho^2) \right\} \,.
    \]
    Note that $T \ge \Omega(n/m)$ is necessary for each device to be seen at least once on average, or the personal parameters of some devices will never be updated.
\end{remark}

We now prove Theorem~\ref{thm:pfl-su}.
\begin{proof}[Proof of Theorem~\ref{thm:pfl-su}]
    The proof mainly applies the smoothness upper bound to write out a descent condition with suitably small noise terms. We start with some notation.

    \myparagraph{Notation}
    Let $\Fcal\pow{t}$ denote the $\sigma$-algebra generated by $\left(u\pow{t}, V\pow{t}\right)$ and denote $\expect_t[\cdot] = \expect[\cdot| \Fcal\pow{t}]$.
    For all devices, including those not selected in each round, we define virtual sequences $\tu_{i,k}\pow{t}, \tv_{i,k}\pow{t}$  as the SGD updates in Algorithm~\ref{algo:pfl-su}  for all devices regardless of whether they are selected.  For the selected devices $k\in S\pow{t}$, we have $\left(u_{i,k}\pow{t}, v_{i,k}\pow{t}\right) = \left(\tu_{i,k}\pow{t}, \tv_{i,k}\pow{t}\right)$. Note now that the random variables $\tu_{i,k}\pow{t}, \tv_{i,k}\pow{t}$ are independent of the device selection $S\pow{t}$.
The updates for the devices $i \in S\pow{t}$ are given by 
    \begin{align*}
        v_i\pow{t+1} &= v_i\pow{t} - \gamma_v \sum_{k=0}^{\tau-1} G_{i,v}\left(\tu_{i,k}\pow{t}, \tv_{i,k}\pow{t}, z_{i,k}\pow{t} \right) \,,
    \end{align*}
    and the server update is given by
    \begin{align}\label{eqn:pfl-su:proof:u-update}
        u\pow{t+1} &= u\pow{t} - \frac{\gamma_u}{m} \sum_{i \in S\pow{t}}\sum_{k=0}^{\tau-1} G_{i,u}\left(\tu_{i,k}\pow{t}, \tv_{i,k}\pow{t}, z_{i,k}\pow{t} \right)\,.
    \end{align}
    
    \myparagraph{Proof Outline}
    We use the smoothness of $F_i$, more precisely Lemma~\ref{lemma:techn:block-smoothness}, to obtain
    \begin{align} \label{eq:pfl-su:proof:1}
    \begin{aligned} 
         & F\bigl(u\pow{t+1}, V\pow{t+1}\bigr)
        - F\bigl(u\pow{t}, V\pow{t}\bigr) \\
    \le &~\underbrace{\inp*{\grad_u F(u\pow{t}, V\pow{t})}{u\pow{t+1}-u\pow{t}}}_{\Tcal_{1,u}}+ 
        \underbrace{\frac{1}{n} \sum_{i=1}^n \inp*{\grad_v F_i(u\pow{t}, v_i\pow{t})}{v_i\pow{t+1}-v_i\pow{t}}}_{\Tcal_{1, v}} \\
        & +  
        \underbrace{\frac{L_u(1+\chi^2)}{2}\normsq*{u\pow{t+1}-u\pow{t}}}_{\Tcal_{2, u}}  
        + \underbrace{\frac{1}{n}\sum_{i=1}^n\frac{L_v(1+\chi^2)}{2}\normsq*{v_i\pow{t+1}-v_i\pow{t}}}_{\Tcal_{2, v}} \, .
    \end{aligned}
    \end{align}
Our goal will be to bound each of these terms to get a descent condition from each step of the form
    \begin{align*}
        \expect_t&\left[F\bigl(u\pow{t+1}, V\pow{t+1}\bigr)
        - F\bigl(u\pow{t}, V\pow{t}\bigr)\right]
         \\
        &\leq  - \frac{\gamma_u\tau}{8} \normsq*{\grad_u F\bigl(u\pow{t}, V\pow{t}\bigr)}
        - \frac{\gamma_v \tau m}{8n^2} \sum_{i=1}^n \normsq*{\grad_v F_i\bigl(u\pow{t}, v_i\pow{t}\bigr)} + O(\gamma_u^2 + \gamma_v^2) \,,
    \end{align*}
    where the $O(\gamma_u^2 + \gamma_v^2)$ terms are controlled using the bounded variance and gradient diversity assumptions. Telescoping this descent condition gives the final bound.
    
    \myparagraph{Main Proof}
    Towards this end, we prove non-asymptotic bounds on each of the terms
    $\Tcal_{1,v}$, $\Tcal_{1,u}$, $\Tcal_{2,v}$ and $\Tcal_{2,u}$,
    in Claims \ref{claim:pfl-su:t1v} to \ref{claim:pfl-su:t2w} respectively.
    We then invoke them to get the bound
    \begin{align} \label{eq:pfl-su:proof:2}
    \begin{aligned}
        \expect_t&\left[F\bigl(u\pow{t+1}, V\pow{t+1}\bigr)
        - F\bigl(u\pow{t}, V\pow{t}\bigr)\right]
        \le 
        -\frac{\gamma_u\tau}{4}\Delta_u\pow{t}
        - \frac{\gamma_v\tau m}{4n}\Delta_v\pow{t} \\
        &+ \frac{L_u(1+\chi^2)\gamma_u^2\tau^2}{2}\left( \sigma_u^2 + \frac{12\delta^2}{m}(1-m/n)\right) 
        + \frac{L_v(1+\chi^2)\gamma_v^2 \tau^2 \sigma_v^2 m}{2n} 
        \\
        &+ 
        \frac{2}{n}\sum_{i=1}^n \sum_{k=0}^{\tau-1} \expect_t\normsq*{u_{i,k}\pow{t} - u\pow{t}}\left( L_u^2\gamma_u + \frac{m}{n} \chi^2 L_u L_v \gamma_v \right)
        \\
        &+ \frac{2}{n}\sum_{i=1}^n \sum_{k=0}^{\tau-1} \expect_t\normsq*{v_{i,k}\pow{t}-v\pow{t}} \left( \frac{m}{n} L_v^2 \gamma_v + \chi^2 L_u L_v \gamma_u \right) \,.
    \end{aligned}
    \end{align}
    Note that we simplified some constants appearing on the gradient norm terms using 
    \[
        \gamma_u \le \big(12 L_u(1+\chi^2)(1+\rho^2)\tau\big)^{-1}
        \quad\text{and}\quad
        \gamma_v \le \big(6 L_v(1+\chi^2)\tau\big)^{-1} .
    \]
    Our next step is to 
    bound the last two lines of \eqref{eq:pfl-su:proof:2} with Lemma~\ref{lemma:pfl-su:client-drift} and invoke the gradient diversity assumption (Assumption~\ref{asmp:grad-diversity}) as 
    \[
    \frac{1}{n} \sum_{i=1}^n \normsq*{\grad_u F_i\bigl(u\pow{t}, v_i\pow{t} \bigr)}
    \le \delta^2 + (1+\rho^2) \normsq*{\grad_u F\bigl(u\pow{t}, V\pow{t}\bigr)} \,.
    \]
    This gives, after plugging in the learning rates and further simplifying the constants,
    \begin{align*}
        &\expect_t\left[F\bigl(u\pow{t+1}, V\pow{t+1}\bigr)
        - F\bigl(u\pow{t}, V\pow{t}\bigr)\right] \\
        \le &  
        -\frac{c\Delta_u\pow{t}}{8 L_u}
        - \frac{cm \Delta_v\pow{t}}{8 L_v n} 
        + c^2(1+\chi^2) \left(\frac{\sigma_u^2}{2L_u}
        + \frac{m\sigma_v^2}{n L_v}
        + \frac{6\delta^2}{L_u m}\left(1-\frac{m}{n}\right)\right) \\
        &
        + c^3(1+\chi^2)(1-\tau^{-1})\left(\frac{24\delta^2}{L_u} + \frac{4\sigma_u^2}{L_u} + \frac{4 \sigma_v^2}{L_u} \right) \,.
    \end{align*}
    Taking full expectation, telescoping the series over $t=0,\cdots,T-1$ and rearranging the resulting terms give the desired bound in Theorem~\ref{thm:pfl-su}.
\end{proof}

\smallskip
\begin{claim}[Bounding $\Tcal_{1, v}$]\label{claim:pfl-su:t1v}
    Let $\Tcal_{1,v}$ be defined as in \eqref{eq:pfl-su:proof:1}. We have,
        \begin{align*}
        \expect_t[\Tcal_{1, v}]
        \le & \, -\frac{\gamma_v\tau m}{2n^2}\sum_{i=1}^n  \normsq*{\grad_v F_i\left(u\pow{t}, v_i\pow{t}\right)} \\
        &+ \frac{\gamma_v m}{n} \sum_{i=1}^n \sum_{k=0}^{\tau-1} \expect_t\left[ \chi^2 L_u L_v \normsq*{\tu_{i,k}\pow{t} - u\pow{t}}
        + L_v^2 \normsq*{\tv_{i,k}\pow{t} - v_i\pow{t}}
        \right] \,.
    \end{align*}
\end{claim}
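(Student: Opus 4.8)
The plan is a standard stochastic-gradient-descent bookkeeping argument, where the only real subtlety — decoupling the $v$-increment from the device sampling — is already supplied by the virtual-sequence construction in the ``Notation'' paragraph of the proof of Theorem~\ref{thm:pfl-su}. First I would record that in FedSim only selected devices change their personal parameters, so that $v_i\pow{t+1} - v_i\pow{t} = \mathbbm{1}\{i \in S\pow{t}\}\,\Delta\tv_i\pow{t}$, where $\Delta\tv_i\pow{t} := -\gamma_v \sum_{k=0}^{\tau-1} G_{i,v}\bigl(\tu_{i,k}\pow{t}, \tv_{i,k}\pow{t}, z_{i,k}\pow{t}\bigr)$ is the virtual $v$-increment of device $i$; by construction $\Delta\tv_i\pow{t}$ is independent of $S\pow{t}$ conditionally on $\Fcal\pow{t}$, and is an unbiased proxy in the sense that $\expect_t[\Delta\tv_i\pow{t}] = -\gamma_v \sum_{k=0}^{\tau-1}\expect_t[\grad_v F_i(\tu_{i,k}\pow{t}, \tv_{i,k}\pow{t})]$ by Assumption~\ref{asmp:stoc-grad-var}. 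Taking $\expect_t$ in the definition of $\Tcal_{1,v}$, the indicator decouples and contributes a factor $\prob(i \in S\pow{t}) = m/n$, so that $\expect_t[\Tcal_{1,v}] = \tfrac{m}{n^2}\sum_{i=1}^n \expect_t\inp*{\grad_v F_i(u\pow{t},v_i\pow{t})}{\Delta\tv_i\pow{t}}$.

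Next, for a fixed $i$ I would expand the inner product by adding and subtracting $\grad_v F_i(u\pow{t}, v_i\pow{t})$ inside each summand, obtaining $\expect_t\inp*{\grad_v F_i(u\pow{t},v_i\pow{t})}{\Delta\tv_i\pow{t}} = -\gamma_v\tau\normsq*{\grad_v F_i(u\pow{t},v_i\pow{t})} - \gamma_v\sum_{k=0}^{\tau-1}\expect_t\inp*{\grad_v F_i(u\pow{t},v_i\pow{t})}{\grad_v F_i(\tu_{i,k}\pow{t},\tv_{i,k}\pow{t}) - \grad_v F_i(u\pow{t},v_i\pow{t})}$. I would bound the cross term with $\inp{a}{b} \le \tfrac12\normsq{a} + \tfrac12\normsq{b}$, which peels off $\tfrac{\gamma_v\tau}{2}\normsq*{\grad_v F_i(u\pow{t},v_i\pow{t})}$ and leaves (after combining with the leading term) $-\tfrac{\gamma_v\tau}{2}\normsq*{\grad_v F_i(u\pow{t},v_i\pow{t})}$, plus $\tfrac{\gamma_v}{2}\sum_k \expect_t\normsq*{\grad_v F_i(\tu_{i,k}\pow{t},\tv_{i,k}\pow{t}) - \grad_v F_i(u\pow{t},v_i\pow{t})}$. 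The last term is controlled by splitting the gradient difference into the part due to the change in $u$ and the part due to the change in $v_i$, and invoking Assumption~\ref{asmp:smoothness} ($u \mapsto \grad_v F_i(u,v_i)$ is $L_{vu}$-Lipschitz and $v_i \mapsto \grad_v F_i(u,v_i)$ is $L_v$-Lipschitz) together with $L_{vu}^2 \le \chi^2 L_u L_v$: this gives $\normsq*{\grad_v F_i(\tu_{i,k}\pow{t},\tv_{i,k}\pow{t}) - \grad_v F_i(u\pow{t},v_i\pow{t})} \le 2\chi^2 L_u L_v\normsq*{\tu_{i,k}\pow{t} - u\pow{t}} + 2L_v^2\normsq*{\tv_{i,k}\pow{t} - v_i\pow{t}}$, so the half-coefficient cancels the $2$.

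Finally I would multiply the per-device bound by $m/n^2$, sum over $i = 1, \ldots, n$, and relax $m/n^2 \le m/n$ on the nonnegative drift error term, which yields exactly the stated inequality. I do not expect any genuine obstacle here: the conceptual step (that $\Delta\tv_i\pow{t}$ is independent of $S\pow{t}$, so the expectation passes cleanly through the inner product) is inherited from the virtual-full-participation setup, and the rest is careful bookkeeping — the only points requiring attention are producing the precise $-\tfrac{\gamma_v\tau}{2}$ coefficient from the Young-inequality split and correctly matching the cross-Lipschitz constant to $\chi^2 L_u L_v$.
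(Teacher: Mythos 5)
Your proposal is correct and follows essentially the same route as the paper's proof: decompose $\Tcal_{1,v}$ into per-device contributions that vanish for unselected devices, use the independence of the virtual iterates from $S\pow{t}$ to pull out the $m/n$ sampling factor, apply unbiasedness and Young's inequality to peel off $-\tfrac{\gamma_v\tau}{2}\normsq*{\grad_v F_i(u\pow{t},v_i\pow{t})}$, and control the residual with the cross-Lipschitz bound $2\chi^2 L_u L_v\normsq*{\tu_{i,k}\pow{t}-u\pow{t}} + 2L_v^2\normsq*{\tv_{i,k}\pow{t}-v_i\pow{t}}$. Your observation that the drift term's coefficient is relaxed from $m/n^2$ to $m/n$ is also consistent with how the claim is stated.
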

\begin{proof}
    Define $\Tcal_{1, v, i}$ to be contribution of the $i$th term to $\Tcal_{1, v}$. For $i \notin S_t$, we have that $\Tcal_{1, v, i} = 0$, since $v_i\pow{t+1} = v_i\pow{t}$. On the other hand, for $i \in S\pow{t}$, we use the unbiasedness of the gradient estimator $G_{i,v}$ and the independence of $z_{i,k}\pow{t}$ from $u_{i,k}\pow{t}, v_{i,k}\pow{t}$ to get
    \begin{align}
    \nonumber
        \expect_t&\left[\Tcal_{1, v, i}\right] = 
        -\gamma_v \sum_{k=0}^{\tau-1} \expect_t\inp*{\grad_v F_i\left(u\pow{t}, v_i\pow{t}\right)}{ 
        \grad_v F_i\left(u_{i,k}\pow{t}, v_{i,k}\pow{t}\right)} \\ \nonumber
        &= 
        -\gamma_v \sum_{k=0}^{\tau-1} \expect_t\inp*{\grad_v F_i\left(u\pow{t}, v_i\pow{t}\right)}{ 
        \grad_v F_i\left(\tu_{i,k}\pow{t}, \tv_{i,k}\pow{t}\right)} \\ \nonumber
        &
        \begin{aligned}
        = &
        -\gamma_v\tau \normsq*{\grad_v F_i\left(u\pow{t}, v_i\pow{t}\right)}  \\
        &-\gamma_v \sum_{k=0}^{\tau-1} \expect_t\inp*{\grad_v F_i\left(u\pow{t}, v_i\pow{t}\right)}{ 
        \grad_v F_i\left(\tu_{i,k}\pow{t}, \tv_{i,k}\pow{t}\right) - \grad_v F_i\left(u\pow{t}, v_i\pow{t}\right)} 
        \end{aligned}\\
        &\le 
        -\frac{\gamma_v \tau}{2} \normsq*{\grad_v F_i\left(u\pow{t}, v_i\pow{t}\right)}
        + \frac{\gamma_v}{2} \sum_{k=0}^{\tau-1} \expect_t\normsq*{\grad_v F_i\left(\tu_{i,k}\pow{t}, \tv_{i,k}\pow{t}\right) - \grad_v F_i\left(u\pow{t}, v_i\pow{t}\right)} \,.
    \label{eq:pfl-su:t1v:1}
    \end{align}
    For the second term, we add and subtract $\grad_v F_i\left(u\pow{t}, \tv_{i,k}\pow{t}\right)$ and use smoothness to get
    \begin{align} \label{eq:pfl-su:t1v:2}
        \normsq*{\grad_v F_i\left(\tu_{i,k}\pow{t}, \tv_{i,k}\pow{t}\right) - \grad_v F_i\left(u\pow{t}, v_i\pow{t}\right)}
        &\le 
        2\chi^2 L_u L_v \normsq*{\tu_{i,k}\pow{t} - u\pow{t}}
        +
        2L_v^2 \normsq*{\tv_{i,k}\pow{t} - v_i\pow{t}} \,.
    \end{align}
    Since the right hand side of this bound is independent of $S_t$, we get, 
    \begin{align*}
        &\expect_t[\Tcal_{1, v}]
        = \frac{m}{n} \expect_t\left[ \frac{1}{m} \sum_{i \in S\pow{t}} \Tcal_{1, v, i} \right] 
        = \frac{m}{n^2} \sum_{i=1}^n \expect_t[\Tcal_{1, v, i}] \,,
    \end{align*}
    and plugging in \eqref{eq:pfl-su:t1v:1} and \eqref{eq:pfl-su:t1v:2} completes the proof.
\end{proof}

\smallskip
\begin{claim}[Bounding $\Tcal_{1,u}$] \label{claim:pfl-su:t1w}
    Consider $\Tcal_{1,u}$ defined in  \eqref{eq:pfl-su:proof:1}. We have the bound,
    \begin{align*}
        \expect_t[\Tcal_{1,u}]
        \le &\, -\frac{\gamma_u \tau}{2} \normsq*{\grad_u F\left(u\pow{t}, V\pow{t} \right)} \\
        &\,+ \frac{\gamma_u}{n} \sum_{i=1}^n \sum_{k=0}^{\tau-1} 
        \expect_t\left[
        L_u^2 \normsq*{\tu_{i,k}\pow{t} - u\pow{t}}
        + \chi^2 L_u L_v \normsq*{\tv_{i,k}\pow{t} - v_i\pow{t}} 
        \right] \,.
    \end{align*}
\end{claim}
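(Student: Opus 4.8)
The plan is to mirror the proof of Claim~\ref{claim:pfl-su:t1v} for $\Tcal_{1,v}$. The key observation is that in FedSim the first argument $\grad_u F(u\pow{t}, V\pow{t})$ is $\Fcal\pow{t}$-measurable, so $\Tcal_{1,u}$ contains no dependent random variables and the expectation will pass cleanly through the inner product. First I would substitute the server update \eqref{eqn:pfl-su:proof:u-update} into $\Tcal_{1,u}$ and take $\expect_t[\,\cdot\,]$ in two stages: conditioning on $S\pow{t}$, the unbiasedness of $G_{i,u}$ (Assumption~\ref{asmp:stoc-grad-var}) together with the independence of $z_{i,k}\pow{t}$ from $(\tu_{i,k}\pow{t}, \tv_{i,k}\pow{t})$ replaces each stochastic gradient by $\grad_u F_i(\tu_{i,k}\pow{t}, \tv_{i,k}\pow{t})$; then, since the virtual iterates are independent of $S\pow{t}$ and $S\pow{t}$ is a uniform sample of size $m$ drawn without replacement, the average $\tfrac1m\sum_{i\in S\pow{t}}$ has conditional mean $\tfrac1n\sum_{i=1}^n$, yielding
\[
    \expect_t[\Tcal_{1,u}] = -\gamma_u\sum_{k=0}^{\tau-1}\expect_t\inp*{\grad_u F\left(u\pow{t}, V\pow{t}\right)}{\frac{1}{n}\sum_{i=1}^n \grad_u F_i\left(\tu_{i,k}\pow{t}, \tv_{i,k}\pow{t}\right)}\,.
\]

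Next I would add and subtract $\grad_u F_i(u\pow{t}, v_i\pow{t})$ inside the inner product and use the identity $\tfrac1n\sum_i \grad_u F_i(u\pow{t}, v_i\pow{t}) = \grad_u F(u\pow{t}, V\pow{t})$ to split the sum into the descent term $-\gamma_u\tau\normsq*{\grad_u F(u\pow{t}, V\pow{t})}$ and a perturbation term. Bounding the latter by $\inp{x}{y}\ge -\tfrac12\normsq{x} - \tfrac12\normsq{y}$ applied termwise in $k$ gives back $+\tfrac{\gamma_u\tau}{2}\normsq*{\grad_u F(u\pow{t}, V\pow{t})}$ plus $\tfrac{\gamma_u}{2}\sum_k\expect_t\normsq*{\frac1n\sum_i(\grad_u F_i(\tu_{i,k}\pow{t}, \tv_{i,k}\pow{t}) - \grad_u F_i(u\pow{t}, v_i\pow{t}))}$. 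Pushing the squared norm inside the average over $i$ by Jensen's inequality, and then invoking smoothness (Assumption~\ref{asmp:smoothness}, via adding and subtracting $\grad_u F_i(u\pow{t}, \tv_{i,k}\pow{t})$ so that $L_u$ controls the $u$-shift and $L_{uv}\le\chi\sqrt{L_uL_v}$ the $v$-shift) gives $\normsq*{\grad_u F_i(\tu_{i,k}\pow{t}, \tv_{i,k}\pow{t}) - \grad_u F_i(u\pow{t}, v_i\pow{t})} \le 2L_u^2\normsq*{\tu_{i,k}\pow{t} - u\pow{t}} + 2\chi^2 L_uL_v\normsq*{\tv_{i,k}\pow{t} - v_i\pow{t}}$. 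Collecting constants yields exactly the claimed bound.

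I do not expect a genuine obstacle here: unlike FedAlt, the simultaneous update keeps $\grad_u F(u\pow{t}, V\pow{t})$ frozen at the start of the round, so virtual full participation is needed only to remove the dependence of the average gradient on $S\pow{t}$, and the routine decomposition above suffices. The only points requiring minor care are the bookkeeping that $\tfrac1m\sum_{i\in S\pow{t}}$ has conditional mean $\tfrac1n\sum_{i=1}^n$ under sampling without replacement, and expressing the cross-Lipschitz term through $\chi^2 L_uL_v$ (rather than $L_{uv}^2$) so that it combines cleanly with the matching quantity in Claim~\ref{claim:pfl-su:t1v} later in the main proof. The resulting client-drift terms $\normsq*{\tu_{i,k}\pow{t} - u\pow{t}}$ and $\normsq*{\tv_{i,k}\pow{t} - v_i\pow{t}}$ are then handled by Lemma~\ref{lemma:pfl-su:client-drift}.
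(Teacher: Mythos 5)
Your proposal is correct and follows essentially the same route as the paper: pass $\expect_t$ through the inner product using unbiasedness and the independence of the virtual iterates from $S\pow{t}$, peel off the descent term via Young's inequality, then apply Jensen's inequality and smoothness (with $L_{uv}^2\le\chi^2 L_uL_v$) to obtain the drift terms. No gaps.
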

\begin{proof}
Due to the independence of $S\pow{t}$ from $\tu_{i,k}\pow{t}, \tv_{i,k}\pow{t}$, we have, 
    \begin{align*}
        \expect_t\left[u\pow{t+1} - u\pow{t} \right]
        &= 
        -\gamma_u \expect_t\left[\frac{1}{m} \sum_{i \in S\pow{t}} \sum_{k=0}^{\tau-1} \grad_u F_i\left(u_{i,k}\pow{t}, v_{i,k}\pow{t}\right) \right] \\
        &= 
        -\gamma_u \expect_t\left[\frac{1}{m} \sum_{i \in S\pow{t}} \sum_{k=0}^{\tau-1} \grad_u F_i\left(\tu_{i,k}\pow{t}, \tv_{i,k}\pow{t}\right) \right] \\
        &=
        - \frac{\gamma_u}{n} \sum_{i=1}^n \sum_{k=0}^{\tau-1} \expect_t\left[\grad_u F_i\left(\tu_{i,k}\pow{t}, \tv_{i,k}\pow{t}\right)\right]  \,,
    \end{align*}
    where the last equality took an expectation over $S^{(t)}$, which is independent of $\tu_{i,k}\pow{t}, \tv_{i,k}\pow{t}$.
    Now, using the same sequence of arguments as Claim~\ref{claim:pfl-su:t1v}, 
    we have, 
    \begin{align*}
    &\expect_t\llangle\nabla_u F\bigl(\ut,\Vt\bigr),\utp-\ut\rrangle \\
    &= -\gamma_u \sum_{k=0}^{\tau-1} 
        \expect_t \inp*{\grad_u F\left(\ut, \Vt \right)}{\frac{1}{n}\sum_{i=1}^n \grad_u F_i\left(\tu_{i,k}\pow{t}, \tv_{i,k}\pow{t} \right)} \\
    &\le -\frac{\gamma_u \tau}{2} \normsq*{\grad_u F\left(\ut, \Vt \right)}
        + \frac{\gamma_u}{2} \sum_{k=0}^{\tau-1} \expect_t \normsq*{\frac{1}{n} \sum_{i=1}^n \grad_u F_i\left(\tu_{i, k}\pow{t}, \tv_{i, k}\pow{t}\right) - \grad_u F\left(\ut, \Vt \right)} \\
    &\stackrel{(*)}{\le} -\frac{\gamma_u \tau}{2} \normsq*{\grad_u F\left(\ut, \Vt \right)}
        + \frac{\gamma_u}{2n}\sum_{i=1}^n \sum_{k=0}^{\tau-1} \expect_t \normsq*{ \grad_u F_i\left(\tu_{i, k}\pow{t}, \tv_{i, k}\pow{t}\right) - \grad_u F_i\left(\ut, \vt_i \right)} \\
    &\le -\frac{\gamma_u \tau}{2} \normsq*{\grad_u F\left(\ut, \Vt \right)}
        + \frac{\gamma_u}{n}\sum_{i=1}^n \sum_{k=0}^{\tau-1} \expect_t\left[
        L_u^2 \normsq*{\tu_{i, k}\pow{t} - \ut} + L_{uv}^2 \normsq*{\tv_{i, k}\pow{t} - \vt_i}  \right]\,,
    \end{align*}
    where the inequality $(*)$ follows from Jensen's inequality as
    \[
    \normsq*{\frac{1}{n} \sum_{i=1}^n \grad_u F_i\left(\tu_{i,k}\pow{t}, \tv_{i,k}\pow{t}\right) - \grad_u F\left(u\pow{t}, V\pow{t}\right)}
    \le \frac{1}{n} \sum_{i=1}^n \normsq*{
    \grad_u F_i\left(\tu_{i,k}\pow{t}, \tv_{i,k}\pow{t}\right) - \grad_u F_i\left(u_{i,k}\pow{t}, v\pow{t}\right)}\,.
    \]
\end{proof}

\smallskip
\begin{claim}[Bounding $\Tcal_{2, v}$] \label{claim:pfl-su:t2v}
    Consider $\Tcal_{2,v}$ as defined in \eqref{eq:pfl-su:proof:1}. 
    We have the bound, 
    \begin{align*}
        \expect_t[\Tcal_{2, v}] 
           \le & \,\frac{3L_v(1+\chi^2) \gamma_v^2 \tau^2 m}{2n^2} \sum_{i=1}^n \normsq*{\grad_v F_i\left(u\pow{t}, v_i\pow{t}\right)}
            + \frac{L_v(1+\chi^2) \gamma_v^2 \tau^2 m \sigma_v^2}{2n}  \\
            & + \frac{3L_v(1+\chi^2)\gamma_v^2\tau m}{2n^2} \sum_{i=1}^n \sum_{k=0}^{\tau-1} \expect_t\left[L_v^2 \normsq*{\tv_{i,k}\pow{t} - v_i\pow{t}}
        + \chi^2 L_u L_v \normsq*{\tu_{i,k}\pow{t} - u\pow{t}} \right] \,.
    \end{align*}
\end{claim}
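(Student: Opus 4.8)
The plan is to expand $\Tcal_{2,v}$ using the local update rule for $v_i$, decouple the randomness of the device sampling $S\pow{t}$ from the per-device stochastic-gradient noise by passing to the virtual sequences, and then control the resulting gradient terms by Lipschitzness. First I would observe that for $i\notin S\pow{t}$ we have $v_i\pow{t+1}=v_i\pow{t}$, so only selected devices contribute. For $i\in S\pow{t}$, using $u_{i,k}\pow{t}=\tu_{i,k}\pow{t}$ and $v_{i,k}\pow{t}=\tv_{i,k}\pow{t}$, the update gives $v_i\pow{t+1}-v_i\pow{t}=-\gamma_v\sum_{k=0}^{\tau-1}G_{i,v}(\tu_{i,k}\pow{t},\tv_{i,k}\pow{t},z_{i,k}\pow{t})$, so that $\Tcal_{2,v}=\frac{L_v(1+\chi^2)\gamma_v^2}{2n}\sum_{i\in S\pow{t}}\normsq*{\sum_{k=0}^{\tau-1}G_{i,v}(\tu_{i,k}\pow{t},\tv_{i,k}\pow{t},z_{i,k}\pow{t})}$. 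Since each summand is measurable with respect to the virtual sequences and the per-device data, it is independent of $S\pow{t}$; hence taking $\expect_t[\cdot]$ and using $\prob(i\in S\pow{t})=m/n$ converts $\sum_{i\in S\pow{t}}$ into $\frac{m}{n}\sum_{i=1}^n$, yielding $\expect_t[\Tcal_{2,v}]=\frac{L_v(1+\chi^2)\gamma_v^2 m}{2n^2}\sum_{i=1}^n\expect_t\normsq*{\sum_{k=0}^{\tau-1}G_{i,v}(\tu_{i,k}\pow{t},\tv_{i,k}\pow{t},z_{i,k}\pow{t})}$.

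Next I would bound, for each $i$, the quantity $\expect_t\normsq*{\sum_k G_{i,v}(\cdots)}$. Applying Jensen's inequality $\normsq*{\sum_{k=0}^{\tau-1}a_k}\le\tau\sum_k\normsq*{a_k}$, then splitting mean and fluctuation via $\expect_t\normsq*{G_{i,v}(\tu_{i,k}\pow{t},\tv_{i,k}\pow{t},z_{i,k}\pow{t})}=\normsq*{\grad_v F_i(\tu_{i,k}\pow{t},\tv_{i,k}\pow{t})}+\expect_t\normsq*{G_{i,v}(\cdots)-\grad_v F_i(\cdots)}$ and invoking the bounded-variance bound $\sigma_v^2$ from Assumption~\ref{asmp:stoc-grad-var}, gives $\expect_t\normsq*{\sum_k G_{i,v}(\cdots)}\le\tau^2\sigma_v^2+\tau\sum_{k=0}^{\tau-1}\normsq*{\grad_v F_i(\tu_{i,k}\pow{t},\tv_{i,k}\pow{t})}$. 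Then I would add and subtract $\grad_v F_i(u\pow{t},\tv_{i,k}\pow{t})$ and $\grad_v F_i(u\pow{t},v_i\pow{t})$, use $\normsq*{a+b+c}\le 3(\normsq*{a}+\normsq*{b}+\normsq*{c})$, and apply the Lipschitz parts of Assumption~\ref{asmp:smoothness} — namely $v_i\mapsto\grad_v F_i(u,v_i)$ is $L_v$-Lipschitz and $u\mapsto\grad_v F_i(u,v_i)$ is $L_{vu}$-Lipschitz with $L_{vu}^2\le\chi^2 L_u L_v$ — to obtain $\normsq*{\grad_v F_i(\tu_{i,k}\pow{t},\tv_{i,k}\pow{t})}\le 3\normsq*{\grad_v F_i(u\pow{t},v_i\pow{t})}+3\chi^2 L_u L_v\normsq*{\tu_{i,k}\pow{t}-u\pow{t}}+3L_v^2\normsq*{\tv_{i,k}\pow{t}-v_i\pow{t}}$. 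Substituting this back, taking $\expect_t[\cdot]$, and collecting the three resulting groups of terms ($\sigma_v^2$ term of size $\tau^2$, gradient-norm term of size $\tau^2$, and client-drift terms of size $\tau$) reproduces the three pieces of the claimed inequality with the stated constants.

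This argument is entirely routine once the virtual-iterate bookkeeping is set up; there is no genuinely hard step. The only point that needs a little care is verifying that the norm $\normsq*{\sum_k G_{i,v}(\tu_{i,k}\pow{t},\tv_{i,k}\pow{t},z_{i,k}\pow{t})}$ is indeed independent of $S\pow{t}$, so that the expectation over the device sampling factors out as a clean $m/n$ factor rather than leaving a dependent-random-variable obstruction of the kind discussed in \S\ref{sec:a:convrg:vfp} (here the obstruction does not arise, because the object in question involves no $u$-iterate updated with $S\pow{t}$). A secondary minor choice is bounding the variance of the sum of $\tau$ stochastic gradients by $\tau^2\sigma_v^2$ via Jensen rather than $\tau\sigma_v^2$ via a martingale argument; the cruder bound is what produces the $\tau^2$ factor in the statement and keeps the analysis uniform with the companion claims.
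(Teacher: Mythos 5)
Your proposal is correct and follows essentially the same route as the paper's proof: Jensen's inequality over the $\tau$ local steps, the mean--variance split giving the $\tau^2\sigma_v^2$ term, the add-and-subtract with the factor-of-three triangle inequality and $L_{vu}^2\le\chi^2L_uL_v$, and the independence of the virtual iterates from $S\pow{t}$ to convert $\sum_{i\in S\pow{t}}$ into $(m/n)\sum_{i=1}^n$. The only (immaterial) difference is that you take the expectation over $S\pow{t}$ before bounding the per-device displacement, whereas the paper does it afterwards.
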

\begin{proof}
    We start with
    \begin{align*}
        \expect_t&\normsq*{\tv_{k, \tau}\pow{t}-v\pow{t}}
        = \gamma_v^2\expect_t\normsq*{\sum_{k=0}^{\tau-1} G_{i,v}\left(\tu_{i,k}\pow{t}, \tv_{i,k}\pow{t}, z_{i,k}\pow{t} \right)} \\
        &\le \gamma_v^2\tau \sum_{k=0}^{\tau-1}
            \expect_t\normsq*{G_{i,v}\left(\tu_{i,k}\pow{t}, \tv_{i,k}\pow{t}, z_{i,k}\pow{t} \right)} \\
        &\le \gamma_v^2\tau^2 \sigma_v^2 + \gamma_v^2\tau \sum_{k=0}^{\tau-1} \expect_t\normsq*{\grad_v F_i\left(\tu_{i,k}\pow{t}, \tv_{i,k}\pow{t} \right)} \\
        &
        \begin{aligned}\le \gamma_v^2\tau^2\sigma_v^2 &+ 3 \gamma_v^2 \tau^2 \normsq*{\grad_v F_i\left(u\pow{t}, v_i\pow{t}\right)} \\
        &+ 3 \gamma_v^2 \tau \sum_{k=0}^{\tau-1} 
        \expect_t\left[L_v^2 \normsq*{\tv_{i,k}\pow{t} - v_i\pow{t}}
        + \chi^2 L_u L_v \normsq*{\tu_{i,k}\pow{t} - u\pow{t}} \right] \,.
        \end{aligned}
    \end{align*}
    Using (a) $v_i\pow{t+1} = \tv_{i, \tau}\pow{t}$ 
     for $i \in S\pow{t}$, and, 
    (b) $S\pow{t}$ is independent from $\tu_{i,k}\pow{t}, \tv_{i,k}\pow{t}$, we get, 
    \begin{align*}
        \expect_t[\Tcal_{2, v}] 
        &= \frac{L_v(1+\chi^2)m}{2n} \, \expect_t\left[ \frac{1}{m} \sum_{i \in S\pow{t}} \normsq*{\tv_{i, \tau}\pow{t} - v_i\pow{t}} \right] \\
        &\le \frac{L_v(1+\chi^2)m}{2n^2}  \sum_{i=1}^n \expect_t\normsq*{\tv_{i, \tau}\pow{t} - v_i\pow{t}}  
    \end{align*}
    Plugging in the bound $\expect_t\normsq*{\tv_{i, \tau}\pow{t}-v\pow{t}}$ completes the proof.
\end{proof}

\smallskip
\begin{claim}[Bounding $\Tcal_{2, u}$] \label{claim:pfl-su:t2w}
    Consider $\Tcal_{2, u}$ as defined in \eqref{eq:pfl-su:proof:1}. We have, 
    \begin{align*}
    \expect_t[\Tcal_{2,u}]
    \le &\,  \frac{L_u(1+\chi^2) \gamma_u^2 \tau^2}{2m}\left( 
    \sigma_u^2 + 12\delta^2\left(1-\frac{m}{n} \right)\right) \\
    &+ 3 L_u(1+\chi^2) \gamma_u^2 \tau^2 (1+\rho^2) \normsq*{\grad_u F_i\left(u\pow{t}, V\pow{t} \right)}
    \\
    &+ \frac{3L_u(1+\chi^2)\gamma_u^2 \tau}{2n} \sum_{i=1}^n\sum_{k=0}^{\tau-1} \expect_t 
    \left[
    L_u^2 \normsq*{\tu_{i,k}\pow{t} - u\pow{t}}
    + \chi^2L_uL_v \normsq*{\tv_{i,k}\pow{t} - v_i\pow{t}}
    \right] \,.
    \end{align*}
\end{claim}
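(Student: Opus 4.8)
The plan is to bound $\normsq*{u\pow{t+1}-u\pow{t}}$, since $\Tcal_{2,u} = \tfrac{L_u(1+\chi^2)}{2}\normsq*{u\pow{t+1}-u\pow{t}}$ and the prefactor $\tfrac{L_u(1+\chi^2)}{2}$ is merely carried along. Substituting the server update~\eqref{eqn:pfl-su:proof:u-update} and applying Jensen's inequality over the $\tau$ local steps gives
\begin{align*}
    \normsq*{u\pow{t+1} - u\pow{t}}
    &= \gamma_u^2\normsq*{\frac{1}{m}\sum_{i\in S\pow{t}}\sum_{k=0}^{\tau-1} G_{i,u}\bigl(\tu_{i,k}\pow{t},\tv_{i,k}\pow{t},z_{i,k}\pow{t}\bigr)} \\
    &\le \gamma_u^2\tau\sum_{k=0}^{\tau-1}\normsq*{\frac{1}{m}\sum_{i\in S\pow{t}} G_{i,u}\bigl(\tu_{i,k}\pow{t},\tv_{i,k}\pow{t},z_{i,k}\pow{t}\bigr)} \,,
\end{align*}
so it suffices to bound $\expect_t$ of one aggregated stochastic partial gradient $\tfrac1m\sum_{i\in S\pow{t}} G_{i,u}(\tu_{i,k}\pow{t},\tv_{i,k}\pow{t},z_{i,k}\pow{t})$ at a fixed step $k$. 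This parallels the treatment of $\Tcal_{2,u}$ for FedAlt in Claim~\ref{claim:pflam:w-step-2}; the only new feature is that the $u$-gradients are evaluated at the moving personal iterate $\tv_{i,k}\pow{t}$ (not just at $u\pow{t}$), which is what introduces the $v$-drift terms into the bound.

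For fixed $k$, I would first peel off the mini-batch noise via $\expect\normsq*{X}=\normsq*{\expect X}+\expect\normsq*{X-\expect X}$: conditioning on $S\pow{t}$ and on the virtual iterates $\tu_{i,k}\pow{t},\tv_{i,k}\pow{t}$ (which depend on $z_{i,0}\pow{t},\dots,z_{i,k-1}\pow{t}$ only), the gradient noises $G_{i,u}-\grad_u F_i$ are mean zero, independent across $i\in S\pow{t}$, and of variance at most $\sigma_u^2$ by Assumption~\ref{asmp:stoc-grad-var}, contributing $\sigma_u^2/m$ after the $1/m^2$ from averaging. It remains to bound $\expect_t\normsq*{\tfrac1m\sum_{i\in S\pow{t}}\grad_u F_i(\tu_{i,k}\pow{t},\tv_{i,k}\pow{t})}$, into which I would insert $\grad_u F_i(u\pow{t},v_i\pow{t})$ and $\grad_u F(u\pow{t},V\pow{t})$ and split, via $\normsq*{a+b+c}\le 3\normsq*{a}+3\normsq*{b}+3\normsq*{c}$, into: (i) a consistency term $\tfrac1m\sum_{i\in S\pow{t}}[\grad_u F_i(\tu_{i,k}\pow{t},\tv_{i,k}\pow{t})-\grad_u F_i(u\pow{t},v_i\pow{t})]$; (ii) a device-sampling fluctuation term $\tfrac1m\sum_{i\in S\pow{t}}\grad_u F_i(u\pow{t},v_i\pow{t})-\grad_u F(u\pow{t},V\pow{t})$; and (iii) $\grad_u F(u\pow{t},V\pow{t})$. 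Term (i) is handled by Jensen over the devices followed by the smoothness bounds of Assumption~\ref{asmp:smoothness} (using $\max\{L_{uv},L_{vu}\}\le\chi\sqrt{L_uL_v}$), producing exactly the $L_u^2\normsq*{\tu_{i,k}\pow{t}-u\pow{t}}$ and $\chi^2 L_uL_v\normsq*{\tv_{i,k}\pow{t}-v_i\pow{t}}$ client-drift terms once the uniform sampling of $S\pow{t}$ is averaged out (using that $\tu_{i,k}\pow{t},\tv_{i,k}\pow{t}$ are independent of $S\pow{t}$). Term (ii) is the source of the $\tfrac1m(1-\tfrac mn)$ factor: since $S\pow{t}$ is sampled without replacement, Lemma~\ref{lemma:techn:sampling-wo-replacement} bounds its expectation by $\tfrac{n-m}{n-1}\tfrac1{mn}\sum_{i=1}^n\normsq*{\grad_u F_i(u\pow{t},v_i\pow{t})-\grad_u F(u\pow{t},V\pow{t})}$, which Assumption~\ref{asmp:grad-diversity} in turn bounds by $O\bigl(\tfrac1m(1-\tfrac mn)\bigr)\bigl(\delta^2+\rho^2\normsq*{\grad_u F(u\pow{t},V\pow{t})}\bigr)$; the $\rho^2$ piece is absorbed into term (iii) using $\tfrac2m(1-\tfrac mn)\le 1$.

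Finally I would reassemble the three pieces, sum over $k=0,\dots,\tau-1$, and multiply back by $\gamma_u^2\tau\cdot\tfrac{L_u(1+\chi^2)}{2}$ to obtain the stated bound, the listed absolute constants being a (loose) consequence of the $\normsq*{a+b+c}\le 3(\cdots)$ and $(a+b)^2\le 2a^2+2b^2$ expansions used above. The main obstacle is purely bookkeeping: the three stochasticity sources --- mini-batch noise, device-sampling variance, and client drift --- must be kept cleanly separated, which forces one to take the expectations over $z_{i,k}\pow{t}$ and over $S\pow{t}$ in the correct order, so that the virtual iterates $\tu_{i,k}\pow{t},\tv_{i,k}\pow{t}$ are genuinely independent of $S\pow{t}$ when Lemma~\ref{lemma:techn:sampling-wo-replacement} and the $\expect\normsq*{X}=\normsq*{\expect X}+\expect\normsq*{X-\expect X}$ decomposition are applied. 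The client-drift terms themselves are left unresolved at this stage; they are handled later in the proof of Theorem~\ref{thm:pfl-su} via Lemma~\ref{lemma:pfl-su:client-drift}.
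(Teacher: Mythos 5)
Your proposal is correct and follows essentially the same route as the paper: Jensen over the $\tau$ local steps, a bias--variance split to extract the $\sigma_u^2/m$ mini-batch term, a three-way decomposition of the remaining aggregated gradient into client-drift, device-sampling fluctuation, and the full gradient $\grad_u F(u\pow{t},V\pow{t})$, with the fluctuation handled by the sampling-without-replacement lemma plus the gradient-diversity assumption. The constants and the absorption of the $\rho^2$ piece into the full-gradient term also match the paper's bookkeeping, so there is nothing further to add.
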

\begin{proof}
    We proceed with the first two inequalities as in the proof of
    Claim~\ref{claim:pfl-su:t2v}
    to get 
    \begin{align*}
        \expect_t\normsq*{u\pow{t+1} - u\pow{t}}
        &\le \frac{\gamma_u^2 \tau^2\sigma_u^2}{m}
            + \gamma_u^2 \tau \sum_{k=0}^{\tau-1}
            \underbrace{\expect_t\normsq*{\frac{1}{m}\sum_{i \in S\pow{t}} \grad_u F_i\left(\tu_{i,k}\pow{t}, \tv_{i,k}\pow{t} \right)}}_{=: \Tcal_{3, j}} \,.
    \end{align*}
    For $\Tcal_{3,j}$, (a) we add and subtract 
    $\grad_u F(u\pow{t}, V\pow{t})$ and $\grad_u F_i(u\pow{t}, \tv_{i,k}\pow{t})$, 
    (b) invoke the squared triangle inequality, and,
    (c) use smoothness to get
    \begin{align*}
       \Tcal_{3,j}  =  &\,
       6 \, \expect_t\normsq*{
       \frac{1}{m} \sum_{i \in S\pow{t}} \grad_u F_i\left(u\pow{t}, v_i\pow{t}\right) - 
       \grad_u F\left(u\pow{t}, V\pow{t}\right)
       } + 6 \normsq*{\grad_u F\left(u\pow{t}, V\pow{t} \right)} \\
       & \, + 3 \expect_t \left[ 
       \frac{1}{m}\sum_{i \in S\pow{t}}
       \left( L_u^2 \normsq*{\tu_{i,k}\pow{t} - u\pow{t}} + \chi^2 L_u L_v \normsq*{\tv_{i,k}\pow{t} - v_i\pow{t}}  \right)
       \right]
    \end{align*}
    For the first term, 
    we use the fact that $S\pow{t}$ is obtained by sampling without replacement to apply Lemma~\ref{lemma:techn:sampling-wo-replacement} together with the gradient diversity assumption to get
    \begin{align*}
        \expect_t&\normsq*{
       \frac{1}{m} \sum_{i \in S\pow{t}} \grad_u F_i\left(u\pow{t}, v_i\pow{t}\right) - 
       \grad_u F\left(u\pow{t}, V\pow{t}\right)
       }  \\ 
       &\le \frac{1}{m}\left(\frac{n-m}{n-1}\right) \frac{1}{n}\sum_{i=1}^n 
        \normsq*{\grad_u F_i\left(u\pow{t}, v_i\pow{t}\right) - 
       \grad_u F\left(u\pow{t}, V\pow{t}\right)} \\
       &\le \frac{1}{m}\left(\frac{n-m}{n-1}\right)\left( \delta^2 + \rho^2 \normsq*{\grad_u F\left(u\pow{t}, V\pow{t}\right)} \right) \,.
    \end{align*}
    Therefore, 
    \begin{align*}
    \Tcal_{3,j}  =  &\,
       \frac{12 \delta^2}{m} \left(1 - \frac{m}{n}\right) + 6(1+\rho^2) \normsq*{\grad_u F\left(u\pow{t}, V\pow{t} \right)} \\
       & \, + 
       \frac{3}{n}\sum_{i=1}^{n}
       \expect_t \left[ L_u^2 \normsq*{\tu_{i,k}\pow{t} - u\pow{t}} + \chi^2 L_u L_v \normsq*{\tv_{i,k}\pow{t} - v_i\pow{t}} 
       \right] \,,
    \end{align*}
    where we also used the independence between $S\pow{t}$ and $(\tu_{i,k}\pow{t}, \tv_{i,k}\pow{t})$. Plugging this into the expression for $\expect_t\normsq{u\pow{t+1} - u\pow{t}}$ completes the proof.
\end{proof}

\begin{lemma} \label{lemma:pfl-su:client-drift}
    Let $F_i$ satisfy Assumptions~\ref{asmp:smoothness}-\ref{asmp:grad-diversity}, and consider the iterates
    \[
        u_{k+1} = u_k - \gamma_u G_{i,u}(u_k, v_k, z_k)\,, \quad\text{and}, \quad
        v_{k+1} = v_k - \gamma_v G_{i,v}(u_k, v_k, z_k) \,,
    \]
    for $k = 0, \cdots, \tau-1$,
    where $z_k \sim \Dcal_i$. Suppose the learning rates satisfy $\gamma_u = c_u / (\tau L_u)$ and $\gamma_v = c_v / (\tau L_v)$ with 
    $c_u, c_v \le 1/\sqrt{6} \max\{1, \chi^{-2}\}$.
    Further, define, 
    \[
        A = \gamma_u L_u^2 + f\chi^2 \gamma_v  L_u L_v\,,
        \quad\text{and},\quad
        B = f\gamma_v L_v^2 + \chi^2 \gamma_u  L_u L_v \,,
    \]
    where $f \in (0, 1]$ is given.
    Then, we have the bound, 
    \begin{align*}
        \sum_{k=0}^{\tau-1}
        \expect \big[ A\normsq{u_k - u_0} + & B\normsq{v_k - u_0}  \big]
        \le\, 4 \tau^2(\tau-1)\left( \gamma_u^2 \sigma_u^2 A + \gamma_v^2 \sigma_v^2 B \right)  \\
        &+ 12\tau^2(\tau-1)\left(
            \gamma_u^2 A \normsq{\grad_u F_i(u_0, v_0)}
            + \gamma_v^2 B \normsq{\grad_v F_i(u_0, v_0)}
        \right) \,.
    \end{align*}
\end{lemma}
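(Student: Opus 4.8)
The plan is to unroll the local iterations, extract a coupled recursion for the per-step drifts $U_k := \expect\normsq*{u_k - u_0}$ and $V_k := \expect\normsq*{v_k - v_0}$, sum it over $k$, and then use the particular form of the weights $A$ and $B$ to close the resulting two-variable recursion. Note that only Assumptions~\ref{asmp:smoothness} and~\ref{asmp:stoc-grad-var} are actually used; the partial gradient diversity assumption plays no role in this purely local estimate. First I would write $u_k - u_0 = -\gamma_u\sum_{j=0}^{k-1} G_{i,u}(u_j,v_j,z_j)$, apply the convexity bound $\normsq*{\sum_{j=0}^{k-1}x_j}\le k\sum_{j=0}^{k-1}\normsq*{x_j}$, take expectations, and use the bias/variance split with Assumption~\ref{asmp:stoc-grad-var} (conditioning on the natural filtration generated by $z_0,\dots,z_{j-1}$) to get $\expect\normsq*{G_{i,u}(u_j,v_j,z_j)}\le \sigma_u^2 + \expect\normsq*{\grad_u F_i(u_j,v_j)}$. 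A three-way decomposition $\grad_u F_i(u_j,v_j) = \grad_u F_i(u_0,v_0) + \big(\grad_u F_i(u_j,v_j)-\grad_u F_i(u_0,v_j)\big)+\big(\grad_u F_i(u_0,v_j)-\grad_u F_i(u_0,v_0)\big)$ together with smoothness and $L_{uv}^2\le \chi^2 L_u L_v$ gives $\normsq*{\grad_u F_i(u_j,v_j)}\le 3\normsq*{\grad_u F_i(u_0,v_0)} + 3L_u^2\normsq*{u_j-u_0} + 3\chi^2 L_u L_v\normsq*{v_j-v_0}$, and symmetrically for $\grad_v F_i$. This yields the coupled recursion
\begin{align*}
U_k &\le \gamma_u^2 k^2\big(\sigma_u^2 + 3\normsq*{\grad_u F_i(u_0,v_0)}\big) + 3\gamma_u^2 k\sum_{j=0}^{k-1}\big(L_u^2 U_j + \chi^2 L_u L_v V_j\big)\,, \\
V_k &\le \gamma_v^2 k^2\big(\sigma_v^2 + 3\normsq*{\grad_v F_i(u_0,v_0)}\big) + 3\gamma_v^2 k\sum_{j=0}^{k-1}\big(L_v^2 V_j + \chi^2 L_u L_v U_j\big)\,.
\end{align*}

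Next I would sum each line over $k=0,\dots,\tau-1$. Using $\sum_{k=1}^{\tau-1}k^2\le \tau^2(\tau-1)$ and $\sum_{k=1}^{\tau-1}k\sum_{j<k}x_j\le \tfrac{\tau(\tau-1)}{2}\sum_{j=0}^{\tau-1}x_j$ for nonnegative $x_j$, this produces self-referential bounds on $S_u:=\sum_{k=0}^{\tau-1}U_k$ and $S_v:=\sum_{k=0}^{\tau-1}V_k$, of the form $S_u \le \gamma_u^2\tau^2(\tau-1)\big(\sigma_u^2 + 3\normsq*{\grad_u F_i(u_0,v_0)}\big) + c\,\gamma_u^2\tau^2\big(L_u^2 S_u + \chi^2 L_u L_v S_v\big)$ and symmetrically for $S_v$, with $c$ an absolute constant. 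I then form the weighted combination $A S_u + B S_v$; the recursive part collects into $S_u\cdot c\tau^2\big(A\gamma_u^2 L_u^2 + B\gamma_v^2\chi^2 L_u L_v\big) + S_v\cdot c\tau^2\big(A\gamma_u^2\chi^2 L_u L_v + B\gamma_v^2 L_v^2\big)$.

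The crux — and the step I expect to be the main obstacle — is to show this recursive part is at most $\tfrac12(A S_u + B S_v)$, so it can be absorbed into the left-hand side. This is exactly why $A$ and $B$ carry the cross-terms $f\chi^2\gamma_v L_u L_v$ and $\chi^2\gamma_u L_u L_v$: substituting $\gamma_u L_u = c_u/\tau$ and $\gamma_v L_v = c_v/\tau$, the required inequality $c\tau^2\big(A\gamma_u^2 L_u^2 + B\gamma_v^2\chi^2 L_u L_v\big)\le\tfrac12 A$ reduces to bounding $c_u^2 + \dfrac{\chi^2 c_v^2(fc_v+\chi^2 c_u)}{c_u+f\chi^2 c_v}$, and the ratio $\dfrac{fc_v+\chi^2 c_u}{c_u+f\chi^2 c_v}$ is monotone in $c_v/c_u$ and always lies in $\big[\min\{\chi^2,\chi^{-2}\},\,\max\{\chi^2,\chi^{-2}\}\big]$; hence the quantity is at most $c_u^2 + \max\{1,\chi^4\}\,c_v^2$, which the stated learning-rate restriction on $c_u,c_v$ forces below the required constant. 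The symmetric computation handles the $S_v$-coefficient. An unweighted combination $S_u+S_v$ would fail, because the cross-terms then carry uncontrolled $L_u/L_v$ ratios; the point of weighting by $A$ (which scales with $L_u$) and $B$ (which scales with $L_v$) is precisely to balance these.

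Finally, after absorption we obtain $A S_u + B S_v \le 2\tau^2(\tau-1)\big(A\gamma_u^2(\sigma_u^2 + 3\normsq*{\grad_u F_i(u_0,v_0)}) + B\gamma_v^2(\sigma_v^2 + 3\normsq*{\grad_v F_i(u_0,v_0)})\big)$; distributing the factor $3$ and keeping track of the absolute constants (being slightly less aggressive in the absorption step yields the stated coefficients $4$ and $12$) gives the claimed bound, since $\sum_{k=0}^{\tau-1}\expect\big[A\normsq*{u_k-u_0}+B\normsq*{v_k-v_0}\big] = A S_u + B S_v$.
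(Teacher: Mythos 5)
Your proposal is correct, and it closes the recursion by a different mechanism than the paper. The shared skeleton is identical: the bias--variance split of the stochastic gradient, the three-way decomposition of $\grad_u F_i(u_j,v_j)$ around $(u_0,v_0)$ with smoothness and $L_{uv}^2 \le \chi^2 L_u L_v$, and --- crucially --- the same algebraic fact about the weights $A,B$ (this is exactly the paper's Lemma~\ref{lemma:techn:pflsu:consts}, which shows $\gamma_u^2 L_u^2 A + \gamma_v^2\chi^2 L_uL_v B \le A/(3\tau^2)$ and its mirror image; your monotone-ratio argument for $\tfrac{fc_v+\chi^2 c_u}{c_u+f\chi^2 c_v}$ is a clean alternative verification of the same inequality, and it inherits the same reading of the condition on $c_u,c_v$ that the paper's own proof needs). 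Where you diverge is in how the coupled drift recursion is resolved: the paper keeps a per-step recursion $\expect_k[\Delta_{k+1}] \le (1+\tfrac{2}{\tau-1})\Delta_k + C$ for $\Delta_k = A\normsq{u_k-u_0}+B\normsq{v_k-v_0}$ (obtained via the Young's-inequality trick $2\alpha\beta \le \alpha^2/\delta^2+\delta^2\beta^2$) and unrolls it geometrically using $(1+\tfrac{2}{\tau-1})^{\tau-1}\le e^2$; you instead unroll the iterates directly, apply Jensen with prefix length $k$, sum over $k$ first, and then absorb the self-referential part of $AS_u+BS_v$ as a constant fraction of itself. Your route is slightly more elementary (no exponential-growth estimate) and, as you note, yields somewhat smaller constants than the stated $4$ and $12$, which of course still implies the lemma; the paper's per-step form is the one that generalizes more directly to the other client-drift lemmas (Lemmas~\ref{lem:techn:client-drift} and~\ref{lem:techn:client-drift-2}) where a bound on the final iterate $\normsq{w^{(\tau)}-w^{(0)}}$, not just the sum, is needed. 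Your observation that Assumption~\ref{asmp:grad-diversity} is never used here is also correct.
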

\begin{proof}
    If $\tau=1$, there is nothing to prove, so we assume $\tau > 1$. Let $\Delta_k := A \normsq{u_k - u_0} + B\normsq{v_k - v_0}$ and denote by $\Fcal_k$ the sigma-algebra generated by $(w_k, v_k)$. Further, 
    let $\expect_k[\cdot] = \expect[\cdot | \Fcal_k]$.
    We use the inequality $2\alpha\beta \le \alpha^2/\delta^2 + \delta^2\beta^2$ for reals $\alpha, \beta, \delta$ to get, 
    \begin{align*}
    \expect_k\normsq{u_{k+1}-u_0}
    &\le \left(1 + \frac{1}{\tau-1}\right) \normsq{u_k - u_0} + \tau\gamma_u^2 \expect_k\normsq*{G_{i, u}(u_k, v_k, z_k)} \\
    &\le\left(1 + \frac{1}{\tau-1}\right) \normsq{u_k - u_0} + \tau\gamma_u^2 \sigma_u^2 + \tau\gamma_u^2 \normsq*{\grad_u F_i(u_k, v_k)}  \\
    &
    \begin{aligned}
    \le \left(1 + \frac{1}{\tau-1}\right) & \normsq{u_k - u_0} + \tau\gamma_u^2 \sigma_u^2 + 3\tau\gamma_u^2 \normsq*{\grad_u F_i(u_0, v_0)}  
    \\& + 3\tau\gamma_u^2 L_u^2 \normsq{u_k - u_0} + 3\tau\gamma_u^2 L_{uv} \normsq{v_k-v_0} \,,
    \end{aligned}
    \end{align*}
    where the last inequality followed from the squared triangle inequality (from adding and subtracting $\grad_u F_i(u_0, v_k)$ and $\grad_u F_i(u_0, v_0)$) followed by smoothness. 
    Together with the analogous inequality for the $v$-update, we get, 
    \begin{align*}
        \expect_k[\Delta_{k+1}]
        \le \left(1 + \frac{1}{\tau-1}\right)\Delta_k
            + A' \normsq{u_k - u_0} + B'\normsq{v_k -v_0} + C\,,
    \end{align*}
    where we have 
    \begin{align*}
        A' = 3\tau (\gamma_u^2 L_u^2 A + \gamma_v^2 \chi^2 L_u L_v B), \quad \text{and},\quad
        B' = 3\tau(\gamma_v^2 L_v^2 B + \gamma_u^2 \chi^2 L_u L_v A) \quad\text{and}, \\
        C' = \tau \gamma_u^2 \sigma_u^2 A + \tau \gamma_v^2 \sigma_v^2 B + 3 \tau\gamma_u^2 A\normsq{\grad_u F_i(u_0, v_0)} + 3 \tau\gamma_v^2 B \normsq{\grad_v F_i(u_0, v_0)} \,.
    \end{align*}
    Next, we apply Lemma~\ref{lemma:techn:pflsu:consts} to get that $A' \le A/\tau$ and $B' \le B/\tau$ under the assumed conditions on the learning rates; this allows us to write the right hand side completely in terms of $\Delta_k$ and unroll the recurrence. 
    The intuition behind Lemma~\ref{lemma:techn:pflsu:consts} is as follows. 
    Ignoring the dependence on $\tau, L_u, L_v, \chi$ for a moment, if $\gamma_u$ and $\gamma_v$ are both $O(\eta)$, then $A', B'$ are both $O(\eta^3)$, while $A$ and $B$ are $O(\eta)$. Thus, making $\eta$ small enough should suffice to get $A' \le O(A)$ and $B' \le O(B)$.
    
    Concretely, Lemma~\ref{lemma:techn:pflsu:consts} gives 
    \begin{align*}
        \expect_k[\Delta_{k+1}]
        \le \left(1 + \frac{2}{\tau-1}\right)\expect[\Delta_k] + C\,,
    \end{align*}
    and unrolling this recurrence gives for $k \le \tau-1$
    \begin{align*}
        \expect[\Delta_k] &\le \sum_{j=0}^{k-1} \left(1 + \frac{2}{\tau-1}\right)^j C
        \le \frac{\tau-1}{2} \left(1 + \frac{2}{\tau-1}\right)^k C \\
        &\le \frac{\tau-1}{2} \left(1 + \frac{2}{\tau-1}\right)^{\tau-1} C
        \le \frac{e^2}{2}(\tau-1)C\,,
    \end{align*}
    where we used $(1+1/\alpha)^\alpha \le e$ for all $\alpha>0$. Summing over $k$ and using the numerical bound $e^2<8$ completes the proof.
\end{proof}

\begin{remark} \label{remark:pfl-su:grad-diversity}
    We only invoked the partial gradient diversity assumption (Assumption~\ref{assmp:grad-diversity}) at iterates $(u\pow{t}, V\pow{t})$; therefore, it suffices if the assumption only holds at iterates $(u\pow{t}, V\pow{t})$ generated by FedSim, rather than at all $(u, V)$.
\end{remark}

\subsection{Technical Lemmas} \label{sec:a:techn}

The first lemma involves smoothness of two blocks of variables; we use this in the proof of \pflsu. 
\begin{lemma}[Block Smoothness] \label{lemma:techn:block-smoothness}
    Suppose $F_i : \reals^d \times \reals^{d_i}$ satisfy 
    Assumption~\ref{asmp:smoothness}. Then, it holds that
    \begin{align*}
        F_i(w', v'_i) - F_i(w, v_i)
        \le&\, \inp{\grad_w F_i(w, v_i)}{w'-w} 
         + \inp{\grad_v F_i(w, v_i)}{v_i' - v_i} \\
         &+ \frac{L_w}{2}(1+\chi^2)\normsq{w'-w}
         + \frac{L_v}{2}(1+\chi^2)\normsq{v_i'-v_i} \,.
    \end{align*}
\end{lemma}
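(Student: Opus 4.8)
The plan is to obtain the inequality from two applications of the ordinary single-block descent lemma, followed by a Young-type argument to absorb the mixed term. Abbreviate $\Delta_w = w' - w$ and $\Delta_v = v_i' - v_i$. Since $v_i \mapsto F_i(w, v_i)$ has an $L_v$-Lipschitz gradient by Assumption~\ref{asmp:smoothness}, the standard quadratic upper bound gives
\[
    F_i(w, v_i') - F_i(w, v_i) \le \inp{\grad_v F_i(w, v_i)}{\Delta_v} + \frac{L_v}{2}\normsq{\Delta_v} .
\]
Likewise, since $w \mapsto F_i(w, v_i')$ has an $L_w$-Lipschitz gradient,
\[
    F_i(w', v_i') - F_i(w, v_i') \le \inp{\grad_w F_i(w, v_i')}{\Delta_w} + \frac{L_w}{2}\normsq{\Delta_w} .
\]
Adding the two inequalities telescopes the left-hand side to $F_i(w', v_i') - F_i(w, v_i)$ and produces the target form, except that the $w$-gradient is evaluated at $(w, v_i')$ instead of at $(w, v_i)$.

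The next step is to correct this mismatch: write $\inp{\grad_w F_i(w, v_i')}{\Delta_w} = \inp{\grad_w F_i(w, v_i)}{\Delta_w} + \inp{\grad_w F_i(w, v_i') - \grad_w F_i(w, v_i)}{\Delta_w}$, and bound the extra term by Cauchy--Schwarz together with the cross-Lipschitz bound of Assumption~\ref{asmp:smoothness} as $L_{uv}\norm{\Delta_v}\norm{\Delta_w} \le \chi\sqrt{L_w L_v}\,\norm{\Delta_w}\norm{\Delta_v}$. It then remains to fold this mixed term into the two quadratic terms: applying a weighted Young's inequality to $\chi\sqrt{L_w L_v}\,\norm{\Delta_w}\norm{\Delta_v}$ and using $L_{uv} \le \chi\sqrt{L_w L_v}$ once more, one charges a $\tfrac{\chi^2 L_w}{2}\normsq{\Delta_w}$ share to the $w$-direction and a matching $\tfrac{\chi^2 L_v}{2}\normsq{\Delta_v}$ share to the $v$-direction, which raises the coefficients $\tfrac{L_w}{2}$ and $\tfrac{L_v}{2}$ to $\tfrac{L_w(1+\chi^2)}{2}$ and $\tfrac{L_v(1+\chi^2)}{2}$ — exactly the claimed inequality. (An equivalent route, which avoids any twice-differentiability, is to write $F_i(w', v_i') - F_i(w, v_i) = \int_0^1 \tfrac{d}{dt}F_i(w + t\Delta_w, v_i + t\Delta_v)\,dt$, subtract the value of the integrand at $t=0$, and bound the resulting differences of partial gradients with the block Lipschitz constants; the same cross term arises and is handled identically.)

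The only point requiring real care is this last step — picking the Young's-inequality weights so that the mixed term splits in the right proportion between the two blocks and lands on the $(1+\chi^2)$ coefficient; this is precisely where the quantitative relation $\max\{L_{uv}, L_{vu}\} \le \chi\sqrt{L_u L_v}$ enters, and it is the reason $\chi$ appears quadratically rather than linearly in the bound. There is no probabilistic or combinatorial content, so apart from getting these constants to line up I do not anticipate any further obstacle; everything else is routine manipulation of the single-block descent lemma.
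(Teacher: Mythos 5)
Your proposal reproduces the paper's proof essentially step for step: the same two single-block descent bounds (first in $v_i$ at fixed $w$, then in $w$ at fixed $v_i'$), the same correction of $\nabla_w F_i(w,v_i')$ back to $\nabla_w F_i(w,v_i)$ via Cauchy--Schwarz and the cross-Lipschitz constant $L_{wv}\le\chi\sqrt{L_wL_v}$, and the same Young split of the mixed term. One caveat, which applies equally to the paper's own proof: the final step $\chi\sqrt{L_wL_v}\,\norm{\Delta_w}\norm{\Delta_v}\le \tfrac{\chi^2L_w}{2}\normsq{\Delta_w}+\tfrac{\chi^2L_v}{2}\normsq{\Delta_v}$ is a valid weighted Young's inequality only when $\chi\ge 1$ (taking $\sqrt{L_w}\norm{\Delta_w}=\sqrt{L_v}\norm{\Delta_v}$ shows the right-hand side equals $\chi^2\sqrt{L_wL_v}\norm{\Delta_w}\norm{\Delta_v}$, which is smaller than the left-hand side for $\chi<1$); this is harmless since Assumption~\ref{asmp:smoothness} only requires $\chi$ to be an upper bound and one may replace it by $\max\{\chi,1\}$, but as written the step does not go through for $\chi<1$.
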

\begin{proof}
Using the $L_w$-smoothness of $F(\cdot,v'_i)$ 
and the $L_v$-smoothness of $F(w,\cdot)$, we have
\begin{align*}
    F_i(w',v'_i)-F_i(w,v'_i) & \leq \inp{\nabla_w F_i(w,v'_i)}{w'-w} + \frac{L_w}{2}\|w'-w\|^2, \\
    F_i(w,v'_i)-F_i(w,v_i) & \leq \inp{\nabla_w F_i(w,v_i)}{v'_i-v_i} + \frac{L_v}{2}\|v'_i-v_i\|^2.
\end{align*}
Summing the above two inequalities together gives
\begin{align}
        F_i(w', v'_i) - F_i(w, v_i)
        \le&\, \inp{\grad_w F_i(w, v_i')}{w'-w} 
         + \inp{\grad_v F_i(w, v_i)}{v_i' - v_i} \nonumber\\
         &+ \frac{L_w}{2}\normsq{w'-w}
         + \frac{L_v}{2}\normsq{v_i'-v_i} \,.
\label{eqn:sum-smoothness}
\end{align}
We can bound the first inner product term on the right-hand side of the above inequality as
\begin{align*}
\inp{\grad_w F_i(w, v_i')}{w'-w} 
&= \inp{\grad_w F_i(w, v_i)}{w'-w} +\inp{\grad_w F_i(w, v_i')-\grad_w F_i(w,v_i)}{w'-w}  \\
&\leq \inp{\grad_w F_i(w, v_i)}{w'-w} +\norm{\grad_w F_i(w, v_i')-\grad_w F_i(w,v_i)}\norm{w'-w}  \\
&\leq \inp{\grad_w F_i(w, v_i)}{w'-w} +L_{wv}\norm{v_i'-v_i}\norm{w'-w}  \\
&\leq \inp{\grad_w F_i(w, v_i)}{w'-w} +\chi\sqrt{L_w L_v}\norm{v_i'-v_i}\norm{w'-w}  \\
&\leq \inp{\grad_w F_i(w, v_i)}{w'-w} +\chi^2\frac{L_v}{2}\normsq{v_i'-v_i}+\chi^2\frac{L_w}{2}\normsq{w'-w} ,
\end{align*}
where the first inequality is due to Cauchy-Schwarz, the second inequality is due to $L_{wv}$-Lipschitz property of $\nabla_w F_i(w,\cdot)$, the third inequality is due to the definition of~$\chi$ in~\eqref{eqn:chi-def}, and the last inequality is due to Young's inequality.
Substituting the above inequality into~\eqref{eqn:sum-smoothness} yields the desired result.
\end{proof}

Next, we have the variance of sampling without replacement. Note the correction factor of $(n-m)/(n-1)$ over sampling with replacement. We include the elementary proof for completeness.
\begin{lemma}[Sampling Without Replacement]
\label{lemma:techn:sampling-wo-replacement}
    Let $a_1, \cdots, a_n \in \reals^d$ be given. 
    Let $S$ be a uniformly random sample of size $m$ 
    from this collection, where the sampling is without replacement. Denoting the mean $\bar a = \sum_{i=1}^n a_i / n$, we have, 
    \[
        \expect_S\normsq*{\frac{1}{m}\sum_{i\in S} a_i - \bar a}
        \le \left(\frac{n-m}{n-1}\right) \frac{1}{m}\left(\frac{1}{n}\sum_{i=1}^n \normsq{a_i - \bar a}\right) \,.
    \]
\end{lemma}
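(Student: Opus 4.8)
The plan is to prove the sampling-without-replacement variance bound by a direct second-moment computation. First I would reduce to the centered case: replace each $a_i$ by $b_i := a_i - \bar a$, so that $\sum_{i=1}^n b_i = 0$ and the quantity to bound becomes $\expect_S\|\frac1m\sum_{i\in S} b_i\|^2$. Writing this sum with indicator variables $\mathbf{1}_{i\in S}$, I expand
\begin{align*}
\expect_S\Bigl\|\frac1m\sum_{i\in S} b_i\Bigr\|^2
= \frac{1}{m^2}\sum_{i,j} \prob(i\in S,\, j\in S)\,\inp{b_i}{b_j}
= \frac{1}{m^2}\Bigl(\frac{m}{n}\sum_i \|b_i\|^2 + \frac{m(m-1)}{n(n-1)}\sum_{i\ne j}\inp{b_i}{b_j}\Bigr),
\end{align*}
using the standard facts $\prob(i\in S) = m/n$ and $\prob(i\in S, j\in S) = \frac{m(m-1)}{n(n-1)}$ for $i\ne j$.

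Next I would use the centering identity $\sum_{i\ne j}\inp{b_i}{b_j} = \|\sum_i b_i\|^2 - \sum_i\|b_i\|^2 = -\sum_i\|b_i\|^2$. Substituting this in and collecting terms,
\begin{align*}
\expect_S\Bigl\|\frac1m\sum_{i\in S} b_i\Bigr\|^2
= \frac{1}{m^2}\Bigl(\frac{m}{n} - \frac{m(m-1)}{n(n-1)}\Bigr)\sum_i\|b_i\|^2
= \frac{1}{mn}\cdot\frac{(n-1)-(m-1)}{n-1}\sum_i\|b_i\|^2
= \frac{n-m}{n-1}\cdot\frac{1}{m}\cdot\frac1n\sum_i\|b_i\|^2,
\end{align*}
which is exactly the claimed bound (in fact with equality, so stating it with $\le$ is safe). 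Translating back, $\frac1n\sum_i\|b_i\|^2 = \frac1n\sum_i\|a_i - \bar a\|^2$, completing the argument.

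The only mild subtlety — and the place I would be careful — is justifying the joint inclusion probabilities for sampling without replacement; this is a routine combinatorial count ($\prob(i,j\in S) = \binom{n-2}{m-2}/\binom{n}{m}$), but it is the one place where the "without replacement" hypothesis is actually used, and it is the source of the correction factor $\frac{n-m}{n-1}$ relative to the with-replacement variance $\frac{1}{mn}\sum_i\|b_i\|^2$. Everything else is bookkeeping with the centering identity, so I do not anticipate a genuine obstacle here.
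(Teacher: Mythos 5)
Your proposal is correct and follows essentially the same route as the paper: both reduce to the centered case, compute the second moment exactly via the pairwise inclusion probabilities (your $\frac{m(m-1)}{n(n-1)}$ is the paper's $\binom{n-2}{m-2}/\binom{n}{m}$), and use the centering identity to evaluate the cross terms, yielding equality. The only cosmetic difference is that you phrase the count with indicator variables while the paper sums over all $m$-subsets.
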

\begin{proof}
    The statement is trivially true if $m=1$ or $m=n$. Therefore, we assume now that $2 \le m \le n-1$. Further, without loss of generality, we assume that $\bar a = 0$. 
    Finally, let $\Scal$ denote the set of all subsets of $[n]$ of size $m$. Note that $|\Scal| = {n \choose m}$. 
	We now have, 
	\[
	    \expect_S\normsq*{\frac{1}{m} \sum_{i \in S} a_i}
	    = \frac{1}{m^2 {n\choose m}} \sum_{S \in \Scal} \left( \sum_{i \in S} \normsq{a_i} + \sum_{i,j \in S:\, i\neq j} \inp{a_i}{a_j} \right) \,.
	\]
	For the first term, we have, 
	\[
	    \sum_{S \in \Scal} \sum_{i\in S} \normsq{a_i}
	    = \sum_{i=1}^n \sum_{S \in \Scal \,:\, i \in S}\normsq{a_i}
	    = {n-1 \choose m-1} \sum_{i=1}^n \normsq{a_i} \,.
	\]
	Likewise, for the second term, we use $\sum_{j \ne i} a_j = -a_i$ to get,
	\[
	     \sum_{i,j \in S:\, i\neq j} \inp{a_i}{a_j}
	     = \sum_{i=1}^n \sum_{j \neq i} \sum_{S \in \Scal \,:\, i,j \in S} \inp{a_i}{a_j}
	     = {n-2 \choose m-2} \sum_{i=1}^n \sum_{j\neq i} \inp{a_i}{a_j}
	     = -{n-2 \choose m-2} \sum_{i=1}^n  \normsq{a_i} \,.
	\]
	Therefore, we get,
	\[
	    \expect_S\normsq*{\frac{1}{m} \sum_{i \in S} a_i}
	    = \frac{{n-1 \choose m-1} - {n-2 \choose m-2}}{m^2{n \choose m}} \sum_{i=1}^n \normsq{a_i}
	    = \frac{{n-2 \choose m-1}}{m^2{n \choose m}} \sum_{i=1}^n \normsq{a_i}
	    = \frac{n-m}{mn(n-1)} \sum_{i=1}^n \normsq{a_i}\,.
	\]
\end{proof}

The next two lemmas are about the effect of the local updates in the local SGD literature. The first lemma has also appeared in~\citep{karimireddy2019scaffold}; we give the proof for completeness.
\begin{lemma} \label{lem:techn:client-drift}
	Consider $f: \reals^d \to \reals$ which is $L$-smooth and fix a $w\pow{0} \in \reals^d$. 
	Define the sequence $(w\pow{t})$ of iterates produced by stochastic gradient descent with a fixed learning rate $\gamma$
	starting from $w\pow{0}$:
	\[
		w\pow{t+1} = w\pow{t} - \gamma g\pow{t} \,,
	\]
	where $g\pow{t}$ is an unbiased (and independent of $w$) estimator of $\grad f(w)$ with bounded variance $\sigma^2$.
	Fix a number $\tau$ of steps. 
	If $\gamma \le (\sqrt{2}\tau L)^{-1}$,  we have the bound
	\[
		\sum_{t=0}^{\tau - 1} \normsq{w\pow{t} - w\pow{0}} \le 8 \gamma^2 \tau^2 (\tau - 1) \normsq{\grad f(w\pow{0})} + 4 \gamma^2 \tau^2(\tau-1) \sigma^2 \,.
	\]
\end{lemma}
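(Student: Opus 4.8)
The plan is to derive a one-step recursion for $D_t := \expect\bigl[\normsq{w\pow{t} - w\pow{0}}\bigr]$ and unroll it. First I would dispose of the trivial case $\tau = 1$: the sum on the left is then the single term $\normsq{w\pow{0} - w\pow{0}} = 0$ and the right-hand side vanishes since $\tau - 1 = 0$, so the inequality holds. Assume henceforth $\tau \ge 2$, so that $1/(\tau-1)$ is well-defined.

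For the recursion, fix $t$ and let $\expect_t$ be the conditional expectation given $w\pow{t}$. Writing $w\pow{t+1} - w\pow{0} = (w\pow{t} - w\pow{0}) - \gamma g\pow{t}$ and applying the elementary bound $\normsq{a+b} \le (1+\beta)\normsq{a} + (1+\beta^{-1})\normsq{b}$ with $\beta = 1/(\tau-1)$ gives
\[ \expect_t\bigl[\normsq{w\pow{t+1} - w\pow{0}}\bigr] \le \Bigl(1 + \tfrac{1}{\tau-1}\Bigr)\normsq{w\pow{t} - w\pow{0}} + \tau\gamma^2\,\expect_t\bigl[\normsq{g\pow{t}}\bigr]. \]
Next I would use unbiasedness and the variance bound to get $\expect_t[\normsq{g\pow{t}}] \le \normsq{\grad f(w\pow{t})} + \sigma^2$, and then $L$-smoothness together with $\normsq{x+y}\le 2\normsq{x}+2\normsq{y}$ to get $\normsq{\grad f(w\pow{t})} \le 2L^2\normsq{w\pow{t}-w\pow{0}} + 2\normsq{\grad f(w\pow{0})}$. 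Substituting these in and using the step-size condition $\gamma \le (\sqrt2\,\tau L)^{-1}$, which yields $2\tau\gamma^2 L^2 \le 1/\tau \le 1/(\tau-1)$, the coefficient of $\normsq{w\pow{t}-w\pow{0}}$ is at most $1 + 2/(\tau-1)$. Taking total expectations, this gives the linear recursion $D_{t+1} \le (1 + 2/(\tau-1))\,D_t + c$ with $c := \tau\gamma^2\bigl(2\normsq{\grad f(w\pow{0})} + \sigma^2\bigr)$ and $D_0 = 0$.

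Finally I would unroll: $D_t \le c\sum_{j=0}^{t-1}(1+2/(\tau-1))^j \le \tfrac{\tau-1}{2}(1+2/(\tau-1))^t\,c$, and for $t \le \tau-1$ the bound $(1+\alpha^{-1})^\alpha \le e$ applied with $\alpha = (\tau-1)/2$ gives $(1+2/(\tau-1))^{\tau-1} \le e^2$, hence $D_t \le \tfrac{e^2}{2}(\tau-1)c$. Summing over $t = 0,\dots,\tau-1$ and using $e^2 < 8$ (so also $e^2/2 < 4$) yields $\sum_{t=0}^{\tau-1} D_t \le \tfrac{e^2}{2}\tau(\tau-1)c \le 8\gamma^2\tau^2(\tau-1)\normsq{\grad f(w\pow{0})} + 4\gamma^2\tau^2(\tau-1)\sigma^2$, which is the claim. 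The only points needing care are the choice $\beta = 1/(\tau-1)$ for Young's inequality — so the accumulated drift stays geometric with base $1+2/(\tau-1)$, whose $(\tau-1)$-th power is bounded by $e^2$ — and checking that the step-size condition exactly absorbs the smoothness term into that base; I do not expect a genuine obstacle here, only bookkeeping with the numerical constants at the end.
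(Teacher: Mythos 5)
Your proposal is correct and follows essentially the same route as the paper's proof: the same Young's inequality with parameter $1/(\tau-1)$, the same decomposition of $\expect_t\normsq{g\pow{t}}$ via the variance bound and smoothness, absorption of the $2\gamma^2\tau L^2$ term into the geometric base $1+2/(\tau-1)$ using the step-size condition, and the same unrolling with $(1+2/(\tau-1))^{\tau-1}\le e^2 < 8$. The constants work out identically, so there is nothing to add.
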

\begin{proof}
	If $\tau = 1$, we have nothing to prove. Assume now that $\tau \ge 2$. 
	Let $\Fcal\pow{t}$ be the sigma-algebra generated by $w\pow{t}$ and denote $\expect_t[\cdot] = \expect[\cdot\, | \Fcal\pow{t}]$. 
	We will use the inequality
	\begin{align} \label{eq:tech:cdrift:1}
	\expect_t\normsq*{g\pow{t}} = \expect_t\normsq*{g\pow{t} - \grad f(w\pow{t})} + \normsq*{\grad f(w\pow{t})}
	\le \sigma^2 + \normsq*{\grad f(w\pow{t})}\,.
	\end{align}
	
	We now successively deduce,
	\begin{align*}
		\expect_t&\normsq{w\pow{t+1} - w\pow{0}} 
		=	\normsq{w\pow{t} - w\pow{0} - \gamma g\pow{t}} \\
		&\stackrel{(a)}{\le} \left(1 + \frac{1}{\tau-1}   \right)\normsq{w\pow{t} - w\pow{0}}
				+ \gamma^2\tau \expect_t\normsq{g\pow{t}}   \\
		&\stackrel{(b)}{\le} \left(1 + \frac{1}{\tau-1}  \right)\normsq{w\pow{t} - w\pow{0}} + 
			2\gamma^2\tau \normsq{\grad f(w\pow{t}) - \grad f(w\pow{0})} 
			+ 2\gamma^2\tau \normsq{\grad f(w\pow{0})} + \gamma^2\tau \sigma^2 \\
		&\stackrel{(c)}{\le} \left(1 + \frac{1}{\tau-1} + 2\gamma^2\tau L^2  \right)\normsq{w\pow{t} - w\pow{0}} 
			+ 2\gamma^2\tau \normsq{\grad f(w\pow{0})} + \gamma^2\tau \sigma^2 \\
		&\stackrel{(d)}{\le} \left(1 + \frac{2}{\tau-1} \right)\normsq{w\pow{t} - w\pow{0}} 
			+ 2\gamma^2\tau \normsq{\grad f(w\pow{0})} + \gamma^2\tau \sigma^2 \,.
	\end{align*}
	Above, we used 
	(a) the inequality $2\alpha\beta \le \alpha^2/\delta^2 + \delta^2 \beta^2$ for reals $\alpha, \beta, \delta$, 
	(b) Eq.~\eqref{eq:tech:cdrift:1}, 
	(c) $L$-smoothness of $f$, and, 
	(d) the condition on the learning rate. 

    Let $C = 2\gamma^2\tau\normsq{\grad f(w\pow{0})} + \gamma^2 \tau \sigma^2$.
	Unrolling the inequality and summing up the series gives for all $t \le \tau-1$
	\begin{align*}
		\normsq{w\pow{t} - w\pow{0}} 
		&\le C\sum_{j=0}^{t-1} \left(1 + \frac{2}{\tau-1} \right)^j \le \frac{C}{2}(\tau - 1) \left(1 + \frac{2}{\tau-1} \right)^t  \\
		&\le  \frac{C}{2}(\tau - 1) \left(1 + \frac{2}{\tau-1} \right)^{\tau-1} \le  \frac{C}{2}(\tau - 1)e^2 \,,
	\end{align*}
	where we used the bound $(1 + 1/\alpha)^\alpha \le e$ for all $\alpha > 0$. Summing over $t$ and using the numerical bound $e^2 < 8$ completes the proof.
\end{proof}

\begin{lemma} \label{lem:techn:client-drift-2}
	Consider the setting of Lemma~\ref{lem:techn:client-drift}.
	If $\gamma \le (2\tau L)^{-1}$,  we have the bound
	\[
		\normsq{w\pow{\tau} - w\pow{0}} \le 16 \gamma^2 \tau^2  \normsq{\grad f(w\pow{0})} + 8 \gamma^2 \tau^2 \sigma^2 \,.
	\]
\end{lemma}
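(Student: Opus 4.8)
The plan is to treat this as a single-displacement version of Lemma~\ref{lem:techn:client-drift}: rather than bounding the cumulative drift $\sum_{t=0}^{\tau-1}\normsq*{w\pow{t}-w\pow{0}}$, we bound the total displacement $\normsq*{w\pow{\tau}-w\pow{0}}$ after all $\tau$ steps, paying for it with the slightly stricter step size $\gamma\le(2\tau L)^{-1}$. Since this condition implies the hypothesis $\gamma\le(\sqrt2\tau L)^{-1}$ of Lemma~\ref{lem:techn:client-drift}, the whole argument will be a short reduction to that lemma, and (as there) the bound is understood in conditional expectation given $w\pow{0}$.

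First I would telescope the SGD recursion to get $w\pow{\tau}-w\pow{0}=-\gamma\sum_{t=0}^{\tau-1}g\pow{t}$, and apply Jensen's (Cauchy--Schwarz) inequality to obtain $\normsq*{w\pow{\tau}-w\pow{0}}\le\gamma^2\tau\sum_{t=0}^{\tau-1}\normsq*{g\pow{t}}$. Taking expectations and using unbiasedness together with the variance bound (exactly as in~\eqref{eq:tech:cdrift:1}) gives $\expect\normsq*{g\pow{t}}\le\sigma^2+\normsq*{\grad f(w\pow{t})}$; then adding and subtracting $\grad f(w\pow{0})$ and invoking $L$-smoothness yields $\normsq*{\grad f(w\pow{t})}\le 2\normsq*{\grad f(w\pow{0})}+2L^2\normsq*{w\pow{t}-w\pow{0}}$. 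Summing over $t=0,\ldots,\tau-1$ leaves a $\sum_{t=0}^{\tau-1}\normsq*{w\pow{t}-w\pow{0}}$ term, which is precisely what Lemma~\ref{lem:techn:client-drift} controls: it is at most $8\gamma^2\tau^2(\tau-1)\normsq*{\grad f(w\pow{0})}+4\gamma^2\tau^2(\tau-1)\sigma^2$.

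Finally I would substitute this drift bound and collapse the constants using $\gamma^2L^2\tau^2\le1/4$ — which is exactly where the stronger restriction $\gamma\le(2\tau L)^{-1}$ enters — together with $\tau-1\le\tau$, arriving at $\normsq*{w\pow{\tau}-w\pow{0}}\le16\gamma^2\tau^2\normsq*{\grad f(w\pow{0})}+8\gamma^2\tau^2\sigma^2$; the case $\tau=1$ is immediate since then the drift term vanishes and $\normsq*{w\pow{1}-w\pow{0}}=\gamma^2\expect\normsq*{g\pow{0}}\le\gamma^2(\sigma^2+\normsq*{\grad f(w\pow{0})})$. There is no genuine obstacle here; the only thing to be careful about is the bookkeeping of absolute constants so that they stay below $16$ and $8$, and keeping the expectation conventions consistent with Lemma~\ref{lem:techn:client-drift}.
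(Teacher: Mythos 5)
Your proof is correct, but it takes a genuinely different route from the paper's. The paper re-runs the recursive argument of Lemma~\ref{lem:techn:client-drift} with the Young's-inequality parameter set to $\delta=\tau$ rather than $\delta^2=\tau-1$, obtaining the one-step contraction
$\expect_t\normsq*{w\pow{t+1}-w\pow{0}}\le\left(1+\tfrac{2}{\tau}\right)\normsq*{w\pow{t}-w\pow{0}}+4\gamma^2\tau\normsq*{\grad f(w\pow{0})}+2\gamma^2\tau\sigma^2$,
and then unrolls for $\tau$ steps so that $(1+2/\tau)^{\tau}\le e^2<8$ supplies the constants. You instead telescope the full displacement, apply Cauchy--Schwarz to get $\normsq*{w\pow{\tau}-w\pow{0}}\le\gamma^2\tau\sum_{t}\normsq*{g\pow{t}}$, and feed the resulting $\sum_t\normsq*{w\pow{t}-w\pow{0}}$ term back into the already-proven Lemma~\ref{lem:techn:client-drift}, using $\gamma^2L^2\tau^2\le 1/4$ to absorb it. Both arguments are sound; yours is a cleaner black-box reduction (the hypothesis $\gamma\le(2\tau L)^{-1}$ indeed implies the $\gamma\le(\sqrt2\tau L)^{-1}$ needed to invoke the earlier lemma) and in fact yields the sharper coefficients $6\gamma^2\tau^2$ and $3\gamma^2\tau^2$, which are then loosened to the stated $16$ and $8$. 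The paper's version avoids compounding the constants of the earlier lemma but requires redoing the recursion; the trade-off is purely stylistic here since both land within the stated bound.
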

\begin{proof}
    Proceeding similar to the last proof (expect using $\delta=\tau$) gives us 
    \[
    \expect_t\normsq*{w\pow{t+1} - w\pow{0}}
    \le \left(1+\frac{2}{\tau}\right)\normsq*{w\pow{t} - w\pow{0}} + 4\gamma^2 \tau\normsq*{\grad f(w\pow{0})} + 2\gamma^2 \tau \sigma^2 \,.
    \]
    Unrolling and summing up the sequence completes the proof, similar to that of Lemma~\ref{lem:techn:client-drift}.
\end{proof}

The next lemma is about bounding constants. 
\begin{lemma} \label{lemma:techn:pflsu:consts}
    Let $\gamma_u, \gamma_v, L_w, L_v, \chi, f \in \reals_+$ and a natural number $\tau$ be given.
    Denote 
    \[
        A := \gamma_u L_u^2 + f \gamma_v \chi^2 L_u L_v\,,
        \quad\text{and},\quad
        B := f \gamma_vL_v^2 + \gamma_u \chi^2 L_{u} L_v \,.
    \]
    Suppose $\gamma_u = c_u / (\tau L_u)$ and 
    $\gamma_v = c_v / (\tau L_v)$
    with $c_u, c_v > 0$ satisfying 
    \[
        c_u, c_v \le \frac{1}{\sqrt{6}} \max\{1, \chi^{-2}\} \,.
    \]
    Then, we have that 
    \[
        \gamma_v^2 \chi^2 L_uL_v B + \gamma_u^2 L_u^2 A \le A/(3\tau^2)\,,
        \quad\text{and}, \quad
        \gamma_u^2 \chi^2 L_u L_v A + \gamma_v^2 L_v^2 B \le B / (3\tau^2) \,.
    \]
\end{lemma}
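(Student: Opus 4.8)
The statement is a pure bookkeeping lemma: it isolates the constant‑chasing needed in the recursion of Lemma~\ref{lemma:pfl-su:client-drift}, so the plan is simply to substitute the prescribed step sizes and reduce each of the two claimed inequalities to an elementary polynomial inequality in $c_u, c_v, \chi, f$ in which all factors of $L_u, L_v$ and $\tau$ cancel. Writing $g_u := \gamma_u L_u = c_u/\tau$ and $g_v := \gamma_v L_v = c_v/\tau$, a one‑line computation gives
\[
A = L_u\bigl(g_u + f\chi^2 g_v\bigr), \qquad B = L_v\bigl(f g_v + \chi^2 g_u\bigr).
\]
Plugging these in together with $\gamma_u = c_u/(\tau L_u)$, $\gamma_v = c_v/(\tau L_v)$ and dividing the first target inequality $\gamma_v^2\chi^2 L_u L_v B + \gamma_u^2 L_u^2 A \le A/(3\tau^2)$ through by $L_u/\tau^3$ turns it into
\[
\chi^2 c_v^2\bigl(f c_v + \chi^2 c_u\bigr) + c_u^2\bigl(c_u + f\chi^2 c_v\bigr) \le \tfrac{1}{3}\bigl(c_u + f\chi^2 c_v\bigr),
\]
while the second, $\gamma_u^2\chi^2 L_u L_v A + \gamma_v^2 L_v^2 B \le B/(3\tau^2)$, divided by $L_v/\tau^3$ becomes the mirror‑image statement
\[
\chi^2 c_u^2\bigl(c_u + f\chi^2 c_v\bigr) + c_v^2\bigl(f c_v + \chi^2 c_u\bigr) \le \tfrac{1}{3}\bigl(f c_v + \chi^2 c_u\bigr).
\]

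To prove the first of these I would use only the consequences $c_u^2 \le \tfrac16$, $c_v^2 \le \tfrac16$ and $(\chi^2 c_v)^2 = \chi^4 c_v^2 \le \tfrac16$ of the hypothesis on $c_u, c_v$; these are exactly what is in force in the applications, since in Theorem~\ref{thm:pfl-su} one has $\lr \le \bigl(12(1+\chi^2)\bigr)^{-1} \le \tfrac{1}{\sqrt6}\min\{1,\chi^{-2}\}$. Since $c_u^2 \le \tfrac16$, the term $c_u^2(c_u + f\chi^2 c_v)$ is at most $\tfrac16(c_u + f\chi^2 c_v)$, so it remains to show $\chi^2 c_v^2(fc_v + \chi^2 c_u) \le \tfrac16(c_u + f\chi^2 c_v)$. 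Expanding the left side as $c_v^2\cdot(f\chi^2 c_v) + (\chi^2 c_v)^2\cdot c_u$ and bounding the first monomial's factor $c_v^2$ by $\tfrac16$ and the second monomial's factor $(\chi^2 c_v)^2$ by $\tfrac16$ gives exactly $\tfrac16 f\chi^2 c_v + \tfrac16 c_u = \tfrac16(c_u + f\chi^2 c_v)$. The second inequality is symmetric: peel off $c_v^2(fc_v + \chi^2 c_u) \le \tfrac16(fc_v + \chi^2 c_u)$ with $c_v^2 \le \tfrac16$, then write the remaining term $\chi^2 c_u^2(c_u + f\chi^2 c_v)$ as $c_u^2\cdot(\chi^2 c_u) + (\chi^2 c_u)^2\cdot(fc_v)$ and bound each monomial by $\tfrac16$ times its trailing factor.

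There is no genuine obstacle here — once the substitution is made the whole proof is a few lines of arithmetic. The only points demanding care are (i) checking that the $L_u, L_v, \tau$ powers cancel cleanly so the inequalities really are scale‑free in $c_u, c_v, \chi, f$, and (ii) tracking, in each monomial, which factor is bounded by $c^2 \le \tfrac16$ and which by $(\chi^2 c)^2 \le \tfrac16$; choosing them in the wrong order fails to close the bound. The constant $\tfrac1{\sqrt6}$ is tight: at $\chi^2 = 1$ and $c_u = c_v = \tfrac1{\sqrt6}$ both displayed inequalities hold with equality (for every $f$).
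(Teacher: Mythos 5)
Your proof is correct and follows essentially the same route as the paper's: substitute the step sizes, cancel all $L_u$, $L_v$, $\tau$ factors to reduce each claim to a polynomial inequality in $c_u, c_v, \chi, f$, and bound each monomial using $c^2 \le 1/6$ and $(\chi^2 c)^2 \le 1/6$. Note that both you and the paper actually need the reading $c_u, c_v \le \tfrac{1}{\sqrt{6}}\min\{1,\chi^{-2}\}$ of the hypothesis (the stated $\max$ appears to be a typo), which you correctly identify as what is in force in the applications.
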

\begin{proof}
    Note that it suffices to show 
    \[
    3\tau^2 \chi^2 \gamma_v^2 L_u L_v  B \le A / 2 \,,
    \quad\text{and},\quad
    3\tau^2 \chi^2 \gamma_u^2 L_u L_v A \le B/2 \,.
    \]
    Plugging in $\gamma_u, \gamma_v$, these are equivalent to
    \[
        6\chi^2 f c_v^3 + 6 \chi^4 c_v^2 c_u \le \chi^2 f c_v + c_u
        \quad\text{and},\quad
        6\chi^2 c_u^3 + 6 \chi^4 f c_v c_u^2 \le f c_v + \chi^2 c_u \,.
    \]
    The assumption on $c_v$ implies that 
    $6\chi^2 f c_v^3 \le \chi^2 f c_v$ and 
    $6 \chi^4 c_v^2 c_u \le c_u$. Therefore, the first condition holds. Similarly, the second condition holds too. 
\end{proof}

The final lemma is about tuning the learning rate: the proof is elementary and is omitted. 
\begin{lemma} \label{lem:sfl:best-params-2}
    Consider the map $\varphi: (0, \Gamma] \to \reals_+$ given by
    \[
        \varphi(\gamma) = 
        \frac{A}{\gamma T}  + B \gamma + C \gamma^2 \,,
    \]
    where $\Gamma,  A, B, C > 0$ are given. 
    Then, we have, 
    \[
        \varphi(\gamma^\star) \le 
        \frac{A}{\Gamma T} + 2\left(\frac{AB}{T}\right)^{1/2}
             + 2 C^{1/3} \left( \frac{A}{T} \right)^{2/3} \,,
    \]
    where $\gamma^\star$ is given by
    \[
        \gamma^\star = \min\left\{ 
        \Gamma, \sqrt{\frac{A}{BT}}, \left(\frac{A}{CT}\right)^{1/3} \right\} \,.
    \]
\end{lemma}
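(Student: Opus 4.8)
\textbf{Proof proposal for Lemma~\ref{lem:sfl:best-params-2}.}
The plan is a direct substitution-and-bound argument; no optimization machinery is needed. First I would record the three defining inequalities for $\gamma^\star = \min\{\Gamma,\, \sqrt{A/(BT)},\, (A/(CT))^{1/3}\}$, namely $\gamma^\star \le \Gamma$, $\gamma^\star \le \sqrt{A/(BT)}$, and $\gamma^\star \le (A/(CT))^{1/3}$. These handle the two ``increasing'' terms of $\varphi$ immediately: using the second inequality, $B\gamma^\star \le B\sqrt{A/(BT)} = \sqrt{AB/T}$; and using the third, $C(\gamma^\star)^2 \le C\,(A/(CT))^{2/3} = C^{1/3}(A/T)^{2/3}$.

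The only slightly less mechanical step is the ``decreasing'' term $A/(\gamma^\star T)$. Here I would use that the reciprocal of a minimum is the maximum of the reciprocals, and that a maximum of nonnegative numbers is at most their sum:
\[
\frac{1}{\gamma^\star} = \max\left\{\frac{1}{\Gamma},\ \sqrt{\frac{BT}{A}},\ \left(\frac{CT}{A}\right)^{1/3}\right\}
\le \frac{1}{\Gamma} + \sqrt{\frac{BT}{A}} + \left(\frac{CT}{A}\right)^{1/3}.
\]
Multiplying by $A/T$ gives
\[
\frac{A}{\gamma^\star T} \le \frac{A}{\Gamma T} + \frac{A}{T}\sqrt{\frac{BT}{A}} + \frac{A}{T}\left(\frac{CT}{A}\right)^{1/3}
= \frac{A}{\Gamma T} + \sqrt{\frac{AB}{T}} + C^{1/3}\left(\frac{A}{T}\right)^{2/3}.
\]

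Finally I would add the three bounds. The $A/(\Gamma T)$ contribution appears once, while the $\sqrt{AB/T}$ and $C^{1/3}(A/T)^{2/3}$ contributions each appear twice (once from $A/(\gamma^\star T)$ and once from the corresponding increasing term), which yields exactly
\[
\varphi(\gamma^\star) \le \frac{A}{\Gamma T} + 2\left(\frac{AB}{T}\right)^{1/2} + 2\,C^{1/3}\left(\frac{A}{T}\right)^{2/3},
\]
as claimed. There is essentially no obstacle here; the one point to state carefully is the reciprocal-of-min inequality above, and one should also note $\gamma^\star \in (0,\Gamma]$ so that $\varphi(\gamma^\star)$ is well-defined on the stated domain.
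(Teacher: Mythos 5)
Your proof is correct, and every algebraic step checks out: the two increasing terms are bounded directly by the second and third entries of the min, and the reciprocal-of-min-equals-max-of-reciprocals bound (followed by max $\le$ sum for nonnegative quantities) handles the $A/(\gamma^\star T)$ term, giving exactly the stated right-hand side. The paper explicitly omits this proof as elementary, and your argument is precisely the standard one the authors intend, so there is nothing to compare against or correct.
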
 
\section{Experiments: Detailed Setup and Hyperparameters} \label{sec:a:expt-setup}

We conduct our experiments on {four} datasets from three modalities, namely images, text, and speech.
The  datasets contain a natural, non-i.i.d. split of data which is reflective of data heterogeneity encountered in federated learning.
We describe in detail the experimental setup and hyperparameters. The code to reproduce the experimental results will be publicly released. 

The outline of this section is:
\begin{itemize}[itemsep=0cm,leftmargin=0.5cm,topsep=0cm]
    \item \S\ref{sec:a:expt:datasets} describes the tasks and their associated datasets and metrics. 
    \item \S\ref{sec:a:expt:pipeline} describes the experimental pipeline as well as the baselines we compare to.
    \item \S\ref{sec:a:expt:hyperparameters} presents the hyperparameters of all the algorithms. 
\end{itemize}

As discussed in \S\ref{sec:intro}, we take the weight $\alpha_k$ to be proportional to the number of datapoints available on the device.

\begin{figure*}[t]
\includegraphics[width=0.98\textwidth]{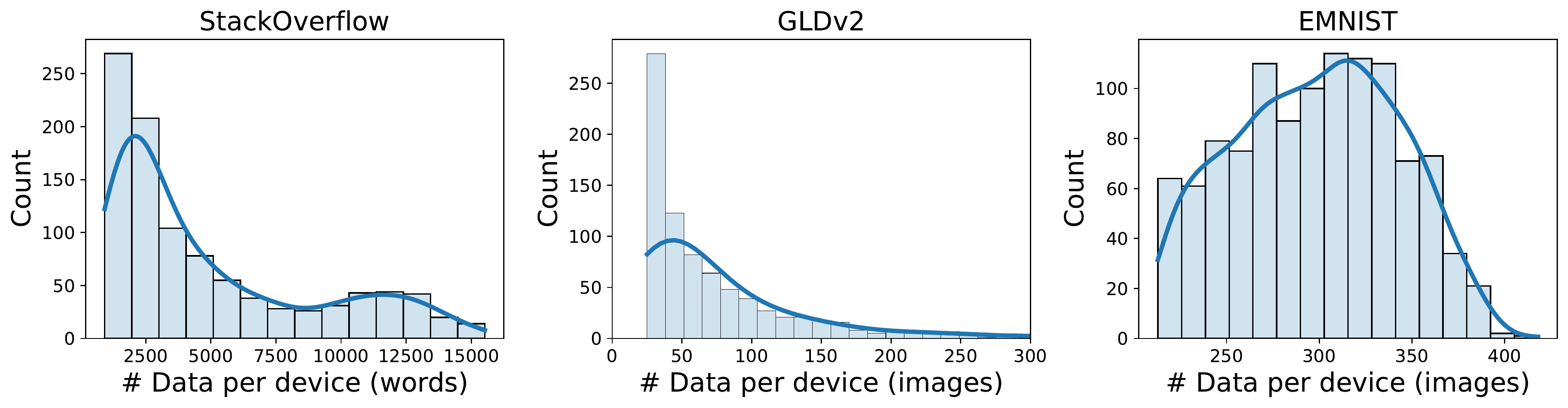}
\caption{\small{
    Distribution of number of training samples per device for each of the tasks considered in the experiments. For GLDv2, we do not show the long right tail, where the maximum number of data points per device is $1000$ (cf. Table~\ref{table:expt:dataset-summary}).
}}
\label{fig:expt:ds:hist}
\end{figure*}

\subsection{Datasets, Tasks and Models}\label{sec:a:expt:datasets}
We consider four tasks motivated by 
real-world applications of federated learning.
The tasks are summarized in Table~\ref{table:expt:dataset-summary} of the main paper and the distribution of data across the clients is visualized in Figure~\ref{fig:expt:ds:hist}. 

For each model, we consider three partial personalization architectures: 
\begin{enumerate}[itemsep=0cm,leftmargin=0.5cm,topsep=0cm,label=(\alph*)]
\item \textbf{Input layer personalization}: 
    Motivated by \citet{liang2020think}, this architecture places the first layer on-device to learn a personalized representation per-client, 
    while the rest of the model is shared.
    For the next-word prediction transformer model, we use the first transformer layer in place of the word embedding layer owing to its large size. 
\item \textbf{Output layer personalization}:
    Motivated by \citet{collins2021expoiting}, this architecture learns a shared global representation 
    but personalizes the prediction layer. 
    For the next-word transformer model, we use the last transformer layer in place of the last prediction layer owing to its large size. For the same reason, we use the second fully connected layer within the final transformer block for the speech-to-text transformer. 
\item \textbf{Adapter personalization}: 
    We also consider a novel partial personalization architecture, where the full model is shared among all clients, while each client adds personalized adapter modules, which are lightweight modules added between layers of the shared model. We use the transformer adapters proposed by \citet{houlsby2019parameter} 
    and residual adapters proposed by \citet{rebuffi2017learning}.
\end{enumerate}

\subsubsection{StackOverflow for Next Word Prediction}
\myparagraph{Dataset}
The StackOverflow dataset comprises of questions and answers from the programming question-answer website \href{Stack  Overflow}{stackoverflow.com}. 
The goal of the next word prediction task is to predict the next word given a partial sequence of words in a question or answer. This task is a good open-source benchmark for next word predictions in mobile keyboards. We use the StackOverflow dataset provided by~\citet{tff}.

\myparagraph{Client Distributions}
Each client corresponds to one user on Stack Overflow;
the data on the client corresponds to the questions and answers posted by this user.
We only consider clients with at least $100$ training sequences and $10$ testing sequences, where a sequence refers to either a question or an answer. 
We use a fixed subsample of $1000$ of them.
Following \citet{reddi2021adaptive}, 
we restrict the vocabulary to the top $10000$ most frequently occurring words in the dataset. 
We pad and truncate each sequence of each client to length $20$ and consider at most $1000$ training sequences on each client.

\myparagraph{Model}
We use a transformer model~\citep{vaswani2017attention} 
commensurate in size with BERT Mini~\citep{turc2019well}. 
It has with $4$ transformer blocks and $4$ attention heads in each self-attention layer with a transformer hidden dimension of $256$ and a fully-connected hidden dimension of $1024$. 
The output layer is a causal language modeling head, i.e., a fully connected layer which assigns a score for each possible vocabulary item, including the special tokens.
The model has $6$ million parameters, which require around $23$ megabytes of memory. 

\myparagraph{Partial Personalization Architecture}
The partial personalization architectures used are summarized in Table~\ref{table:a:expt:so:split}.

\myparagraph{Loss Function and Evaluation Metric}
We train the model with the causal language modeling objective. That is, for each partial sequence, we treat the prediction of the next word as a multiclass classification problem to minimize the multinomial logistic loss, also known as cross entropy loss.  
For evaluation, we use the top-$1$ accuracy of predicting words in the proper $10000$-word vocabulary (i.e., ignoring special tokens such as padding, out-of-vocabulary, and beginning/end of sequence).

\begin{table}[t]
    \caption{\small{Summary of partial personalization architectures for the transformer model for next word prediction.}}
\label{table:a:expt:so:split}
\begin{center}
\begin{adjustbox}{max width=\linewidth}
\small
\begin{tabular}{lccc}
\toprule
Personalization Type  &  Layer on-device & 
\begin{tabular}{c} \# Personalized \\ Params. \end{tabular}
&
\begin{tabular}{c} \# Shared \\ Params. \end{tabular} \\
\midrule
Input Layer & 1st transformer block & 
$0.8M$ & $4.9M$ \\
Output Layer & Last transformer block & 
$0.8M$ & $4.9M$ \\
Adapter & Adapter modules & $0.07M$ & $5.7M$ \\
\bottomrule
\end{tabular}
\end{adjustbox}
\end{center}
\end{table}

\subsubsection{GLDv2 for Visual Landmark Recognition}

\myparagraph{Dataset}
GLDv2 stands for Google Landmarks Dataset v2~\citep{weyand2020google}, which is a large-scale image dataset. It contains images of popular landmarks from around the world taken and uploaded by Wikipedia contributors. While the images vary in size, the most common image size is $800 \times 600$ pixels.

The goal of the visual landmark recognition task is to identify the landmark from its image. This task resembles a scenario where
smartphone users take photos of natural and architectural landmarks while traveling.
We use the federated version of the GLDv2 dataset introduced by \citet{hsu2020federated} with $2028$ landmarks and provided by \citet{tff}.

\myparagraph{Client Distributions}
Each client corresponds to one Wikipedia user and contains all the images contributed by that user. 
We only all $823$ clients with at least $50$
datapoints. We do not use original test set from GLDv2 from evaluation as it comes from different clients. Instead, we take $50\%$ of the data on each client as a testing set.

\myparagraph{Model}
We use a ResNet-18~\citep{he2016deep} model pretrained on ImageNet~\citep{deng2009imagenet}, 
with group normalization instead of batch normalization~\citep{hsieh2020noniid}.
We resize all images to $224\times 224$.
We use two data augmentations for training:
a random crop from $256\times 256$ and a random horizontal flip.
The model has $12$ million parameters, which require around $49$ megabytes of storage. 

\myparagraph{Partial Personalization Architecture}
The partial personalization architectures used are summarized in Table~\ref{table:a:expt:gldv2:split}.

\myparagraph{Loss Function and Evaluation Metric}
We use the multinomial logistic loss, also known as cross entropy loss. 
We evaluate the performance of the model using its classification accuracy.

\begin{table}[t]
    \caption{\small{Summary of partial personalization architectures for the ResNet-18 model for visual landmark recognition.}}
\label{table:a:expt:gldv2:split}
\begin{center}
\begin{adjustbox}{max width=\linewidth}
\small
\begin{tabular}{lccc}
\toprule
Personalization Type  &  Layer on-device & 
\begin{tabular}{c} \# Personalized \\ Params. \end{tabular}
&
\begin{tabular}{c} \# Shared \\ Params. \end{tabular} \\
\midrule
Input Layer & 1st conv. layer & 
$0.01M$ & $12.2M$ \\
Output Layer & Last fully connected layer & 
$1M$ & $11.2M$ \\
Adapter & Residual adapter modules & $1.4M$ & $12.2M$ \\
\bottomrule
\end{tabular}
\end{adjustbox}
\end{center}
\end{table}

\subsubsection{EMNIST for Character Recognition}
\myparagraph{Dataset}
EMNIST~\citep{cohen2017emnist} is a character recognition dataset. The goal is to identify images of handwritten digits or letters; there are 62 possible options (a-z,A-Z, 0-9).
The images are grey-scaled pictures of $28 \times 28 = 784$ pixels. We use the EMNIST dataset provided by~\citet{tff}.

\myparagraph{Client Distributions}
Each client corresponds to one ``writer'', i.e.,
the human subject who hand-wrote the digit/letter during the data collection process. 
We only use those clients with at least $100$ training points and $25$ testing points: there are $1114$ of such clients.

\myparagraph{Model}
We use a ResNet-18~\citep{he2016deep} model
with group normalization instead of batch normalization~\citep{hsieh2020noniid}.
We make two modifications to handle the smaller image size ($28\times28\times 1$ as opposed to the $224\times224\times3$ which the original ResNet was designed to accept): 
(a) we use a convolutional kernel of size $3\times 3$ rather than the original $7 \times 7$ in the first convolution layer, and, (b) we drop the first pooling layer.
The model has $11$ million parameters, which require around $45$ megabytes. 
Note that the number of parameters in this ResNet is smaller than the one for GLDv2 due to the architectural modifications we make for smaller images as well as the smaller number of classes.

\myparagraph{Partial Personalization Architecture}
The partial personalization architectures used are summarized in Table~\ref{table:a:expt:emnist:split}.

\myparagraph{Loss Function and Evaluation Metric}
We use the multinomial logistic loss, also known as cross entropy loss. 
We evaluate the performance of the model using its classification accuracy.

\begin{table}[t]
    \caption{\small{Summary of partial personalization architectures for the ResNet-18 model for character recognition.}}
\label{table:a:expt:emnist:split}
\begin{center}
\begin{adjustbox}{max width=\linewidth}
\small
\begin{tabular}{lccc}
\toprule
Personalization Type  &  Layer on-device & 
\begin{tabular}{c} \# Personalized \\ Params. \end{tabular}
&
\begin{tabular}{c} \# Shared \\ Params. \end{tabular} \\
\midrule
Input Layer & 1st conv. layer & 
$0.7K$ & $11.2M$ \\
Output Layer & Last fully connected layer & 
$0.03M$ & $11.2M$ \\
Adapter & Residual adapter modules & $1.4M$ & $11.2M$ \\
\bottomrule
\end{tabular}
\end{adjustbox}
\end{center}
\end{table}

\subsubsection{LibriSpeech for Automatic Speech Recognition}
\myparagraph{Dataset}
Librispeech is a speech-to-text dataset containing snippets of speech and the associated text from open domain audiobooks~\citep{panayotov2015librispeech}. 
Given an utterance containing read English speech, 
the goal is output a text transcription. 
Each device corresponds to the narrator of the utterance, leading to a natural non-identical split of the data with differences in accent, tone, and voice across devices. 
This task is reflective of voice commands and speech recognition on mobile phones. 

We create a federated version of LibriSpeech. We use the ``clean`` subsets of LibriSpeech (a total of $460$h of speech) to pretrain a model in a non-federated manner. We use the ``train-other-500`` subset (a total of $500$h of audio), which typically contains noiser audio, to construct a federated dataset. Real-world federated tasks often contain proxy data used to pretrain a model prior to federated training, such as ImageNet-pretrained vision models. We emulate this setup by first pretraining  all our models on the non-federated clean subset of LibriSpeech. 

\myparagraph{Client Distributions}
We construct the federated dataset from the train-other-500 subset of LibriSpeech and do not use the corresponding dev and test sets. Of the $1166$ narrators, we discard those with only one chapter of data.\footnote{
LibriSpeech organizes the data for each narrator into 
chapters of the source book.
}
For each narrator, we assign one chapter as the test data and the remaining as the training data. This is done to ensure that each device has between $10-50$\% of the device's total data in terms of length of audio\footnote{
When multiple candidate chapters are available for use as a test set, we use the one closest in size to 20\% of the data.
} --- 
this leads to approximately $30$\% of the available audio being used for testing and the remaining $70$\% for training.
Overall, we get a federated dataset with $902$ narrators, each of whom corresponds to a device in the federated setting.

\myparagraph{Model}
We use a transformer model~\citep{vaswani2017attention}
with convolutional subsamplers, as proposed by \citet{synnaeve2019end}. 
The input audio is represented as a sequence of $40$ log-mel filterbank coefficients.
The model has two 1D convolutional layers with a stride of $2$, followed by $6$ transformer blocks and $6$ attention heads in each self-attention layer with a transformer hidden dimension of $384$ and a fully-connected hidden dimension of $1536$. 
The final output layer produces log probabilities on an output vocabulary of $5000$ byte pair encodings of subwords. 
The model has $15$ million parameters, requiring around $60$ megabytes of memory. 

\myparagraph{Partial Personalization Architecture}
The partial personalization architectures used are summarized in Table~\ref{table:a:expt:ls:split}.

\myparagraph{Loss Function and Evaluation Metric}
We train the model with the Connectionist Temporal Classification (CTC) loss~\citep{graves2006connectionist}.
This is a structured prediction loss that uses dynamic programming to marginalize over all possible alignments between the per-frame subwords and the text transcription.
For evaluation, we use the word error rate (WER) obtained from a greedy decoding of the model prediction for a given utterance (or equivalently, beam search with a beam size of $1$ with no external language models). 

\begin{table}[t]
    \caption{\small{Summary of partial personalization architectures for the transformer model for speech recognition.}}
\label{table:a:expt:ls:split}
\begin{center}
\begin{adjustbox}{max width=\linewidth}
\small
\begin{tabular}{lccc}
\toprule
Personalization Type  &  Layer on-device & 
\begin{tabular}{c} \# Personalized \\ Params. \end{tabular}
&
\begin{tabular}{c} \# Shared \\ Params. \end{tabular} \\
\midrule
Input Layer & Convolutional subsamplers & 
$0.8M$ & $12.6M$ \\
Output Layer & 2nd f.c. in last transformer block & 
$0.6M$ & $12.8M$ \\
Adapter & Adapter modules & $0.15M$ & $13.4M$ \\
\bottomrule
\end{tabular}
\end{adjustbox}
\end{center}
\end{table}

\subsection{Experimental Pipeline and Baselines} \label{sec:a:expt:pipeline}
There are three components in the training pipeline for all experiments:
\begin{enumerate}[itemsep=0cm,leftmargin=0.5cm,topsep=0cm,label=(\alph*)]
    \item \label{pipeline:expt:non-pers}
    {Non-personalized federated training}: The first step involves training a global model $w_g$ using the one-model-fits-all approach of \eqref{eqn:one-fits-all} with FedAvg variants.
    \item \label{pipeline:expt:pfl}
    {Personalized federated training}: This optional second step involves training the shared parameters $w$ together with the personalized parameters $v_k$ using a personalized federated learning approach. We warm-start $w, v_k$ from the non-personalized model $w_g$ from the previous step.
    \item \label{pipeline:expt:pfl-ft} 
    {Final finetuning}: The last step involves only finetuning the personalized parameters $v_k$ while the shared parameters $w$ remain unchanged. 
\end{enumerate}

For step~\ref{pipeline:expt:pfl}, we initialize $v_k$ for each $k$ to be the appropriate part of $w_g$ for input/output layer personalization. On the other hand, for adapters, we initialize $v_k$ to be equal to the \emph{same} set of randomly initialized weights for each device $k$.

We consider the following baselines:
\begin{itemize}[itemsep=0cm,leftmargin=0.5cm,topsep=0cm]
    \item \textbf{Non-personalized}: This denotes the performance of step \ref{pipeline:expt:non-pers} of the pipeline above, i.e., non-personalized federated training with FedAvg variants.
    \item \textbf{Full model personalization}: We consider three baselines of personalization of the full model: 
        \begin{enumerate}[itemsep=0cm,leftmargin=1cm,topsep=0cm,label=(\roman*)]
            \item \textbf{Finetune}: The non-personalized model from step (a) of the pipeline above is finetuned locally on each client (step \ref{pipeline:expt:pfl-ft} of the pipeline). Step~\ref{pipeline:expt:pfl} is skipped for this baseline. 
            \item \textbf{Ditto}~\citep{Li2021ditto}: The non-personalized model from step (a) of the pipeline above is finetuned locally on each client (step \ref{pipeline:expt:pfl-ft} of the pipeline) with $\ell_2$ regularization $\norm{v - w_g}^2$. Step~\ref{pipeline:expt:pfl} is skipped for this baseline. 
            \item \textbf{pFedMe}~\citep{dinh2020moreau}: The non-personalized baseline model from step \ref{pipeline:expt:non-pers} is trained further in step~\ref{pipeline:expt:pfl} to optimize \eqref{eqn:pfl-full-model} using the pFedMe algorithm of~\citet{dinh2020moreau}. Finally the resulting model $w$ is finetuned locally in step~\ref{pipeline:expt:pfl-ft}.
        \end{enumerate}
    \item \textbf{Partial Model Personalization}: We consider partial model personalization with three different architectures, as defined in \S\ref{sec:a:expt:datasets}. For each personalization approach, we start with the non-personalized model in step~\ref{pipeline:expt:non-pers}, continue personalization in step~\ref{pipeline:expt:pfl} using either \pflam or \pflsu as the algorithm, and finally run step~\ref{pipeline:expt:pfl-ft} for the local finetuning.
\end{itemize}

\begin{table*}[t]
 \caption{\small{
    Hyperparameters for each dataset/task.
    }}
\label{table:expt:hyperparam}
\begin{center}
\begin{adjustbox}{max width=\linewidth}
\small
\small
\begin{tabular}{cccccc}
\toprule
& Hyperparameter & StackOverflow & GLDv2 & EMNIST & LibriSpeech \\
\midrule

\multirow{9}{*}{Common} 
& Batch size & 64 & 64 & 32 & 32\\
& Devices per round & 50 & 50 & 10 & 50 \\
& Local epochs & 1 & 1 & 1 & 1\\
& Server Optimizer & FedAdam & FedAdam & FedAvg & FedAdam \\
& Client Optimizer & SGD & SGD & SGD & SGD  \\
& Global Scheduler & {Linear} & Linear & Exponential & Linear \\
& Warm up & $10\%$ of rounds & $10\%$ of rounds & N/A & $10\%$ of rounds\\
& LR decay rounds & N/A & N/A & $500$ & N/A\\
& Max. grad. norm. & $0.1$ & N/A & N/A & $0.25$ \\
\midrule
\multirow{3}{*}{\begin{tabular}{l} Non-personalized training \\ (step (a) of the pipeline)\end{tabular}} 
& \# Rounds & 1000 & 2500 & 2000 & 500  \\
& Server learning rate & $5\times 10^{-4}$ & $2 \times 10^{-4}$ & 1.0 & $10^{-3}$ \\
& Client learning rate & $1$ & $10^{-2}$ & $0.5$ & $10^{-2}$
\\
\midrule
\multirow{3}{*}{\begin{tabular}{l} Personalized training \\ (step (b) of the pipeline)\end{tabular}} 
& \# Rounds & 500 & 600 & 500 & 500 \\
& Server learning rate & $5\times 10^{-5}$ & $2 \times 10^{-5}$ & 1.0 & $10^{-3}$ \\
& Client learning rate & $10^{-1}$ & $10^{-3}$ & $10^{-2}$ & $10^{-2}$ \\
\midrule
\multirow{3}{*}{\begin{tabular}{l} Local finetuning \\ (step (c) of the pipeline)\end{tabular}} 
& \#Epochs & 5 & 5 & 5 & 5 \\
& Optimizer & SGD & SGD & SGD & SGD \\
& Client learning rate & $10^{-1}$ & $10^{-3}$ & $10^{-2}$ & $10^{-4}$ \\
\bottomrule
\end{tabular} \end{adjustbox}
\end{center}
\end{table*}

\subsection{Hyperparameters and Evaluation Details}\label{sec:a:expt:hyperparameters}

All the tuning of hyperparameters was performed on validation data, formed by holding out $20\%$ of the training data on each device. Once the tuning was complete, we reran the experiments on the full training data, including those held out for validation.

\myparagraph{Evaluation Metric}
Our primary evaluation metric for next-word prediction and image classification is the weighted average of the test accuracy on each client, weighted by the number of test examples (the details of how the accuracy is computed on each dataset is given in \S\ref{sec:a:expt:datasets} in the paragraph on ``Loss Function and Evaluation Metric'').
This corresponds to the unweighted accuracy obtained by pooling all the data locally, similar to the loss as discussed in \S\ref{sec:intro}.
The same metric is used for hyperparameter tuning and is reported in all the tables and plots, unless explicitly noted otherwise. For speech recognition, we similarly use a weighted average of the word error rate (WER).

The final hyperparameters we use are given in Table~\ref{table:expt:hyperparam}.

\myparagraph{Rounds}    
We start with the number of communication rounds (i.e., the number of calls to secure aggregation routine for the shared parameters), which is used to measure the progress of each algorithm. For the non-personalized training, we use $1000$ rounds for StackOverflow, $2500$ rounds for GLDv2 and $2000$ rounds for EMNIST. 
For the personalized training, we warm-start the model from the non-personalized one, and run the training for $500$ rounds for StackOverflow and EMNIST and $600$ rounds for GLDv2.
 
\myparagraph{Devices per Round}   
All devices are assumed to be available and selections are made uniformly at random. 
Following \citep{reddi2021adaptive,weyand2020google}, we select $50$ devices per round for StackOverflow/GLDv2 and $10$ per round for EMNIST,
for both the non-personalized as well as the personalized training.

\myparagraph{Local Updates and Minibatch Size}
Each selected device locally runs $1$ epoch of mini-batch stochastic gradient descent locally for non-personalized as well as personalized federated training. The final finetuning at the end of personalized training is performed for $5$ epochs. 
We use a minibatch size of $64$ for StackOverflow/GLDv2 and $32$ for EMNIST for all settings.
    
\myparagraph{Server and Client Optimizer Details} 
We use FedAvg for EMNIST and FedAdam~\citep{reddi2021adaptive} for StackOverflow and GLDv2. We also use a global scheduler, which applies a schedule on the client learning rates across rounds, while the client learning rate within each round is held constant. We use either a linear scheduler or an exponential scheduler (also called ``stepLR'' in PyTorch). A linear scheduler applies a linear warmup, if applicable, until the maximum learning rate followed by a linear decay to $0$. An exponential scheduler halves the client learning rate once every fixed number of rounds. 
Both the client and server learning rates are tuned using the validation set. 

\myparagraph{Regularization Coefficient for pFedMe and Ditto}
We tune the regularization coefficient $\lambda_k = \lambda$ for pFedMe and Ditto using the validation data from the set $\{10^{-4}, 10^{-3}, \cdots, 10^{0}\}$ of possible values. The tuned values are:
\begin{itemize}[itemsep=0cm,leftmargin=0.5cm,topsep=0cm]
    \item StackOverflow: $10^{-3}$ for Ditto and  $10^{-4}$ for pFedMe,
    \item GLDv2: $10^{-1}$ for both Ditto and pFedMe,
    \item EMNIST: $10^{-1}$ for both Ditto and pFedMe.
\end{itemize}

\myparagraph{Random Seed}
We report numbers averaged over $5$ random seeds for all experiments, with the exception of the speech recognition task.

\begin{table*}
\caption{\small 
Memory requirements (in megabytes) for training partial model personalization and full model personalization for the experimental settings considered here.}  \label{tab:pfl:expt:memory}
\begin{center}
\small
\begin{tabular}{lrrr}
\toprule
\textbf{Mode}       &     \textbf{StackOverflow} & \textbf{GLDv2} & \textbf{EMNIST} \\
\midrule
No personalization     & $71$  & $186$   & $142$    \\
Input layer personalization   & $67$  & $186$   & $142$    \\
Output layer  personalization  & $67$  & $174$   & $142$    \\
Adapter      personalization & $72$  & $222$   & $159$    \\
Full personalization   & $116$ & $263$   & $232$    \\
\midrule
Memory savings with partial personalization &
$\mathbf{42\%}$  & $\mathbf{34\%}$ &  $\mathbf{39\%}$ \\
\bottomrule
\end{tabular}
\end{center}
\end{table*}
    
\subsection{Estimated Memory Requirement} \label{sec:a:expt-setup:memory}

We estimate the memory footprint for partial versus full personalization during training below. During deployment, the memory footprint of partial and full model personalization is the same since one full model is deployed. 

\myparagraph{Estimation Procedure}
We assume that the following are needed to be stored on device $i \in S\pow{t}$ during round $t$ of training: 
\begin{itemize}
    \item $u\pow{t}$, the previous broadcast global model, which is needed to calculate the model delta to be sent back to the server,
    \item current iterate of the shared parameter $u_{i,k}\pow{t}$,
    \item current iterate of the personal parameter $v_{i, k}\pow{t}$,
    \item their respective gradients $\grad_u$ and $\grad_v$, and, 
    \item the internal buffers required for backpropagation. 
\end{itemize}

The total memory consumption is therefore, 
\[
    \text{Memory} = 3 \times \text{size}(u) + 2 \times \text{size}(v) + \text{size(backprop)} \,.
\]

We estimate the size of the backpropagation buffers for a batch size of $1$. 

\myparagraph{Training Memory Requirement}
For full model personalization $\text{size}(v) = \text{size}(u)$, whereas $\text{size}(v) \ll \text{size}(u)$ for the partial personalization architectures we have considered. Therefore, the total memory requirement of training partial model personalization will be smaller than full model model personalization. 

From Table~\ref{tab:pfl:expt:memory}, we see that partial personalization can result in a $34\%$ to $42\%$ reduction in the memory consumption across the models and datasets considered in the experiments.

\section{Experiments: Additional Results} \label{sec:a:expt-results}

We now present the detailed experimental results.

\subsection{Speech Recognition: FedAlt vs. FedSim} 
\label{sec:a:expt-results:speech}

We compare FedAlt and FedSim for speech recognition in Table~\ref{table:a:speech-all}. We find that input layer personalization with FedAlt has the smallest word error rate of all the models considered. 

\begin{table*}[t]
    \caption{\small{
    A comparison of FedAlt and FedSim on the \textbf{speech recognition} task in terms of the word error rate (WER) \%. Smaller values indicate better predictive performance.
    }}
\label{table:a:speech-all}
\begin{center}
\begin{adjustbox}{max width=\linewidth}
\small
\begin{tabular}{lrr}
\toprule
{Personalization} &    FedAlt &    FedSim \\
\midrule
Finetune         &  $15.55$ &  $15.55$ \\
Input Layer        &  \tabemph{} $\mathbf{15.13}$ &  $15.47$ \\
Output Layer        &  $15.53$ &  $15.51$ \\
Adapter       &  $15.50$ &  $15.54$ \\
\bottomrule
\end{tabular}

 \end{adjustbox}
\end{center}
\end{table*}

\subsection{Ablation: Final Finetuning for \pflam and \pflsu}
\label{sec:a:expt-results:final-finetuning}
We now study the effect of the final finetuning (step~\ref{pipeline:expt:pfl-ft} of the experimental pipeline; cf. \S\ref{sec:a:expt:pipeline}) for \pflam and \pflsu. 

\myparagraph{The final finetuning has a minimal impact on partial personalization}
We see from Table~\ref{table:expt:ft:stateful} that the effect of the final finetuning is much smaller than the improvements from personalization.
For instance, the improvements from finetuning are close to $0$ for \pflam on the StackOverflow dataset. For GLDv2, the finetuning accounts for $<0.5$pp of improvement, whereas personalization overall accounts for $5$ to $15$pp. 

\myparagraph{The final finetuning is more important to \pflsu than \pflam}
Table~\ref{table:expt:ft:stateful} also shows that the final finetuning helps \pflsu more than \pflam. However, \pflam still outperforms \pflsu, as we saw in  Table~\ref{table:expt:opt-algo:stateful}.
Overall, this shows that \pflam is a better algorithm than \pflsu. The final finetuning helps \pflsu make up some percentage points in accuracy, but not enough to make up its gap with \pflam.

\begin{table*}[t]
    \caption{\small{
    The change in accuracy (percentage points) from the final finetuning for \pflam and \pflsu with stateful devices. 
    The subscript denotes the standard deviation over 5 random seeds.
    }}
\label{table:expt:ft:stateful}
\begin{center}
\begin{adjustbox}{max width=\linewidth}
\small
\renewcommand{\arraystretch}{1.2}
\begin{tabular}{lrrrrrr}
\toprule
{} & \multicolumn{2}{c}{StackOverflow} & \multicolumn{2}{c}{GLDv2} & \multicolumn{2}{c}{EMNIST} \\
\cmidrule(lr){2-3}
\cmidrule(lr){4-5}
\cmidrule(lr){6-7}
{} &          FedAlt &         FedSim &         FedAlt &         FedSim &         FedAlt &         FedSim \\
\midrule
Input Layer  &  $-0.06_{0.01}$ &  $0.04_{0.02}$ &  $0.12_{0.02}$ &  $0.17_{0.03}$ &  $0.12_{0.01}$ &  $0.12_{0.03}$ \\
Output Layer &   $0.00_{0.01}$ &  $0.25_{0.02}$ &  $0.49_{0.02}$ &  $0.57_{0.03}$ &  $0.09_{0.01}$ &  $0.09_{0.03}$ \\
Adapter      &   $0.01_{0.01}$ &  $0.40_{0.08}$ &  $0.14_{0.02}$ &  $0.17_{0.01}$ &  $0.27_{0.02}$ &  $0.33_{0.03}$ \\
\bottomrule
\end{tabular}
 \end{adjustbox}
\end{center}
\end{table*}

\begin{figure*}[t]
\includegraphics[width=0.98\textwidth]{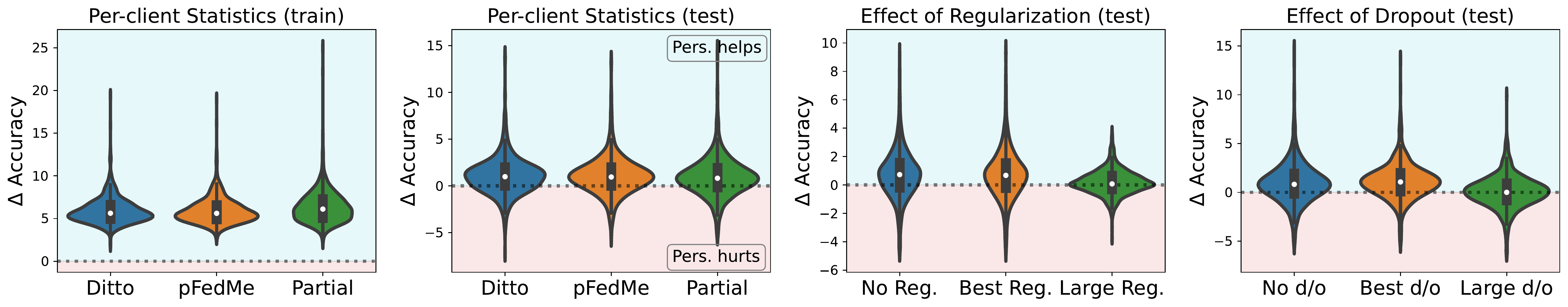}
\caption{\small{
    \textbf{Left two}: Distribution of change in the per-device train (left most) and test (center left) accuracy due to personalization on the StackOverflow dataset. 
    \textbf{Right two}: Distribution of change in the per-device test accuracy of partial personalization under regularization on the StackOverflow dataset: (a) center right: adapter personalization under $\ell_2$ regularization, and,
    (b) rightmost: output layer personalization under dropout.
    Note that the ``No Reg.'' and ``No d/o'' plots on the right two are different because they personalize different model parts. 
    \textbf{Interpretation}: The white dot in inside the violin denotes the median, while the black box enclosing this white dot marks the interquartile range (i.e., $25\textsuperscript{th}$ and $75\textsuperscript{th}$ percentiles). 
    The body of the violin is a kernel density estimate of the distribution of accuracies. The lines extend out to the minimum and maximum accuracy in each case.
}}
\label{fig:expt:violin}
\end{figure*}

\subsection{Effect of Personalization on Per-Device Generalization}
\label{sec:a:expt-results:generalization}

\myparagraph{Summary of all scatter plots}
All the scatter plots shown in the main paper are summarized in the violin plot of Figure~\ref{fig:expt:violin}. We see from the leftmost figure that the training accuracies on all devices improve with personalization. From the second figure, we see that the test accuracy of some of the devices reduces with personalization; this is true for both partial and full personalization.

From the third plot of Figure~\ref{fig:expt:violin}, we see that regularization does not mitigate this overfitting. In fact, the regularization tuned for best average accuracy leads to a nearly identical distribution of test accuracies. A larger regularization reduces the spread of accuracies, but does so at the expense of a smaller median (white dot). 
The fourth plot of Figure~\ref{fig:expt:violin} shows that the effect of dropout is similar. The best dropout improves the median accuracy, but it does not mitigate the issue of some devices being hurt by personalization.

\myparagraph{Train Accuracy plots for devices}
From Figure~\ref{fig:expt:scatter:train-test-so}, we see that personalization leads to \emph{a reduction in test accuracy} on some of the devices beyond the initial non-personalized model. The corresponding train accuracy plot is given in Figure~\ref{fig:expt:scatter:train-test-so}. We observe that the personalization always leads to an improvement in the training accuracy but not in the test accuracy.
The analogous plots for GLDv2 are in Figure~\ref{fig:expt:scatter:train-test-gldv2}, where the trends are similar.

\myparagraph{Whether personalization helps a device or not depends on the random seed}
We see in Figure~\ref{fig:expt:scatter:per-seed} that the shaded region for some of the devices intersects the dotted line at $0$. In other words, personalization sometimes helps this device and sometimes hurts it, depending on the random seed. This indicates that the best fix in practice is to use A/B testing on the \emph{deployed model} to choose whether to use the personalized model or the non-personalized one. 

\begin{figure*}[t]
\includegraphics[width=0.98\textwidth]{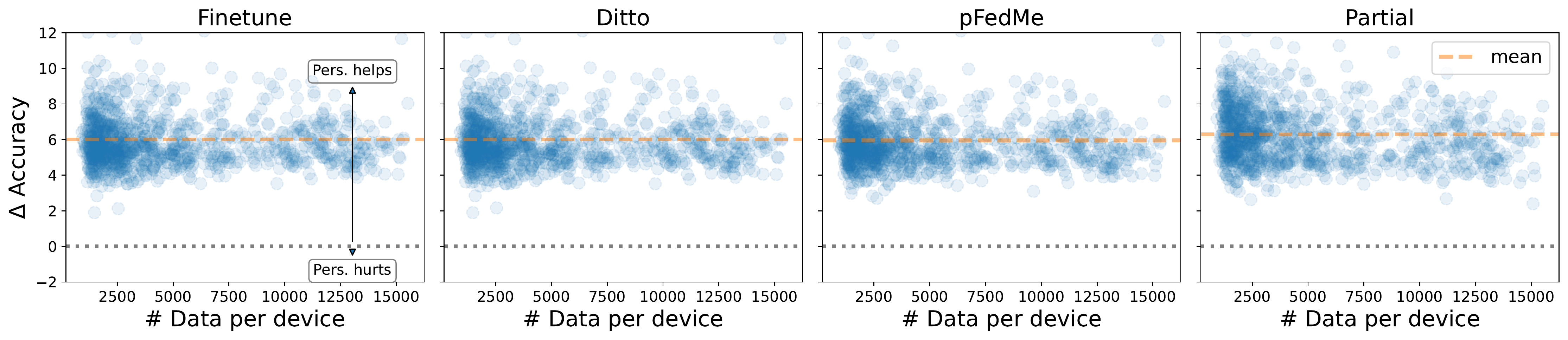}
\includegraphics[width=0.98\textwidth]{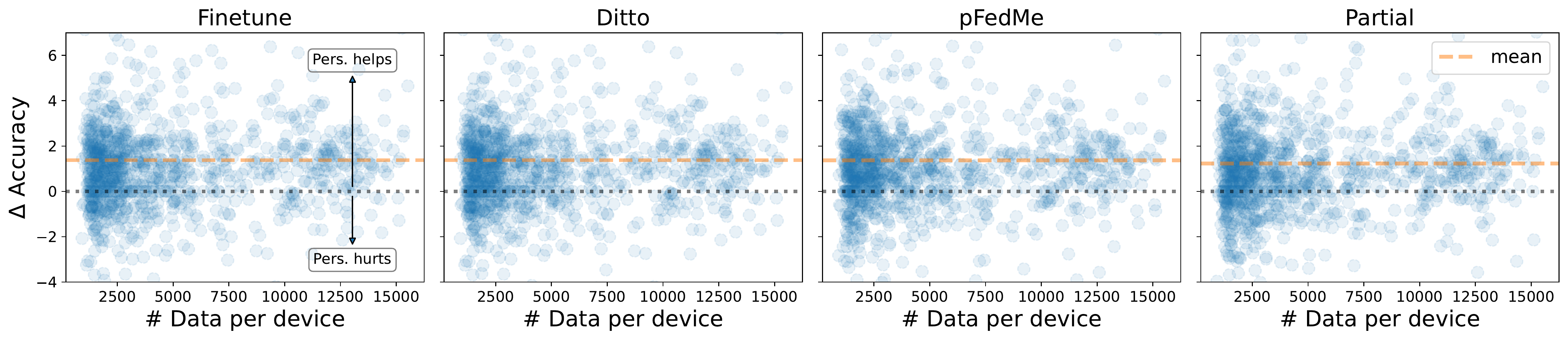}
\caption{\small{
    Scatter plot of change in accuracy (pp) per-device versus the number of training samples on the device for StackOverflow. \textbf{Top}: Training accuracy. \textbf{Bottom}: Test accuracy.
    This is the full version of Figure~\ref{fig:expt:scatter:main-so} from the main paper.
}}
\label{fig:expt:scatter:train-test-so}
\end{figure*}

\begin{figure*}[t]
\includegraphics[width=0.98\textwidth]{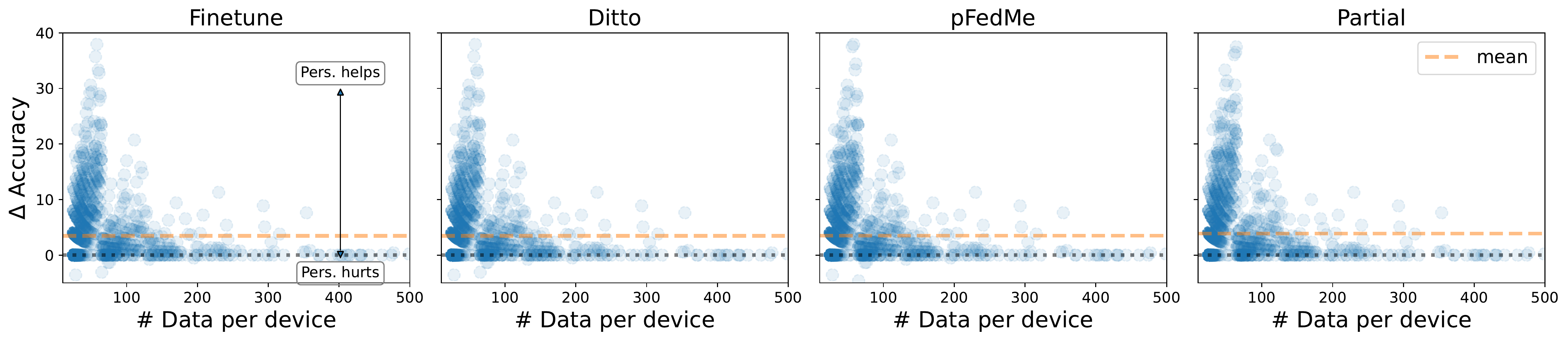}
\includegraphics[width=0.98\textwidth]{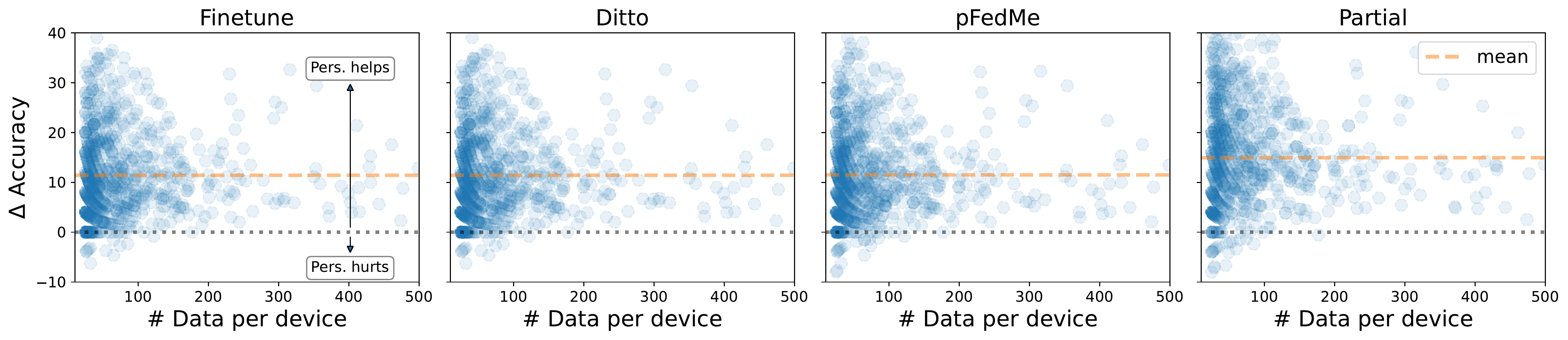}
\caption{\small{
    Scatter plot of change in accuracy (pp) per-device versus the number of training samples on the device for GLDv2. \textbf{Top}: Training accuracy. \textbf{Bottom}: Test accuracy.
}}
\label{fig:expt:scatter:train-test-gldv2}
\end{figure*}

\begin{figure*}[t]
\centering
\includegraphics[width=0.85\textwidth]{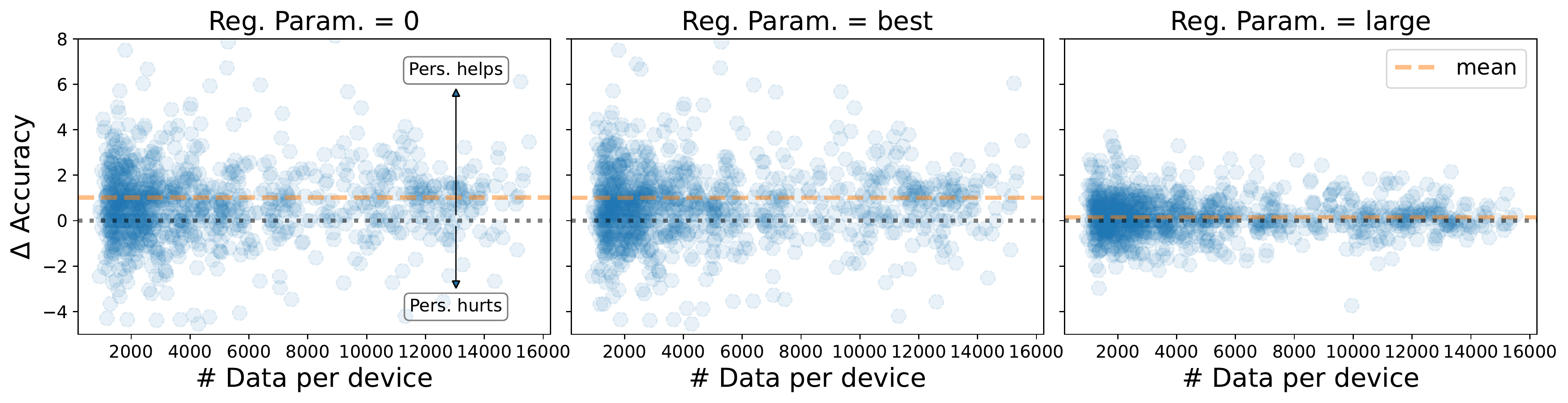}
\includegraphics[width=0.85\textwidth]{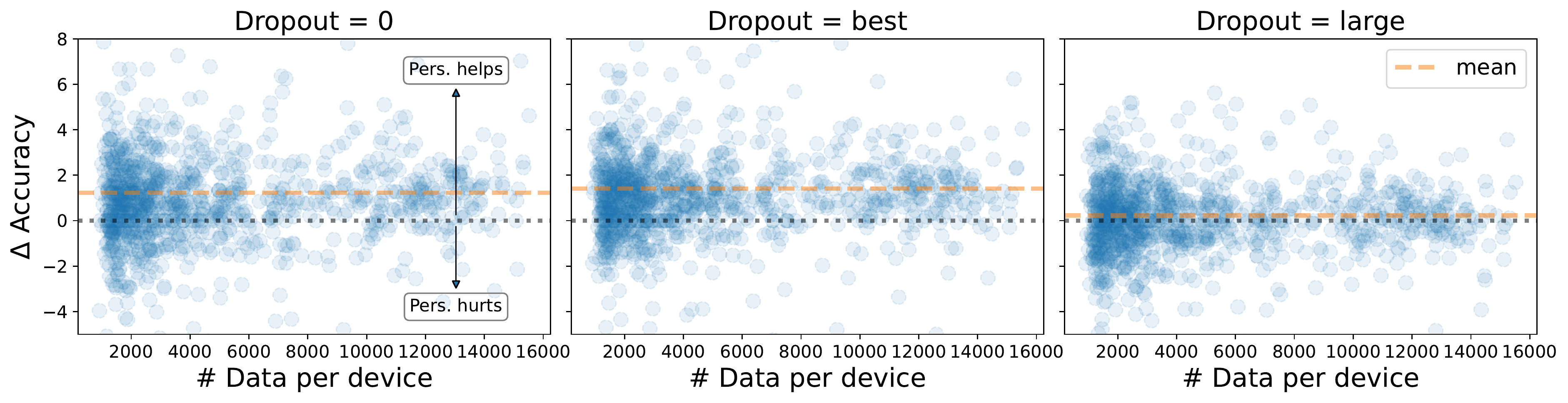}
\caption{\small{
    Scatter plot of change in accuracy (pp) per-device versus the number of training samples on the device with the effect of regularization. \textbf{Top}: $\ell_2$ regularization a.k.a. weight decay. \textbf{Bottom}: dropout.
    The ``best'' values of the $\ell_2$ regularization parameter and dropout are chosen to maximize the average test accuracy across all devices.
}}
\label{fig:expt:scatter:reg-so}
\end{figure*}

\myparagraph{Regularization and dropout do not mitigate this issue} From the first row of Figure~\ref{fig:expt:scatter:reg-so}, we see that the weight decay with best mean accuracy exactly matches the unreguarlized case in terms of per-device statistics. Increasing the regularization weight can reduce the spread of per-device accuracy. However, this only leads to a worse mean accuracy and does not mitigate the issue of personalization hurting individual devices. 

From the second row of Figure~\ref{fig:expt:scatter:reg-so}, we see that the best dropout ($0.3$ in this case) leads to slight increase in average accuracy ($0.18$ pp). It also reduces the number of devices hurt by personalization from $256$ out of $1000$  to $193$, but it does not fix this issue. Increasing dropout further only leads to a degradation of per-device statistics.

\begin{figure*}[t]
\includegraphics[width=0.98\textwidth]{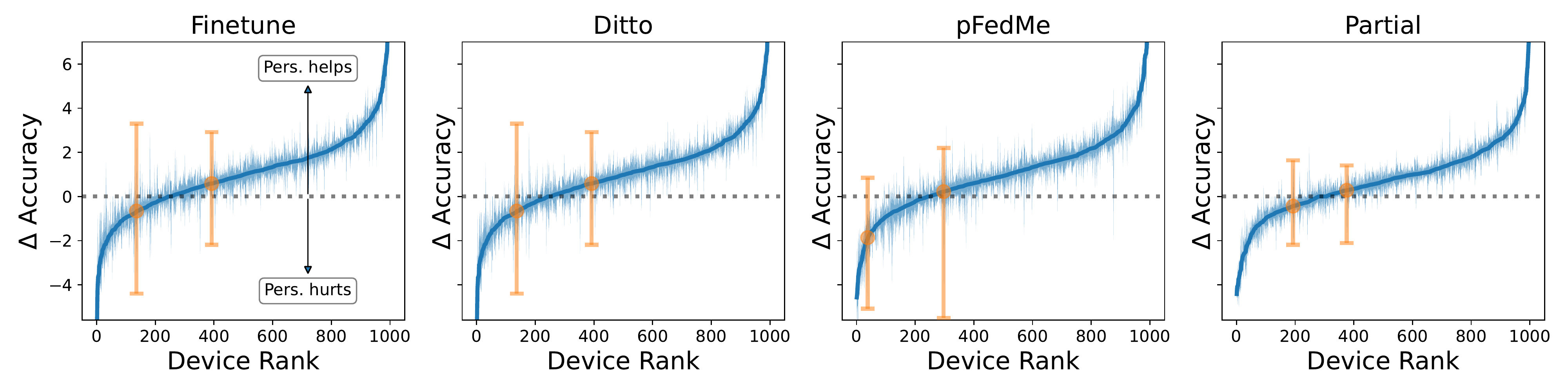}
\caption{\small{
    Change in per-device accuracy (pp) due to personalization. The solid line is the mean over $5$ random runs and the shaded area denotes the max/min across these runs. The devices are sorted in ascending order of accuracy change. 
    The points in orange depict two example devices who might either be helped or harmed by personalization depending on the random seed. 
}}
\label{fig:expt:scatter:per-seed}
\end{figure*}

\subsection{Partial Personalization for Stateless Devices}
\label{sec:a:expt-results:stateless}
The algorithms we considered in this paper, namely \pflam and \pflsu, require the devices to maintain the personalized parameters $v_i$'s as state across rounds.
In cross-device federated learning settings, it is also interesting to consider \emph{stateless} devices, which are not allowed to maintain state between training rounds.

We give preliminary experiments in this setting. We modify the \pflam and \pflsu algorithms from the main paper so that the personalized parameters $v_i$ are reinitialized each time device $i$ is chosen for participation. We warm-start $v_i$ from the appropriate part of the non-personalized model trained in step \ref{pipeline:expt:non-pers} of the pipeline. For adapters, we fix a random initialization once, and reuse it.

\myparagraph{\pflam is better than \pflsu for stateless devices, although the improvement is smaller}
We see from Table~\ref{table:expt:opt-algo:stateless} that all algorithms perform similarly for the stateless setting. Nevertheless, we see that \pflam obtains mild improvements over both \pflsu and finetuning for GLDv2, e.g., $0.24$pp with adapters.

\myparagraph{The final finetuning is crucial for stateless devices}
We see from Table~\ref{table:expt:ft:stateless} that the final finetuning accounts for most of improvements in the stateless case. For instance, for GLDv2, the final finetuning accounts for $11.68$ and $10.42$pp out of a total of $12.67$ and $11.76$pp for \pflam and \pflsu respectively. However, the personalized federated training (step~\ref{pipeline:expt:pfl} of the pipeline; cf. \S\ref{sec:a:expt:pipeline}) still leads to an increase in accuracy of $1$ to $1.34$pp.

\begin{table*}[t]
 \caption{\small{
    This is the counterpart of Table~\ref{table:expt:opt-algo:stateful} to stateless devices. 
    We compare \pflam and \pflsu for partial model personalization with stateless devices.
    ``FT (part.)'' corresponds to finetuning the personal parameters $v_i$ locally while fixing the shared parameters $u$ from a non-personalized training. The numbers are averaged over 5 random seeds; the boldfaced numbers denote the highest accuracy in each row.
    }}
\label{table:expt:opt-algo:stateless}
\begin{center}
\begin{adjustbox}{max width=\linewidth}
\small
\renewcommand{\arraystretch}{1.2}
\begin{tabular}{llllllllll}
\toprule
{} & \multicolumn{3}{c}{StackOverflow} & \multicolumn{3}{c}{GLDv2} & \multicolumn{3}{c}{EMNIST} \\
\cmidrule(lr){2-4}
\cmidrule(lr){5-7}
\cmidrule(lr){8-10}
{} &                 FT (part.) &                   FedAlt &          FedSim &        FT (part.) &                   FedAlt &          FedSim &        FT (part.) &                   FedAlt &          FedSim \\
\midrule
Input Layer  &  \tabemph{}$\mathbf{24.96}_{0.01}$ &           $24.84_{0.01}$ &  $24.89_{0.01}$ &  $51.97_{0.02}$ &  \tabemph{}$\mathbf{52.76}_{0.06}$ &  $52.74_{0.02}$ &  $93.29_{0.00}$ &  \tabemph{}$\mathbf{93.51}_{0.03}$ &  $93.48_{0.04}$ \\
Output Layer &           $24.93_{0.01}$ &  \tabemph{}$\mathbf{24.94}_{0.01}$ &  $24.94_{0.01}$ &  $53.21_{0.01}$ &  \tabemph{}$\mathbf{53.30}_{0.06}$ &  $53.30_{0.08}$ &  $93.37_{0.01}$ &  \tabemph{}$\mathbf{93.53}_{0.03}$ &  $93.51_{0.04}$ \\
Adapter      &  \tabemph{}$\mathbf{24.71}_{0.00}$ &           $24.69_{0.01}$ &  $24.71_{0.01}$ &  $63.86_{0.06}$ &  \tabemph{}$\mathbf{64.10}_{0.14}$ &  $63.19_{0.04}$ &  $93.66_{0.00}$ &  \tabemph{}$\mathbf{93.97}_{0.04}$ &  $93.89_{0.02}$ \\
\bottomrule
\end{tabular}
 \end{adjustbox}
\end{center}
\end{table*}

\begin{table*}[t]
    \caption{\small{
    The change in accuracy (percentage points) from the final finetuning for \pflam and \pflsu with stateless devices. 
    The subscript denotes the standard deviation over 5 random seeds.
    }}
\label{table:expt:ft:stateless}
\begin{center}
\begin{adjustbox}{max width=\linewidth}
\small
\renewcommand{\arraystretch}{1.2}
\begin{tabular}{lrrrrrr}
\toprule
{} & \multicolumn{2}{c}{StackOverflow} & \multicolumn{2}{c}{GLDv2} & \multicolumn{2}{c}{EMNIST} \\
\cmidrule(lr){2-3}
\cmidrule(lr){4-5}
\cmidrule(lr){6-7}
{} &         FedAlt &         FedSim &          FedAlt &          FedSim &         FedAlt &         FedSim \\
\midrule
Input Layer  &  $0.86_{0.03}$ &  $1.00_{0.02}$ &   $0.44_{0.03}$ &   $0.42_{0.03}$ &  $0.11_{0.02}$ &  $0.10_{0.04}$ \\
Output Layer &  $1.08_{0.03}$ &  $1.10_{0.02}$ &   $1.47_{0.04}$ &   $1.46_{0.05}$ &  $0.15_{0.02}$ &  $0.11_{0.02}$ \\
Adapter      &  $0.84_{0.04}$ &  $0.88_{0.02}$ &  $11.68_{0.20}$ &  $10.42_{0.09}$ &  $0.46_{0.02}$ &  $0.42_{0.04}$ \\
\bottomrule
\end{tabular}
 \end{adjustbox}
\end{center}
\end{table*}

\end{document}